\newtheorem{theorem}{Theorem}
\newtheorem{lemma}{Lemma}
\newtheorem{definition}{Definition}
\newtheorem{proposition}{Proposition}
\newcommand{\mysubsection}[1]{\medskip\noindent\textbf{#1}}
\newcommand{\relu}{ReLU}
\newcommand{\yes}{\textit{Yes}}
\newcommand{\no}{\textit{No}}
\newcommand{\PComplexity}{PTIME}
\newcommand{\NPComplexity}{NP}
\newcommand{\NPCompleteComplexity}{NP-complete}
\newcommand{\coNPComplexity}{co-NP}
\newcommand{\StoPComplexity}{$\Sigma_{2}^{P}$} 
\newcommand{\StoPCompleteComplexity}{$\Sigma_{2}^{P}$-complete} 
\newcommand{\countPComplexity}{\#P}
\newcommand{\countPCompleteComplexity}{\#P-complete}
\newcommand{\z}{\textbf{z}}
\newcommand{\x}{\textbf{x}}
\newcommand{\X}{\{1,\dots,n\}}
\newcolumntype{P}[1]{>{\centering\arraybackslash}p{#1}}
\begin{document}

\begin{frontmatter}
	
	
	\paperid{123} 
	
	
\title{Hard to Explain: On the Computational Hardness\\ of In-Distribution 
Model Interpretation}

\author[A]{\fnms{Guy}~\snm{Amir}\thanks{Corresponding
 Author. Email: guy.amir2@mail.huji.ac.il.}\footnote{Equal contribution.}}
\author[A]{\fnms{Shahaf}~\snm{Bassan}\footnotemark}
\author[A]{\fnms{Guy}~\snm{Katz}} 

\address[A]{The Hebrew University of Jerusalem, Jerusalem, Israel}

%

	
	\begin{abstract}
            The ability to interpret Machine Learning (ML) models is becoming 
            increasingly essential. However, despite significant progress in 
            the field, there remains a lack of rigorous characterization 
            regarding the innate interpretability of different models.
			In an attempt to bridge this gap, recent work has demonstrated that 
			it is possible to \emph{formally} assess interpretability by 
			studying the computational complexity of explaining the decisions 
			of various models. In this setting, if explanations for a 
			particular model can 
			be
                obtained efficiently, the model is considered interpretable 
                (since it can be explained ``easily''). However, if generating 
                explanations over an ML model is computationally intractable, 
                it is considered uninterpretable. 
                Prior research identified two key 
		factors that influence the complexity of interpreting an ML model: 	
		\begin{inparaenum}[(i)]
			\item the \emph{type} of the model (e.g., neural networks, decision 
			trees, etc.); and
			\item the \emph{form} of explanation (e.g., contrastive 
			explanations, Shapley values, etc.).
		\end{inparaenum}
  In this work, we claim that a third, important factor must also be 
  considered for this analysis 
  --- the 
		\emph{underlying distribution} over which the explanation is obtained. 
                Considering the underlying distribution is key in 
                avoiding explanations that 
		are \emph{socially misaligned}, i.e., convey information that is biased 
		and unhelpful to users. We demonstrate the significant
                influence of the underlying distribution on the resulting overall 
		interpretation complexity, in two settings:
                \begin{inparaenum}[(i)]
                  \item prediction models paired with an \emph{external} 
                  out-of-distribution 
                    (OOD) detector; and
                  \item prediction models designed to \emph{inherently} 
                  generate 
                    socially aligned explanations.
                  \end{inparaenum}
                  Our findings prove that the expressiveness of 
		the distribution can significantly influence the overall complexity of interpretation, and identify  essential 
                  prerequisites 
		that a model must possess to generate socially
                aligned explanations. We regard this work as a
                step towards a rigorous characterization of the complexity
                of generating explanations for ML models, and towards gaining a 
                mathematical understanding of their interpretability.
		
	\end{abstract}
	
\end{frontmatter}%

	\setcounter{secnumdepth}{1}
	

	\section{Introduction}

Ensuring the interpretability of ML models is becoming increasingly vital, as 
it enhances their trustworthiness, particularly when deployed in 
safety-critical systems~\cite{huang2020survey}. However, despite significant 
advancements in the field, there remains a notable lack of mathematical rigor 
in understanding the inherent interpretability of various ML models. For 
instance, many fundamental claims within interpretability, such as ``decision 
trees are more interpretable than neural networks", are often regarded as 
folklore and lack sufficient mathematical rigor.

To bridge this gap, work by Barcelo et al.~\cite{BaMoPeSu20} proposes 
assessing the interpretability of an ML model by examining the computational 
complexity involved in generating various types of explanations for it. The idea is that 
if explanations can be efficiently obtained for an ML model, it can be 
considered interpretable. Conversely, if obtaining explanations is 
computationally intractable, the model is deemed uninterpretable. For example, 
while obtaining certain explanation forms for decision trees can be computed in 
polynomial or even linear time, these same tasks become NP-hard for neural 
networks~\cite{BaMoPeSu20, ignatiev2019abduction, huang2021efficiently}. This provides rigorous mathematical evidence that neural 
networks are indeed less interpretable than decision trees in these contexts.

The computational complexity of obtaining explanations was studied 
in a variety of different settings~\cite{BaMoPeSu20, waldchen2021computational, 
BaAmKa24}, in which the computational complexity is 
typically analyzed along two main axes:
\begin{inparaenum}[(i)] 
	\item the model \emph{type} and 
	\item the explanation \emph{form}. 
\end{inparaenum} 
For example, computing Shapley value 
explanations for decision trees can be obtained in polynomial 
time~\cite{arenas2021tractability, van2022tractability}, while obtaining 
minimum size contrastive explanations for neural networks is 
NP-complete~\cite{BaMoPeSu20}.

\mysubsection{The Distribution Component.} In many explainability methods, 
understanding the rationale behind a specific input prediction often involves 
defining an explanation that satisfies certain properties in inputs similar to 
the one being interpreted. For instance, inputs that are identical to the 
original one in most features, with differences in only a few. This approach 
can be problematic because these new inputs might be out-of-distribution (OOD), 
and may deviate substantially from inputs of interest. Hence, the OOD 
inputs may affect the explanation in unexpected ways, and convey unintuitive 
information to users. Hase et al.~\cite{HaXiBa21} refer to 
explanations that 
disregard the input distribution as \emph{socially misaligned}, i.e., convey 
information that is biased and unhelpful to users.

This general OOD phenomenon in explanations is termed ``the OOD problem of 
explainability"~\cite{HaXiBa21} and is encountered in numerous explanation 
forms, including counterfactual explanations~\cite{poyiadzi2020face}, 
contrastive 
explanations~\cite{yu2022eliminating, GoRu22}, sufficient 
explanations~\cite{HaXiBa21, yu2022eliminating, GoRu22}, and Shapley 
values~\cite{sundararajan2020many}. Therefore, many 
practical explanation techniques aim to mitigate the impact of OOD instances, 
making this a crucial aspect of computing more precise 
explanations~\cite{kim2020interpretation, chang2018explaining, 
yi2020information, HaXiBa21, taufiq2023manifold, zintgraf2017visualizing, 
sundararajan2017axiomatic}.

In this work, we argue that evaluating the computational complexity of 
explaining the decision of a model, should not rely solely on the model type 
and the
explanation form, but also on the \emph{underlying distribution} over which 
the explanation is computed. 
The distribution component is crucial for ensuring that the computed 
explanations are socially aligned and meaningful. In this paper, we
illustrate the impact of this factor on the overall interpretation 
complexity, in various settings and scenarios.

\mysubsection{A Running Example.} Consider the task of classifying low-dimensional 
images as either ``0" or ``1". Due to the simplicity of this task, let us 
assume 
that it can be effectively learned using a simple decision tree classifier. 
Given an image classified as ``0", we can interpret the prediction of the 
decision tree using a local, post-hoc explainability method. For instance, we 
can obtain a \emph{sufficient reason} $S$~\cite{ignatiev2019abduction, 
darwiche2020reasons, BaKa23}: a subset of features (in this case, pixels) that, 
when fixed, ensure the image remains classified as ``0'', regardless of the 
assignment of the additional features $\overline{S}$. Fortunately, since this 
task was learned by a decision tree classifier, obtaining a locally minimal 
sufficient reason can be achieved in polynomial 
time~\cite{huang2021efficiently}.

	
	
	However, despite their appeal, sufficient reasons, similarly to other 
	explanation forms, suffer from the OOD problem of 
	explainability~\cite{HaXiBa21, yu2022eliminating, GoRu22}. In this 
	particular case, the sufficient reason $S$ may take into account 
	\emph{OOD assignments} over $\overline{S}$. In other words, setting the 
	pixels of
	$\overline{S}$ to partial images that are OOD
	(e.g., images featuring unrelated digits, or cats) might
	result in the image being classified as ``1''. This will preclude $S$
	from being a sufficient reason --- even if it is one when taking into 
	account only the context of 
	interest (i.e., all in-distribution images of 
	the digits ``0'' or ``1'').

A common solution for bridging this gap is to train another model
to detect OOD inputs, and then use it to dismantle the effect of any
misleading 
assignment~\cite{kim2020interpretation,chang2018explaining,yi2020information,HaXiBa21,taufiq2023manifold,zintgraf2017visualizing}.
However, the task of OOD detection is considered very
challenging, both in theory~\cite{FaLiLuDoHaLi22,PeYi22} and in
practice~\cite{HsShJiKi20,SeAlGoSlNuLu19,BeXiMaZhLiXuZh20} --- as
modeling the feature distribution is often harder than the original
prediction task~\cite{sundararajan2020many}. Hence, obtaining an OOD
classifier may require training a very expressive model, such as a generative 
model that approximates the domain distribution $p_{\phi}(\x)$. For our 
running example, for instance, learning to distinguish between in-distribution
images (``0'' or ``1'') and OOD images (any other possible image) may be a 
substantially harder task than learning to classify images of ``0'' and ``1''. 
Such a task may require the use of a much more expressive model, such as a deep 
generative neural network. 
The complexity of obtaining a
sufficient reason $S$ that ignores the effect of any OOD
assignment may thus be much greater than that of simply explaining the decision 
tree classifier, without considering the distribution. Revisiting our running 
example, the findings in this study demonstrate that performing this task is 
indeed NP-hard, despite the fact that computing such an explanation without 
distribution alignment can be done in polynomial time.

\mysubsection{Paper Structure}. 
In Sec.~\ref{sec:preliminaries}, we start by covering the relevant background 
for this work. Next, in Sec.~\ref{sec:our-general-framework}, 
we examine a wide variety of explanation forms, such
as sufficiency-based, contrastive-based, and counting-based
explanations, and study how they can be formalized to maintain
social alignment. Specifically, we delve into the common scenario where the 
classification model is coupled with an additional component --- an OOD 
detector. 
This detector plays a crucial role in mitigating the impact of OOD 
counterfactuals in explanations, and can be used to align various explanation 
forms with a distribution of interest.
We proceed to demonstrate that diverse 
explanation forms can be
unified through a single framework, which captures their shared
structure. Given an OOD detector, this
framework can be used to preserve the alignment of each of these explanations; 
as well as to study the computational complexity of
obtaining them. 

In Sec.~\ref{sec:social-alignment-complexity} we prove that for any explanation 
matching our abstract form, the complexity of 
interpreting a model is dominated by the complexity of 
interpreting an OOD detector for the same type of explanation.
Since OOD detection is computationally hard~\cite{FaLiLuDoHaLi22}, the task of 
obtaining an \emph{aligned} interpretation of the model may be substantially more 
complex than the misaligned form.

In Sec.~\ref{sec:self-alignment} we study the specific
case of \emph{self-aligned} explanations. Here, our focus
shifts from relying on an external OOD-detection model to the
possibility of utilizing a single model that derives
aligned explanations. Specifically, we focus on the case of efficiently 
producing a single model that serves both as a classifier and as an OOD 
detector, 
given that each of these is realized separately by the same model class. As we 
prove, this capability correlates to the
degree of expressiveness inherent in various ML models 
--- while
some model types possess the required level of expressiveness, others do not.  
We prove these insights for specific model types and show that, assuming 
P$\neq$NP, both neural networks and decision trees have the capability to 
derive self-aligned explanations, while linear classifiers do not. 

Furthermore, related work is covered in Sec.~\ref{sec:related-work}. We 
conclude in 
Sec.~\ref{sec:conclusion}, and discuss the limitations of our 
theoretical framework, as well as potential future work in 
Sec.~\ref{sec:limitations}.

Due to space limitations, we provide only concise overviews of the
proofs of our various claims, and refer the reader to the
appendix for the comprehensive and more detailed proofs.

\section{Preliminaries}
\label{sec:preliminaries}

\subsection{Domain}



We assume a set of $n$ features $\x=(x_1,\ldots,x_n)$,
where the domain of each feature is $x_i\in \{0,1\}$. The entire feature space 
is denoted as $\mathbb{F}=\{0,1\}^n$.
We seek to \emph{locally} interpret the prediction of a binary classifier
$f:\mathbb{F}\to \{0,1\}$, i.e., given an input $\x\in 
\mathbb{F}$, to
explain the prediction $f(\x)$ of the classifier over this specific input. We 
follow common practice in the field, and concentrate on Boolean 
input and output values, to make the presentation 
clearer~\cite{ArBaBaPeSu21,waldchen2021computational,BaMoPeSu20}. 
However, many of our findings are also applicable to scenarios involving 
real-valued data.


\subsection{Complexity Classes and Second-Order Logic (SOL)}
The paper assumes basic familiarity with the common complexity classes of 
polynomial time (PTIME) and nondeterministic polynomial time (\NPComplexity, 
\coNPComplexity).
The
second order of polynomial hierarchy, i.e., \StoPComplexity, which is
briefly mentioned in the paper, is the set of problems that become members of
\NPComplexity{} given an oracle that solves  \coNPComplexity{}
problems in $O(1)$. We also discuss the class \countPComplexity, which 
corresponds to the total number of accepting paths of a polynomial-time 
nondeterministic Turing machine. It is widely believed that \PComplexity 
$\subsetneq$ \NPComplexity $\subsetneq$ \StoPComplexity $\subsetneq$ 
\countPComplexity~\cite{arora2009computational}. 
We use the common convention $L_{1}\leq_{p}L_{2}$ to denote a polynomial-time 
reduction from language $L_{1}$ to $L_{2}$, and $L_{1}=_{p}L_{2}$  to indicate 
that such a reduction exists in both directions.

The paper also makes use of \emph{second-order logic} (SOL) formulas --- a 
generalization of the first-order predicate logic. In both logic forms, 
existential or universal
quantifiers are applied to each variable or subset thereof, so that the formula
evaluates to either true or false. 
However, we chose SOL formulas for our abstraction due to
their high expressivity (in contrast to first-order logic queries
suggested in~\cite{ArBaBaPeSu21}), as they can also encode an 
explanation size, which is infeasible with FOL.
As such, SOL-based queries are rigorous enough to 
enable the
formulation of general proofs that hold for any explanation within
this framework.  
For each SOL formula $Q$, we  define $\#Q$ 
as the corresponding counting problem over that formula --- which counts the 
\emph{number} of satisfying assignments for $Q$. Given a finite number of 
inputs, implying a finite logic-based model, each
SOL formula is associated with a specific complexity class within the
polynomial hierarchy, and with a corresponding counting class.

\subsection{Explainability Queries}

We follow prior work~\cite{BaMoPeSu20, BaAmKa24} and define an
\emph{explainability query}, denoted $Q$, which represents some form of 
interpretation. $Q$ takes both $f$ and $\x$ as inputs, and it outputs 
information regarding the
interpretation of $f(\x)$. In line with previous
work~\cite{marques2020explaining,waldchen2021computational,ArBaBaPeSu21,arenas2022computing,
 BaAmKa24},
our emphasis is on explainability queries that output an answer to a 
decision problem --- providing a definite yes/no answer or, in the 
case of $Q$ being a counting problem, a numerical value. For example, $Q$ can 
provide a yes/no answer to the question \emph{is a specific subset of features 
a sufficient reason?} It can also \emph{count} the number of possible 
assignments in which the prediction is altered, or maintained.

\subsection{Models}
The techniques presented in this work are applicable to a diverse set
of model classes. Still, we focus our attention on a few
popular models, located at the extremities of the interpretability spectrum: 
decision trees, linear classifiers, and neural networks. Specifically, we 
address 
Free Binary Decision Diagrams (FBDDs), which serve as an extension of decision 
trees, along with Perceptrons and Multi-Layer Perceptrons (MLPs) employing ReLU 
activations. An exact formalization of these models appears in the appendix.

\section{Socially Aligned Explainability Queries}
\label{sec:our-general-framework}

\subsection{Context Indicator}

To cope with the undesired effects of OOD input assignments, 
we consider some \emph{context} $\mathbf{C}\subseteq \mathbb{F}$ over
which an explanation is to be provided. Intuitively, context
$\mathbf{C}$ denotes the entire potential set of in-distribution 
inputs that we take into consideration when providing an explanation,
while disregarding the effect of any OOD assignment from $\mathbb{F}\setminus 
\mathbf{C}$.
Because describing  the context $\mathbf{C}$ explicitly is clearly non-trivial, 
in our framework, we instead assume the existence of a \emph{context indicator} 
$\pi:\mathbb{F}\to \{0,1\}$: a binary classifier over a specific context 
$\mathbf{C}$, i.e., $\pi(\x)= \mathbf{1}_{\{\x\in \mathbf{C}\}}$. 

Naturally, assuming the existence of a context
indicator $\pi$ that perfectly captures the desired context \textbf{C} is 
non-trivial as well. For instance, in our running example, this requires $\pi$ 
to identify any possible image of either ``0'' or ``1''. Nevertheless, 
practical tools were shown to be able to approximate such domains, for example, 
by using generative-model-based OOD classifiers, trained to learn the data 
distribution $p_{\phi}(\x)$~\cite{CuCiNaXiLiXiNi22,LiHuLaRu21}. In these 
particular scenarios, the indicated $\textbf{C}$ can be seen as a mere 
approximation of the true, intended context. 




\subsection{Socially Aligning Explainability Queries}
\label{queries}

Model interpretability is subjective, and this has led to the design
of multiple forms of explanations in recent years. We focus here on
a few widely used explanation forms, and analyze them rigorously.


\mysubsection{Sufficiency-Based
  Explanations.} A common
definition of an explanation for a model $f$'s decision with respect to an 
input 
$\x$ is that of a \emph{sufficient reason}~\cite{ignatiev2019abduction, 
darwiche2020reasons, BaKa23}. A sufficient reason is a subset of features 
$S\subseteq \X$ such that, when fixed to the corresponding values in $\x$, 
determine that the prediction remains $f(\x)$, regardless of the other 
features' assignments~\cite{BaMoPeSu20,marques2020explaining}. 
This notation is used quite often, and aligns with commonly used 
explainability techniques~\cite{ribeiro2018anchors}.
Formally put:
\begin{equation}
	\label{explanation_definition}
	\forall(\z\in \mathbb{F}).\quad [f(\x_{S};\z_{\Bar{S}})= f(\x)]
\end{equation}
where $(\x_{S};\z_{\Bar{S}})$ denotes an assignment in which the values of $S$ 
are taken from $\x$ and, the remaining values (i.e., from $\overline{S}$), are taken 
from $\z$. 

		
		

Given a context $\mathbf{C}$, indicated by  $\pi$, a socially aligned 
sufficient reason is defined as follows~\cite{yu2022eliminating,GoRu22}:
\begin{equation}
	\label{explanation_distribution_csr_definition_2}
	\forall(\z\in \mathbb{F}).\quad [\pi(\x_{S};\z_{\Bar{S}})=1\to 
	f(\x_{S};\z_{\Bar{S}})=f(\x)]
\end{equation}



		
		



A widely observed convention in the literature is that smaller
sufficient reasons (relative to the size of $|S|$) are more meaningful than 
larger
ones~\cite{ignatiev2019abduction,BaMoPeSu20,halpern2005causes}. Consequently,
it is interesting to consider  \emph{cardinally minimal sufficient reasons}. 
Clearly, these can also be obtained with respect to $\pi$. This leads us to our 
first explainability query:


\vspace{0.5em} 

\noindent\fbox{%
	\parbox{\columnwidth}{%
		\mysubsection{MSR (Minimum Sufficient Reason)}:
		
		\textbf{Input}: Model $f$, input $\x$, context indicator $\pi$, and 
		integer $k$.
		
		\textbf{Output}: \yes{}, if there exists a sufficient reason $S$ for 
		$f(\x)$ with respect to $\pi$ such that $|S| \leq k$, and \no{} 
		otherwise.
	}%
}

\vspace{0.5em} 


We note that we can consider the case of socially misaligned queries as a 
trivial case of this definition, in which the context indicator is the constant 
function  
$\pi\coloneqq\mathbf{1}$,
indicating the entire input space as in-distribution.


\medskip
\noindent
\textbf{Contrastive/Counterfactual-Based 
Explanations.} A different
approach to interpreting a model is by observing subsets of features that,
when altered, may cause the classification of the model to
change~\cite{ignatiev2019abduction,BaMoPeSu20}. These are referred to as \emph{contrastive
	explanations} or \emph{contrastive reasons}, and the corresponding values 
	are referred to as
\emph{counterfactual explanations}. We define a subset $S\subseteq \X$
as contrastive if altering its values may cause the original classification 
$f(\x)$ to change:
\begin{equation}
	\exists \z\in \mathbb{F}.\quad [f(\x_{\Bar{S}};\z_{S})\neq f(\x)]
\end{equation}

To avoid counterfactual OOD assignments, a contrastive subset $S$ can be 
obtained with 
respect to a context indicator $\pi$~\cite{yu2022eliminating}, by encoding:
\begin{equation}
	\exists \z\in \mathbb{F}.\quad [\pi(\x_{\Bar{S}};\z_{S})=1\wedge 
	f(\x_{\Bar{S}};\z_{S})\neq f(\x)]
\end{equation}

Similarly to sufficient reasons, smaller contrastive reasons tend to
be more meaningful. Here, too, it is usually more informative to focus
on cardinally minimal contrastive reasons, as expressed in the following 
explainability query: 

\vspace{0.5em} 

\noindent\fbox{%
	\parbox{\columnwidth}{%
		\mysubsection{MCR (Minimum Change Required)}:
		
		\textbf{Input}: Model $f$, input $\x$, context indicator $\pi$, and 
		integer $k$.
		
		\textbf{Output}: \yes{}, if there exists some contrastive reason $S$ 
		such that $|S| \leq k$ for $f(\x)$ with respect to $\pi$, and \no{} 
		otherwise.
	}%
}

\vspace{0.5em} 

\mysubsection{Counting-Based 
Explanations.}
 Finally, another common explainability 
form is based on exploring the number of assignment completions for maintaining 
(or altering) a specific classification~\cite{izza2021efficient,darwiche2002knowledge,waldchen2021computational}. 
As with previous explanation forms, we redefine the problem
to avoid counting OOD completions, which may cause the social misalignment of 
the corresponding interpretation. In order to do so, we define the completion 
count $c$ of $S$ with respect to $\pi$ as:
\begin{equation}
	c(S):= |\{\z\in \{0,1\}^{|\overline{S}|}, \pi(\x_{\Bar{S}};\z_{S})=1, 
	f(\x_{\Bar{S}};\z_{S}) \neq f(\x)\}|
\end{equation}

\vspace{0.5em} 

\noindent\fbox{%
	\parbox{\columnwidth}{%
		\mysubsection{CC (Count Completions)}:
		
		\textbf{Input}: Model $f$, input $x$, context indicator $\pi$, and 
		subset of features $S$.
		
		\textbf{Output}: The completion count $c(S)$ of $f(x)$ with respect to 
		$\pi$.
	}%
}

\vspace{0.5em} 

The widely used Shapley values~\cite{sundararajan2020many}, which serve as a 
common form of explanation~\cite{lundberg2017unified, 
sundararajan2017axiomatic}, can also be characterized as a type of counting 
problem~\cite{van2022tractability, arenas2021tractability}.

 

\subsection{Abstract Query Form}
\label{general_query_form_section}

Many of the explanation forms studied in the literature, including
the aforementioned ones, become more meaningful when the effect of  
OOD counterfactuals are reduced. For analyzing how distributions affect the complexity of obtaining explanations not only for one specific explanation, but for a wide array of explanation forms,
we proceed to define \emph{abstract} explainability queries. We then provide 
general results regarding the computational complexity 
of obtaining this abstract form of explanation.


The task of obtaining each of the explanation types discussed so far
can be achieved by invoking a 
decision procedure for determining whether or not
$f(\x_{\Bar{S}};\z_{S}) = f(\x)$ (or for solving the corresponding counting 
problem).
These decision procedures receive a partial assignment $(\x_{\Bar{S}};\z_{S})$
of a given input $\x$, which fixes some features of $\x$ while allowing
 the rest to change according to an arbitrary $\z$; and their goal is
to determine whether these assignments preserve, or alter, the
classification outcome.

The task of deciding whether or not $f(\x_{\Bar{S}};\z_{S}) = f(\x)$
can, in turn, be formulated as an SOL formula, $\textit{SOL}_{\neg f}$, which
encodes that $f(\x_{\Bar{S}};\z_{S}) \neq f(\x)$ (or, again, the counting 
problem over that formula). If
$\textit{SOL}_{\neg f}$ is false, then the answer to the original problem is
affirmative; and otherwise, it is negative.

The relevant formula is fully quantified, in a manner that represents a 
specific explanation form. 
For example, contrastive reason queries check whether there \emph{exists} 
any assignment leading to a misclassification, whereas sufficient reason 
queries ask whether the classification stays constant for \emph{all} possible 
completions.
The goal is to eventually determine whether the formula is true or not, 
and equivalently --- whether the explanation is correct.

In the appendix, we show how each of the predefined 
explainability queries can be formalized using this notion, in which \emph{MSR} 
(Minimum Sufficient Reason)  
and \emph{MCR} (Minimum Change Required) are possible solutions to an 
underlying satisfiability query 
over $\textit{SOL}_{\neg f}$, and \emph{CC} (Count Completions) is the counting 
solution of 
$\#\textit{SOL}_{\neg f}$. This can also be extended to additional explanation 
forms.

\begin{definition}
  Let $\textit{SOL}_{\neg f}$ be an SOL formula encoding the query
  $f(\x_{\Bar{S}};\z_{S})\neq f(\x)$. We define an \emph{abstract
    query}, $\mathbf{Q}$, that receives $f$ and $\x$ as inputs, and
   answers whether $\textit{SOL}_{\neg f}$ is true.
  For the counting case, $\mathbf{Q}$ returns the counting of 
  $\#\textit{SOL}_{\neg f}$. 
\end{definition}

Next, we adjust this abstract query form to provide only socially
aligned explanations. This is performed by incorporating into the
formula the additional constraint $\pi(\x_{\Bar{S}};\z_{S})=1$, which
guarantees that any explanation that satisfies the query is also
in-distribution.
\begin{definition}
	Let $\textit{SOL}_{\neg f, \pi}$ be an SOL formula encoding the query 
	$f(\x_{\Bar{S}};\z_{S})\neq f(\x) \wedge 
	\pi(\x_{\Bar{S}};\z_{S})=1$. The respective aligned query, $\mathbf{Q}$, 
	receives as inputs $f$, 
	$\x$, and $\pi$, and answers whether $\textit{SOL}_{\neg f, \pi}$ is true.
	For the counting case, $\mathbf{Q}$ returns the counting of 
	$\#\textit{SOL}_{\neg f, 
	\pi}$. 
\end{definition}

Any of the aligned query forms mentioned in the previous section can be 
described as an abstract notion of this query as we show in the appendix. 
In essence, this abstract form captures various (logically expressible) 
explanation formulations, over which we can dismantle the effect of OOD 
counterfactuals.

By using this single, broader form of $\mathbf{Q}$, we are able
to prove general properties regarding socially aligned explainability queries, 
and deduce the complexity of interpreting these queries in various settings.

\section{The Complexity of Obtaining Socially Aligned Explanations}
\label{sec:social-alignment-complexity} 
\newcommand{\mcr}{\text{MCR}\xspace{}}
\newcommand{\mlp}{\text{MLP}\xspace{}}
\newcommand{\dt}{\text{DT}\xspace{}}
\newcommand{\fbdd}{\text{FBDD}\xspace{}}
\newcommand{\perceptron}{\text{Perceptron}\xspace{}}

\subsection{A General Framework}
To evaluate the computational complexity of interpreting a specific class of
models, denoted as $\mathcal{C}_\mathcal{M}$, it is useful to define
$Q(\mathcal{C}_\mathcal{M})$ as the computational problem represented by 
interpreting a set of
models within the class
$\mathcal{C}_\mathcal{M}$ with respect to an explainability query 
$Q$~\cite{BaMoPeSu20, BaAmKa24}. 
To illustrate, let us consider the
class of multi-layer perceptrons denoted as $\mathcal{C}_{\mlp}$. 
 MSR$(\mathcal{C}_{\mlp})$
is then the computational problem of
obtaining cardinally minimal sufficient reasons for an MLP, given an
input $\x$.

While this formalization is helpful for assessing the
interpretability of a specific model type, it does not consider the
underlying context and thus, it may produce socially misaligned explanations.

We revisit our running example, where our model $f$ represents a
decision tree. We further assume that the decision of whether $\x \in
\textbf{C}$ (or equivalently, whether $\x$ is in-distribution) is
learned by another model, e.g., a deep neural network. In this
scenario, the context indicator $\pi$ belongs to a different class
than $f$ (which is in $\mathcal{C}_\mathcal{M}$).
In such a case, we should pose a different type of question that
assesses the computational complexity of providing a socially aligned
explanation for an instance classified by $f$. Specifically, we need
to determine the computational complexity of interpreting a model
$f\in C_M$ while ensuring its alignment with a context indicator
function $\pi\in \mathcal{C}_{\pi}$. As mentioned earlier,  in many instances
(including our example), $\pi$ corresponds to a more expressive
function than $f$, potentially dominating the
  overall complexity. Therefore, we introduce the following notion that enables 
  us to assess the computational complexity of models in 
  $\mathcal{C}_{\mathcal{M}}$ with respect to a class of context indicators 
  $\mathcal{C}_{\pi}$.


\begin{definition}
	Given an explainability query $Q$, a class of prediction models 
	$\mathcal{C}_{\mathcal{M}}$, and a class of context indicators 
	$\mathcal{C}_{\pi}$, we define 
	\textbf{$Q(\mathcal{C}_{\mathcal{M}},\mathcal{C}_{\pi})$} as the 
	computational problem of $Q$ defined by the set of functions within 
	$\mathcal{C}_{\mathcal{M}}$, with respect to the contexts induced by the 
	functions of $\mathcal{C}_{\pi}$.
\end{definition}

For our running example, $Q(\mathcal{C}_{\dt},\mathcal{C}_{\mlp})$  denotes 
the computational complexity of some explainability query $Q$, given that our 
classification model is a decision tree and the OOD detector is a multi-layer 
perceptron. We note that, similarly to the previously studied evaluation of 
$Q(\mathcal{C}_{\mathcal{M}})$~\cite{BaMoPeSu20}, the formalization of 
$Q(\mathcal{C}_{\mathcal{M}},\mathcal{C}_{\pi})$ considers a ``worst-case'' 
scenario of the corresponding alignment, and not any parameter-specific 
configuration. 
This is captured by assessing the corresponding complexity with 
respect to a class of prediction models and a class of 
distribution 
indicators.




\subsection{The Complexity of $Q(\mathcal{C}_{\mathcal{M}},\mathcal{C}_{\pi})$}

We prove a connection between the complexity of calculating an aligned 
explanation $Q(\mathcal{C}_{\mathcal{M}},\mathcal{C}_{\pi})$, to the complexity 
of obtaining misaligned explanations of either $Q(\mathcal{C}_{\mathcal{M}})$ 
or $Q(\mathcal{C}_{\pi})$. This relation holds in a broad sense, as we prove it 
for our abstract query form $\mathbf{Q}$, defined in 
Sec.~\ref{general_query_form_section}. 



First, clearly, if $\mathbf{1}\in \mathcal{C}_{\pi}$ ($\mathbf{1}$ is a trivial 
function that accepts any possible input as in-context), then 
$\mathbf{Q}(\mathcal{C}_{\mathcal{M}},\mathcal{C}_{\pi})$ is polynomially 
reducible from $\mathbf{Q}(\mathcal{C}_{\mathcal{M}})$.
We note that $\mathbf{1}\in \mathcal{C}_{\pi}$ is a trivial request for any 
expressive class of context indicators, for example, assuming the existence of 
a neural network that always outputs $1$. 

\begin{theorem}
	\label{generalized_theorem_1}
	If $\mathbf{1}\in \mathcal{C}_{\pi}$ then 
	$\mathbf{Q}(\mathcal{C}_{\mathcal{M}})\leq_{p} 
	\mathbf{Q}(\mathcal{C}_{\mathcal{M}},\mathcal{C}_{\pi})$. 
\end{theorem}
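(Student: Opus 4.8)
The plan is to exhibit a direct, answer-preserving polynomial-time reduction that simply instantiates the context indicator to the trivial function $\mathbf{1}$. The guiding observation is that, by the definition of $\textit{SOL}_{\neg f, \pi}$, the aligned formula differs from the misaligned $\textit{SOL}_{\neg f}$ only by the extra conjunct $\pi(\x_{\Bar{S}};\z_{S})=1$; once we substitute $\pi\coloneqq\mathbf{1}$, this conjunct becomes the tautology $1=1$ and can be discarded. Hence $\textit{SOL}_{\neg f, \mathbf{1}}$ is logically equivalent to $\textit{SOL}_{\neg f}$: an assignment $(\x_{\Bar{S}};\z_{S})$ satisfies the former if and only if it satisfies the latter.

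Concretely, given an instance of $\mathbf{Q}(\mathcal{C}_{\mathcal{M}})$ --- that is, a model $f\in\mathcal{C}_{\mathcal{M}}$, an input $\x$, and whatever auxiliary data the particular query carries (an integer $k$ for the minimization queries, or a subset $S$ for the counting query) --- the reduction outputs the identical data together with the context indicator $\pi\coloneqq\mathbf{1}$. This is a legitimate instance of $\mathbf{Q}(\mathcal{C}_{\mathcal{M}},\mathcal{C}_{\pi})$ precisely because the hypothesis $\mathbf{1}\in\mathcal{C}_{\pi}$ guarantees that $\mathbf{1}$ is an admissible context indicator, and since its description has constant size, the map is computable in polynomial (indeed, linear) time.

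Correctness then follows from the equivalence above. For the decision variants, $\textit{SOL}_{\neg f, \mathbf{1}}$ is true exactly when $\textit{SOL}_{\neg f}$ is true, so the \yes{}/\no{} answer is preserved regardless of the quantifier structure encoding the specific explanation form (for instance, the existential pattern of a contrastive query or the universal pattern of a sufficiency query). For the counting variant, the sets of satisfying assignments of $\#\textit{SOL}_{\neg f, \mathbf{1}}$ and $\#\textit{SOL}_{\neg f}$ coincide, so the two counts are equal. In all cases the output of $\mathbf{Q}(\mathcal{C}_{\mathcal{M}},\mathcal{C}_{\pi})$ on the produced instance equals the output of $\mathbf{Q}(\mathcal{C}_{\mathcal{M}})$ on the original one, which establishes the reduction.

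I do not expect a genuine mathematical obstacle here; the substance is merely that adjoining an always-true conjunct alters neither satisfiability nor model count. The only points requiring care are bookkeeping rather than argument: verifying that $\mathbf{1}$ is representable within $\mathcal{C}_{\pi}$ by a description producible in polynomial time (which is exactly what the assumption $\mathbf{1}\in\mathcal{C}_{\pi}$ supplies), and confirming that the equivalence is insensitive to the particular quantifier prefix of $\textit{SOL}_{\neg f}$, so that a single argument uniformly covers \emph{MSR}, \emph{MCR}, \emph{CC}, and any further query expressible in the abstract form $\mathbf{Q}$.
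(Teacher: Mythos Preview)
Your proposal is correct and matches the paper's own proof essentially verbatim: the paper's argument is simply to map $\langle f,\x,I\rangle$ to $\langle f,\x,\mathbf{1},I\rangle$ and observe the equivalence. You have merely spelled out in more detail why the conjunct $\pi(\x_{\Bar{S}};\z_{S})=1$ becomes vacuous, which the paper leaves implicit.
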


This result is, of course, not surprising and a more interesting connection to 
explore is the less straightforward relation between 
$\mathbf{Q}(\mathcal{C}_{\mathcal{M}},\mathcal{C}_{\pi})$ and 
$\mathbf{Q}(\mathcal{C}_{\pi})$. 
We show that a similar result to the former can be obtained in this
case as well, provided that  $\mathcal{C}_{\pi}$ is 
\emph{symmetrically constructible} (given some $f\in \mathcal{C}_{\pi}$, we can 
construct in polynomial time $\neg f\in \mathcal{C}_{\pi}$); and that 
$\mathcal{C}_M$ is 
\emph{naively constructible} (given some $\x\in \mathbb{F}$, it holds that 
we can construct in polynomial time $\mathbf{1}_{\{\x\}}\in 
\mathcal{C}_{\mathcal{M}}$). A full formalization of these conditions is 
provided in the appendix. Later in this section, we also demonstrate that these 
constructions also hold for popular function classes, and provide 
model-specific instantiations of our framework. 
%

\begin{theorem}
	\label{generalized_theorem_2}
	If $\mathcal{C}_{\mathcal{M}}$ is symmetrically constructible and 
	$\mathcal{C}_{\pi}$ is naively constructible, then 
	$\mathbf{Q}(\mathcal{C}_{\pi})\leq_{p}\mathbf{Q}(\mathcal{C}_{\mathcal{M}},\mathcal{C}_{\pi})$.
	
\end{theorem}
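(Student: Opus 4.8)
The plan is to build a polynomial-time reduction that maps an instance of $\mathbf{Q}(\mathcal{C}_{\pi})$ --- a context indicator $g\in\mathcal{C}_{\pi}$ playing the role of the prediction model, together with an input $\x$ --- to an instance $(f',\x',\pi')$ of $\mathbf{Q}(\mathcal{C}_{\mathcal{M}},\mathcal{C}_{\pi})$ with the same answer (or count). Since $g$ already lives in $\mathcal{C}_{\pi}$, the only slot through which its behaviour can be transported into the target is the context slot, so I would set $\pi':=g$ and keep $\x':=\x$. The target inner predicate then reads $f'(\x_{\overline{S}};\z_S)\neq f'(\x)\ \wedge\ g(\x_{\overline{S}};\z_S)=1$, and the goal is to choose $f'\in\mathcal{C}_{\mathcal{M}}$ so that, for every $S$ and every completion $\z$, this conjunction is logically equivalent to the source predicate $g(\x_{\overline{S}};\z_S)\neq g(\x)$. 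Because the quantifier prefix encoding the specific explanation form is inherited verbatim from $\textit{SOL}_{\neg g}$ into $\textit{SOL}_{\neg f',\pi'}$, establishing this pointwise equivalence of the inner predicates is enough to transfer both the decision and the counting variants of $\mathbf{Q}$.

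The next step is to engineer the prediction-side gadget $f'$ so that the factor $f'(\x_{\overline{S}};\z_S)\neq f'(\x)$ acts as an almost-constant-true guard. A point-indicator gadget is the natural candidate: with $f'=\mathbf{1}_{\{\x\}}$ one has $f'(\x)=1$, so $f'(\x_{\overline{S}};\z_S)\neq f'(\x)$ holds for every completion except the degenerate one $\z_S=\x_S$, i.e. the point $(\x_{\overline{S}};\x_S)=\x$ itself. Here I would invoke the symmetric constructibility of $\mathcal{C}_{\mathcal{M}}$ to obtain, in polynomial time, the complementary gadget $\neg f'\in\mathcal{C}_{\mathcal{M}}$, which lets me switch the guard's polarity (true everywhere-but-$\x$ versus true only-at-$\x$) as the equivalence demands. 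In parallel, the naive constructibility of $\mathcal{C}_{\pi}$ supplies the single-point contexts needed to neutralise exactly the degenerate completion $\z_S=\x_S$, since that is the lone completion on which the guard and the source predicate could otherwise disagree.

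The verification then splits on the value $g(\x)$, and this split is where I expect the real difficulty. When $g(\x)=0$ the source predicate is $g(\x_{\overline{S}};\z_S)=1$, which is precisely the context factor $\pi'=g$; the guard contributes nothing except at $\z_S=\x_S$, where both sides evaluate to $g(\x)=0$, so the conjunction matches the source exactly. The delicate case --- and the main obstacle --- is $g(\x)=1$, where the source predicate becomes $g(\x_{\overline{S}};\z_S)=0$, the \emph{opposite} polarity of what the context factor $g(\cdot)=1$ provides: a conjunction with $g(\cdot)=1$ can only carve out a subset of $\{g(\cdot)=1\}$ and can never recover the disjoint set $\{g(\cdot)=0\}$, so the equivalence cannot be salvaged by the prediction-side guard in isolation. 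My plan is therefore to prepend a normalization that reduces an arbitrary instance to the canonical case $g(\x)=0$, effecting the required polarity flip using the complementary gadget from the symmetric constructibility of $\mathcal{C}_{\mathcal{M}}$ together with the single-point contexts from the naive constructibility of $\mathcal{C}_{\pi}$. Showing that this normalization can be carried out without directly negating $g$, and that it preserves the quantifier prefix (hence the answer and the count for every explanation form captured by $\mathbf{Q}$), is the step that the argument hinges on and that I would spend the most care on.
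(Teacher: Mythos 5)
Your reduction skeleton is exactly the paper's: keep $\x$, put the source model $g$ into the context slot, and use the point gadget $\mathbf{1}_{\{\x\}}$ as the prediction model so that the guard $f'(\x_{\overline{S}};\z_S)\neq f'(\x)$ holds everywhere except the degenerate completion, where both predicates are false anyway. Your treatment of the $g(\x)=0$ case is correct and coincides with the paper's (there, $\pi':=g$ suffices, since the context factor $g(\cdot)=1$ \emph{is} the source predicate $g(\cdot)\neq 0$). But the proof is not closed: you correctly isolate the $g(\x)=1$ case as the crux and then defer it to a hoped-for ``normalization that reduces an arbitrary instance to the canonical case $g(\x)=0$ \ldots{} without directly negating $g$.'' No such normalization exists with the tools you name. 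Negating the point gadget is vacuous --- $\mathbf{1}_{\{\x\}}$ and $\neg\mathbf{1}_{\{\x\}}$ induce the \emph{same} guard set $\{\mathbf{y}:\mathbf{y}\neq\x\}$ in the clause $f'(\cdot)\neq f'(\x)$, so flipping $f'$'s polarity changes nothing. And substituting a single-point context for $\pi'$ erases all dependence on $g$ from the instance, leaving only the $f'$ slot to carry $g$'s behaviour; since $g\in\mathcal{C}_{\pi}$ need not embed in $\mathcal{C}_{\mathcal{M}}$ (e.g., $\mathcal{C}_{\mathcal{M}}$ Perceptrons, $\mathcal{C}_{\pi}$ MLPs), that route is closed. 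As you yourself observe, conjoining with $g(\cdot)=1$ can never recover $\{g(\cdot)=0\}$, so within your framework the $g(\x)=1$ case is genuinely stuck.

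The missing idea is simply to negate $g$ in the \emph{context} class: when $g(\x)=1$, the paper sets $\pi':=\neg g\in\mathcal{C}_{\pi}$, so the context factor $\neg g(\cdot)=1$ becomes exactly the source predicate $g(\cdot)\neq 1=g(\x)$, and the point-gadget guard again only excises the degenerate completion. This exposes a second issue in your write-up: the resources are attached to the wrong classes. The printed theorem statement swaps the two conditions (likely a typo), but the paper's proof sketch and appendix proof use \emph{naive} constructibility of $\mathcal{C}_{\mathcal{M}}$ (to place $\mathbf{1}_{\{\x\}}$ in the prediction slot --- which you also need, though the literal hypotheses you follow do not grant it) and \emph{symmetric} constructibility of $\mathcal{C}_{\pi}$ (to negate $g$). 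Your loyalty to the statement's literal hypotheses is what forces the doomed ``negation-free'' detour; with the conditions assigned as in the paper's prose, the $g(\x)=1$ case closes in one line and the whole argument, decision and counting alike, goes through by your own observation that the quantifier prefix is inherited verbatim once the inner predicates agree pointwise.
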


Theorem~\ref{generalized_theorem_2} indicates that, given basic assumptions 
regarding the expressivity of $\mathcal{C}_{\mathcal{M}}$ and 
$\mathcal{C}_{\pi}$, it holds that the complexity of evaluating 
$\mathbf{Q}(\mathcal{C}_{\mathcal{M}},\mathcal{C}_{\pi})$, i.e., interpreting  
a model from $\mathcal{C}_{\mathcal{M}}$ with respect to a model from 
$\mathcal{C}_{\pi}$, 
for some explainability query $\mathbf{Q}$, is \emph{at least as hard} as 
interpreting $\mathbf{Q}(\mathcal{C}_{\pi})$, i.e., interpreting the OOD 
detector $\pi$. 
This is significant --- as in many cases $\mathcal{C}_{\pi}$, the class 
associated with the input distribution, is much more expressive than $C_M$, the 
class associated with the prediction model, and hence may be much harder to 
interpret. 

\medskip\noindent
\emph{Proof sketch.} The reduction exploits the naive constructibility
of $\mathcal{C}_{\mathcal{M}}$,
with the aim of
rendering obsolete the conjunct responsible for validating whether a subset is
contrastive. The reduction takes
advantage of the fact that $\pi\in \mathcal{C}_{\pi}$ is symmetrically
constructible in order to transform $\pi$ to validate the \emph{model}
instead of the indicated context. By employing this approach, it
becomes feasible to polynomially reduce any SOL formula representing
$\mathbf{Q}(\mathcal{C}_{\pi})$ to an equivalent SOL formula under the
formulation of
$\mathbf{Q}(\mathcal{C}_{\mathcal{M}},\mathcal{C}_{\pi})$. Consequently,
any decision or counting solution for the original SOL formula will be
tantamount to solving an equivalent SOL formula corresponding to a
query seeking socially aligned explanations.





\subsection{Model-Specific Framework Instantiations}
Next, we present specific results when focusing on 
FBDDs, Perceptrons, and MLPs. It is straightforward to show that these classes 
of models match our theoretical framework, as the following holds (and proven 
in our appendix):



\begin{proposition}
\label{general_model_form}
FBDDs, Perceptrons, and MLPs are all symmetrically constructible and naively 
constructible.
\end{proposition}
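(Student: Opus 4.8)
The plan is to verify, for each of the three model classes, the two conditions directly: symmetric constructibility (from $f$ build $\neg f$ in the same class in polynomial time) and naive constructibility (from $\x$ build $\mathbf{1}_{\{\x\}}$ in the same class in polynomial time). I would treat the classes in order of increasing expressive power, since the constructions for perceptrons can then be reused almost verbatim for MLPs.

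For FBDDs, symmetric constructibility is immediate: swapping the two leaf labels $0\leftrightarrow 1$ of a given diagram yields a diagram computing $\neg f$, this preserves the read-once (free) property, and it is done in time linear in the diagram size. For naive constructibility I would build, for a target $\x=(x_1,\dots,x_n)$, a single path that tests $x_1,\dots,x_n$ in order, routing the branch consistent with $\x$ forward and every inconsistent branch to the $0$-leaf, with the terminal consistent branch reaching the $1$-leaf. Each variable is tested once, so the result is a legal FBDD of size $O(n)$.

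For perceptrons $f(\x)=\mathbf{1}_{\{w\cdot\x+b\ge 0\}}$, symmetric constructibility follows by negating the parameters: over the Boolean domain $w\cdot\x+b$ takes finitely many values, so (with integer parameters, or after clearing denominators) $w\cdot\x+b<0$ is equivalent to $(-w)\cdot\x+(-b-1)\ge 0$, giving a perceptron for $\neg f$. For naive constructibility I would set $w_i=+1$ when $x_i=1$ and $w_i=-1$ when $x_i=0$, so that $w\cdot\z = p - d(\z,\x)$, where $p$ is the number of ones in $\x$ and $d$ denotes Hamming distance; choosing bias $-p$ makes the perceptron fire exactly when $d(\z,\x)=0$, i.e.\ on $\x$ alone. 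All parameters are computed in polynomial time.

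For MLPs with ReLU activations, both conditions reduce to the perceptron case. The network's decision is obtained by thresholding its final linear layer, so $\neg f$ is produced by applying the perceptron negation above to that last layer alone, leaving every preceding layer untouched; this is polynomial. For naive constructibility I would realize the perceptron $\mathbf{1}_{\{\x\}}$ from the previous paragraph as an MLP: since the Boolean inputs are non-negative, a single ReLU layer acts as the identity on them, so the perceptron embeds with one (or zero) hidden layers without changing its output. The only point requiring care throughout is the boundary convention of the threshold ($\ge$ versus $>$): the arguments rely on the finiteness of the Boolean domain, which guarantees a strictly positive gap between distinct activation values and lets me shift the bias to switch between strict and non-strict comparisons. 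I expect this bias-shifting bookkeeping, together with pinning down the exact output/threshold convention fixed in the appendix's MLP formalization, to be the only delicate part; the constructions themselves are elementary and clearly polynomial.
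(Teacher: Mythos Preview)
Your proposal is correct and follows essentially the same approach as the paper: for FBDDs you swap leaf labels and build a single accepting path; for perceptrons you negate parameters (with a bias shift to handle the boundary) and use $\pm 1$ weights so that the pre-activation equals $p-d(\z,\x)$; for MLPs you apply the perceptron negation to the final layer. The only minor deviation is that for MLP naive constructibility the paper invokes the Boolean-circuit-to-MLP lemma (Lemma~\ref{lemma_boolean_circuit}) rather than directly embedding the perceptron indicator as you do, and for symmetric constructibility it first clears denominators and shifts the final bias by $-0.5$ to guarantee no input sits exactly on the decision boundary---precisely the ``bias-shifting bookkeeping'' you anticipated.
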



\mysubsection{Dominance of Interpreting MLPs.}  We prove that when dealing with 
complexity classes of explainability queries that are from the polynomial 
hierarchy (such as NP, $\Sigma^P_2$, etc.), the
complexity class associated with the MLP always dominates the overall
complexity. Hence, the \emph{exact} complexity class of
$Q(\mathcal{C}_{\mathcal{M}},\mathcal{C}_{\pi})$ when 
$\mathcal{C}_{\mathcal{M}}=\mathcal{C}_{\mlp}$ and/or 
$\mathcal{C}_{\pi}=\mathcal{C}_{\mlp}$, is equivalent to that of 
$Q(\mathcal{C}_{\mlp})$.
This claim holds for any class of polynomially computable functions.





\begin{theorem}
\label{theorem_mlp_always_wins}
Let $\mathcal{C}_{\mathcal{M}},\mathcal{C}_{\pi}$ be classes of 
polynomially computable functions such that $\mathcal{C}_\mathcal{M} = 
\mathcal{C}_{\mlp}$ or $\mathcal{C}_{\pi} = \mathcal{C}_{\mlp}$. If 
$\mathbf{Q}(\mathcal{C}_{\mlp})$ is $\mathcal{K}$-complete, where $\mathcal{K}$ 
is a complexity class of the polynomial hierarchy (or the class associated with 
its counting problem), then 
$\mathbf{Q}(\mathcal{C}_{\mathcal{M}},\mathcal{C}_{\pi})$ is also 
$\mathcal{K}$-complete.
\end{theorem}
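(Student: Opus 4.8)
The plan is to prove $\mathcal{K}$-completeness in the two standard directions --- membership in $\mathcal{K}$ (the upper bound) and $\mathcal{K}$-hardness (the lower bound) --- handling the two cases $\mathcal{C}_{\mathcal{M}} = \mathcal{C}_{\mlp}$ and $\mathcal{C}_{\pi} = \mathcal{C}_{\mlp}$ in parallel. For the upper bound, I would observe that $\mathbf{Q}(\mathcal{C}_{\mathcal{M}},\mathcal{C}_{\pi})$ is decided by the fully quantified formula $\textit{SOL}_{\neg f,\pi}$, whose quantifier prefix is fixed by the query $\mathbf{Q}$ and is \emph{identical} to the prefix of $\textit{SOL}_{\neg f}$ underlying $\mathbf{Q}(\mathcal{C}_{\mlp})$. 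The only change is the quantifier-free matrix, which here is the conjunction $f(\x_{\overline{S}};\z_{S}) \neq f(\x) \wedge \pi(\x_{\overline{S}};\z_{S}) = 1$. Since both $\mathcal{C}_{\mathcal{M}}$ and $\mathcal{C}_{\pi}$ are classes of polynomially computable functions, this matrix is evaluable in polynomial time, exactly as the MLP matrix is. Because the level of the polynomial hierarchy (or the associated counting class) occupied by a fully quantified SOL formula is determined solely by its quantifier prefix together with poly-time evaluability of its matrix, $\mathbf{Q}(\mathcal{C}_{\mathcal{M}},\mathcal{C}_{\pi})$ lies in the same canonical class as $\mathbf{Q}(\mathcal{C}_{\mlp})$; by the completeness hypothesis this class is $\mathcal{K}$, so $\mathbf{Q}(\mathcal{C}_{\mathcal{M}},\mathcal{C}_{\pi}) \in \mathcal{K}$.

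For the lower bound I would reduce the $\mathcal{K}$-hard problem $\mathbf{Q}(\mathcal{C}_{\mlp})$ to $\mathbf{Q}(\mathcal{C}_{\mathcal{M}},\mathcal{C}_{\pi})$, splitting on which class is $\mathcal{C}_{\mlp}$. When $\mathcal{C}_{\mathcal{M}} = \mathcal{C}_{\mlp}$ the MLP is the prediction model, and I invoke Theorem~\ref{generalized_theorem_1}: using that the trivial indicator satisfies $\mathbf{1}\in\mathcal{C}_{\pi}$, it gives $\mathbf{Q}(\mathcal{C}_{\mlp}) = \mathbf{Q}(\mathcal{C}_{\mathcal{M}}) \leq_{p} \mathbf{Q}(\mathcal{C}_{\mathcal{M}},\mathcal{C}_{\pi})$. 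When $\mathcal{C}_{\pi} = \mathcal{C}_{\mlp}$ the MLP is the context indicator, and I invoke Theorem~\ref{generalized_theorem_2}: by Proposition~\ref{general_model_form} the class $\mathcal{C}_{\mlp}$ is symmetrically constructible, and together with naive constructibility of $\mathcal{C}_{\mathcal{M}}$ this yields $\mathbf{Q}(\mathcal{C}_{\mlp}) = \mathbf{Q}(\mathcal{C}_{\pi}) \leq_{p} \mathbf{Q}(\mathcal{C}_{\mathcal{M}},\mathcal{C}_{\pi})$. In both cases the $\mathcal{K}$-hardness of $\mathbf{Q}(\mathcal{C}_{\mlp})$ transfers along the reduction, and combined with the upper bound this establishes $\mathcal{K}$-completeness.

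The main obstacle I anticipate is the lower bound rather than the essentially syntactic upper bound, because Theorems~\ref{generalized_theorem_1} and~\ref{generalized_theorem_2} each impose a mild side-condition on the \emph{non-MLP} class --- that $\mathbf{1}\in\mathcal{C}_{\pi}$ in the first case, and that $\mathcal{C}_{\mathcal{M}}$ is naively constructible in the second --- which one must verify is available. These hold for all concrete model classes treated here (FBDDs, Perceptrons, MLPs, by Proposition~\ref{general_model_form}) and are genuinely needed: a degenerate indicator class containing only the constant-$0$ function would render every aligned completion out-of-distribution and collapse the query to a trivial problem, contradicting $\mathcal{K}$-hardness. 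A secondary subtlety is keeping the upper-bound argument uniform across both the decision levels of the polynomial hierarchy and their associated counting classes, so that the principle ``same quantifier prefix plus poly-time matrix yields the same class'' is justified whether $\mathcal{K}$ is a PH level or a counting class such as $\countPComplexity$.
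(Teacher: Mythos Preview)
Your overall structure matches the paper's: membership via the shared quantifier prefix of $\textit{SOL}_{\neg f}$ and $\textit{SOL}_{\neg f,\pi}$, and hardness via Theorems~\ref{generalized_theorem_1} and~\ref{generalized_theorem_2} together with the constructibility properties of MLPs (Proposition~\ref{general_model_form}). The hardness half is essentially identical to the paper's, and your observation that the side-conditions of those two theorems fall on the \emph{non-MLP} class is correct and in fact more explicit than the paper's own treatment, which simply cites that MLPs are symmetrically and naively constructible.

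There is, however, one genuine gap in your membership argument. The quantifier-prefix reasoning only places $\mathbf{Q}(\mathcal{C}_{\mathcal{M}},\mathcal{C}_{\pi})$ and $\mathbf{Q}(\mathcal{C}_{\mlp})$ in a common canonical level $\mathcal{K}'$ of the hierarchy; from the hypothesis that $\mathbf{Q}(\mathcal{C}_{\mlp})$ is $\mathcal{K}$-complete you get $\mathcal{K} \subseteq \mathcal{K}'$ (since $\mathcal{K}'$ is closed under polynomial reductions and contains a $\mathcal{K}$-hard problem), but not the reverse inclusion. So the step ``by the completeness hypothesis this class is $\mathcal{K}$'' does not follow, and a priori you have only shown $\mathbf{Q}(\mathcal{C}_{\mathcal{M}},\mathcal{C}_{\pi}) \in \mathcal{K}'$ with $\mathcal{K}' \supseteq \mathcal{K}$. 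The paper closes this by additionally arguing that $\mathbf{Q}(\mathcal{C}_{\mlp})$ is $\mathcal{K}'$-\emph{hard}: since any Boolean circuit can be converted in polynomial time to an equivalent MLP, the canonical $\mathcal{K}'$-hard quantified-Boolean-formula problem reduces to $\mathbf{Q}(\mathcal{C}_{\mlp})$, giving $\mathcal{K}'$-completeness. Combined with the assumed $\mathcal{K}$-completeness this forces $\mathcal{K} = \mathcal{K}'$ (a language cannot be complete for two distinct levels of the polynomial hierarchy unless the hierarchy collapses), and only then does $\mathbf{Q}(\mathcal{C}_{\mathcal{M}},\mathcal{C}_{\pi}) \in \mathcal{K}$ follow.
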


The ``hardness'' part of Theorem~\ref{theorem_mlp_always_wins} is a direct 
consequence of Theorems~\ref{generalized_theorem_1} 
and~\ref{generalized_theorem_2}. However, when specifically considering MLPs, 
completeness also holds. The proof of this claim is relegated to the appendix, 
and is a result of the fact that any Boolean circuit can be polynomially 
reduced to an MLP~\cite{BaMoPeSu20}. This relation implies that the ``hardest'' 
possible 
complexity class in the polynomial hierarchy is always associated with the one 
for interpreting an MLP over $\mathbf{Q}$. Fig.~\ref{fig:complexityPlot} 
depicts the relations among different complexity classes, as derived from 
Theorems~\ref{generalized_theorem_1},~\ref{generalized_theorem_2}, 
and~\ref{theorem_mlp_always_wins}.




\begin{figure}[ht]
	\centering
	\includegraphics[width=0.5\textwidth]{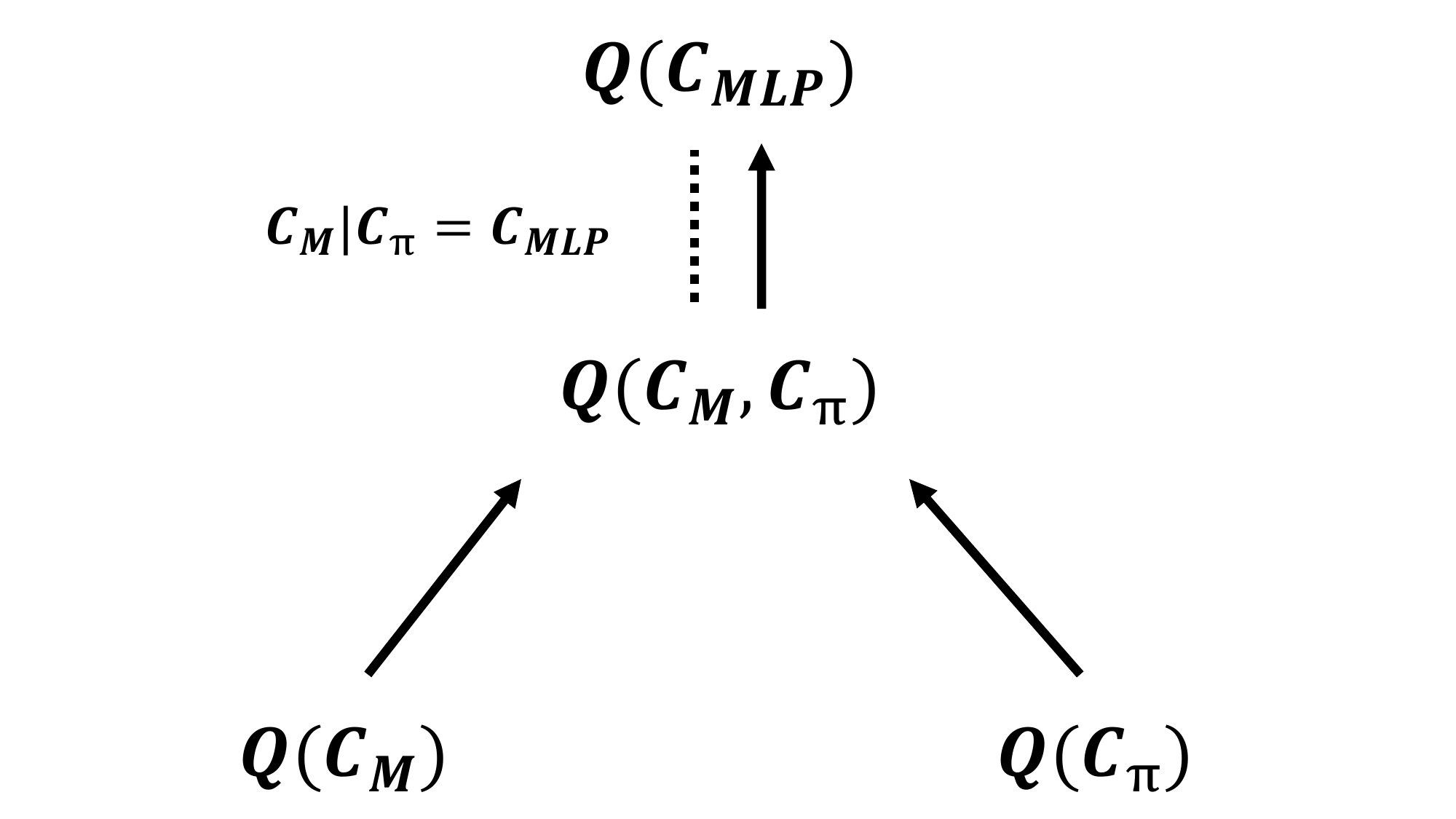}
	\caption{A visual illustration of Theorems~\ref{generalized_theorem_1}, 
	\ref{generalized_theorem_2}, and~\ref{theorem_mlp_always_wins}. Dashed 
	lines 
	depict that both queries are in the same complexity class, and are hard for 
	that class. Arrows are directed from the query with the ``easier'' 
	complexity class to the query with the ``harder'' complexity class.}
	\label{fig:complexityPlot}
\end{figure} %

\vspace{0.7cm}
In Table~\ref{table:results:VerificationResults}, we exemplify the 
aforementioned explainability queries (\emph{MCR}, \emph{MSR}, and \emph{CC}) 
and a specific scenario where $\mathcal{C}_{\mathcal{M}}$ is set to either 
$\mathcal{C}_{\fbdd}$ or $\mathcal{C}_{\perceptron}$, whereas the context 
indicator $\mathcal{C}_{\pi}$ is set to $\mathcal{C}_{\mlp}$ (this is the case 
of our running example, in which the OOD detection is performed using a more 
expressive model than the original classifier). 
Hence, Theorem~\ref{theorem_mlp_always_wins} implies that the complexity of 
solving the aligned query is primarily determined by the complexity involved in 
using an MLP, as summarized in Table~\ref{table:results:VerificationResults}.

\begin{table*}[t]
	\centering
	\caption{The computational complexity of 
		$\mathbf{Q}(\mathcal{C}_{\mathcal{M}})$
		and $\mathbf{Q}(\mathcal{C}_{\mathcal{M}},\mathcal{C}_{\pi})$ 
		with respect to various 
		explainability queries.} 
	\resizebox{0.65\textwidth}{!}{%
	\begin{tabular}{lcccccc}
		\toprule
		& & 
		\multicolumn{2}{c}{\textbf{$\mathcal{C}_{\mathcal{M}}=\mathcal{C}_{\text{FBDD}}$}}
		& 
		\multicolumn{2}{c}{\textbf{$\mathcal{C}_{\mathcal{M}}=\mathcal{C}_{\text{Perceptron}}$}}
		\\
		\cmidrule(lr){3-4} \cmidrule(lr){5-6}
		\textbf{} & & \textbf{Q($C_{M}$)} & 
		\textbf{Q($\mathcal{C}_{\mathcal{M}}$, $\mathcal{C}_{\text{MLP}}$)} & 
		\textbf{Q($\mathcal{C}_{\mathcal{M}}$)} & 
		\textbf{Q($\mathcal{C}_{\mathcal{M}}$, $\mathcal{C}_{\text{MLP}}$)} \\
		\midrule
		\textbf{MCR} & & \PComplexity & \NPCompleteComplexity & \PComplexity & 
		\NPCompleteComplexity \\[1ex]
		\textbf{MSR} & & \NPCompleteComplexity & \StoPCompleteComplexity & 
		\PComplexity & \StoPCompleteComplexity \\[1ex]
		\textbf{CC} & & \PComplexity & \countPCompleteComplexity & 
		\countPCompleteComplexity & \countPCompleteComplexity \\[1ex]
		\bottomrule
	\end{tabular}
}
	\label{table:results:VerificationResults}
\end{table*}

\section{``Self-Alignment'': Incorporating Social Alignment within a Single 
Model}
\label{sec:self-alignment}

Until now, we focused on the general scenario in which $f$ and $\pi$ are chosen 
from two \emph{different} model classes (for instance $f$ is a decision tree, 
and 
$\pi$ is a neural network). However, in some cases, $f$ and $\pi$ can be two 
models of the same type, i.e., from the same class. In this scenario, given a 
classifier and an OOD detector, both from the same class, practitioners 
might decide to train a single model that learns \emph{both} the prediction 
task and the alignment task.
More formally, we say that a single model class $\mathcal{C}$ is ``self-aligned'' when it 
is expressive enough to incorporate this dual procedure. This is demonstrated 
by the fact that given a model $f$ and a context indicator $\pi$, a new 
model $g$ can be efficiently constructed to show the alignment of $f$ with 
respect to the distribution indicated by $\pi$:

\begin{definition}
A class of models $\mathcal{C}$ is \emph{self-aligned} if for any $f,\pi \in 
\mathcal{C}$, and any inputs $\x$ and $I$, there exists a 
polynomially constructible function $g\in \mathcal{C}$, 
such that:
\begin{equation}
\begin{aligned}
	\langle f,\pi,\x, I \rangle \in \mathbf{Q}(\mathcal{C},\mathcal{C}) \iff 
	\langle g,\x, I \rangle \in \mathbf{Q}(\mathcal{C})
\end{aligned}
\end{equation}
\end{definition}


Intuitively, for any possible explainability query within $\mathbf{Q}$ 
(decision or counting), explanations of $f$, aligned by $\pi$, can be expressed 
by a single aggregated function $g$. Clearly, $g$ must be at least as 
expressive as 
the original models $f$ and $\pi$.
This raises the question of how expressive a class of models 
$\mathcal{C}$ should be, for it to be self-aligned.

\begin{theorem}
\label{expressivnes}
Given a class of models $\mathcal{C}$, if for any $f_{1}, f_{2} \in 
\mathcal{C}$, we can polynomially construct $g:=f_{1}[op]f_2\in \mathcal{C}$, 
for [op]$\in\{\wedge, \rightarrow \}$, then $\mathcal{C}$ is self-aligned.
\end{theorem}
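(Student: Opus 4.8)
The plan is to fold the context indicator $\pi$ into the prediction model, producing a single $g \in \mathcal{C}$ for which the \emph{unaligned} formula $\textit{SOL}_{\neg g}$ is logically equivalent to the \emph{aligned} formula $\textit{SOL}_{\neg f, \pi}$. The whole argument rests on one observation: the atom inside $\textit{SOL}_{\neg f, \pi}$, namely $[f(\x_{\Bar{S}}; \z_S) \neq f(\x)] \wedge [\pi(\x_{\Bar{S}}; \z_S) = 1]$, can be re-expressed as a single sign-flip $g(\x_{\Bar{S}}; \z_S) \neq g(\x)$ of one Boolean function, once we pick the correct combinator according to the value of $f(\x)$.

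First I would evaluate $f(\x)$ (a single polynomial-time call) and branch. If $f(\x) = 1$ I set $g := \pi \rightarrow f$; if $f(\x) = 0$ I set $g := f \wedge \pi$. By the hypothesis of the theorem, each of these combinators keeps $g$ inside $\mathcal{C}$ and is polynomially constructible from $f$ and $\pi$, so $g$ is a legitimate witness for the self-alignment definition.

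The core step is then the pointwise identity, for every $S \subseteq \X$ and every $\z \in \mathbb{F}$,
\[
[g(\x_{\Bar{S}}; \z_S) \neq g(\x)] \Longleftrightarrow [f(\x_{\Bar{S}}; \z_S) \neq f(\x)] \wedge [\pi(\x_{\Bar{S}}; \z_S) = 1],
\]
which is a two-line truth-table check. When $f(\x) = 1$, $g(\x) = \pi(\x) \rightarrow 1 = 1$, so $g(\x_{\Bar{S}}; \z_S) \neq g(\x)$ means $g(\x_{\Bar{S}}; \z_S) = 0$, i.e.\ $\pi(\x_{\Bar{S}}; \z_S) = 1$ and $f(\x_{\Bar{S}}; \z_S) = 0$ --- exactly the aligned flip. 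When $f(\x) = 0$, $g(\x) = f(\x) \wedge \pi(\x) = 0$, so $g(\x_{\Bar{S}}; \z_S) \neq g(\x)$ means $g(\x_{\Bar{S}}; \z_S) = 1$, i.e.\ $f(\x_{\Bar{S}}; \z_S) = 1$ and $\pi(\x_{\Bar{S}}; \z_S) = 1$. Note that $g(\x) = f(\x)$ in both branches, which is what makes the sign-flip align with the original prediction.

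Finally I would lift this identity to the full queries. Since $\textit{SOL}_{\neg g}$ and $\textit{SOL}_{\neg f, \pi}$ carry the same quantifier prefix (fixed by the explanation form captured by $\mathbf{Q}$, not by the particular $f$ or $\pi$) and differ only in the atom just shown to agree on every assignment, the two formulas have identical truth values and identical sets --- hence identical numbers --- of satisfying assignments. Passing the parameter $I$ through unchanged gives $\langle f, \pi, \x, I \rangle \in \mathbf{Q}(\mathcal{C}, \mathcal{C}) \iff \langle g, \x, I \rangle \in \mathbf{Q}(\mathcal{C})$ for both the decision and the counting variants, which is precisely self-alignment. The only real subtlety --- and the step most easily gotten wrong --- is the case split on $f(\x)$: choosing $\wedge$ when $f(\x) = 1$ (or $\rightarrow$ when $f(\x) = 0$) inverts the polarity and destroys the equivalence, so the fact that $\{\wedge, \rightarrow\}$ supplies exactly the two polarities needed is the heart of the proof.
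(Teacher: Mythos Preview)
Your proof is correct and follows essentially the same route as the paper: a case split on $f(\x)$, building $g := \pi \rightarrow f$ when $f(\x)=1$ and $g := f \wedge \pi$ when $f(\x)=0$, then verifying the atom-level equivalence $\psi_{\neg g} \Leftrightarrow \psi_{\neg f,\pi}$ and lifting it through the shared quantifier prefix. The only cosmetic difference is that the paper writes the first combinator as $f \lor \neg\pi$ rather than $\pi \rightarrow f$; these are the same Boolean function, and your phrasing has the minor advantage of invoking the hypothesis operators $\{\wedge,\rightarrow\}$ directly.
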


Intuitively, classes of models that are capable of expressing the 
logical operators $\rightarrow$ and $\wedge$ are capable of ``capturing'' 
that a given explanation form is determined by its underlying 
distribution. The proof of this theorem is relegated to the appendix, and can 
be obtained by showing an equivalence between the two underlying formalizations.




If self-alignment implies that, given a prediction model $f$ and a context 
indicator $\pi$, we can attain a single aggregated model $g$ --- then clearly 
the computational complexity of interpreting $f\in C$ with respect to $\pi \in 
\mathcal{C}$ (i.e., the complexity of $\mathbf{Q}(\mathcal{C},\mathcal{C})$) is 
correlated to the complexity of interpreting $g\in \mathcal{C}$ (i.e., the 
complexity of $\mathbf{Q}(\mathcal{C})$). This can be demonstrated by the 
subsequent proposition: 

\begin{proposition}
\label{result-of-logical-containment}
If the conditions in Theorem~\ref{expressivnes} hold for a class of models 
$\mathcal{C}$, then 
$\mathbf{Q}(\mathcal{C},\mathcal{C})=_P\mathbf{Q}(\mathcal{C})$.
\end{proposition}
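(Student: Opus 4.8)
The plan is to establish the two-way reduction $=_{p}$ by proving each direction separately, leaning on results already in hand. Recall that $\mathbf{Q}(\mathcal{C},\mathcal{C})=_{p}\mathbf{Q}(\mathcal{C})$ abbreviates the existence of polynomial-time reductions $\mathbf{Q}(\mathcal{C},\mathcal{C})\leq_{p}\mathbf{Q}(\mathcal{C})$ and $\mathbf{Q}(\mathcal{C})\leq_{p}\mathbf{Q}(\mathcal{C},\mathcal{C})$, so it suffices to exhibit one reduction in each direction.

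For the direction $\mathbf{Q}(\mathcal{C},\mathcal{C})\leq_{p}\mathbf{Q}(\mathcal{C})$, I would invoke self-alignment directly. Since the hypothesis of this proposition is precisely the hypothesis of Theorem~\ref{expressivnes}, that theorem tells us $\mathcal{C}$ is self-aligned. By the definition of self-alignment, for every instance $\langle f,\pi,\x,I\rangle$ there is a polynomially constructible $g\in\mathcal{C}$ with $\langle f,\pi,\x,I\rangle\in\mathbf{Q}(\mathcal{C},\mathcal{C})\iff\langle g,\x,I\rangle\in\mathbf{Q}(\mathcal{C})$. The assignment $\langle f,\pi,\x,I\rangle\mapsto\langle g,\x,I\rangle$ is computable in polynomial time (by polynomial constructibility of $g$) and preserves the answer, so it is exactly the desired reduction — both in the decision reading ($\iff$ on membership) and, provided $g$ is answer-faithful, in the counting reading, where the value of $\#\textit{SOL}_{\neg g}$ coincides with that of $\#\textit{SOL}_{\neg f,\pi}$.

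For the reverse direction $\mathbf{Q}(\mathcal{C})\leq_{p}\mathbf{Q}(\mathcal{C},\mathcal{C})$, I would first show that the constant-true function $\mathbf{1}$ lies in $\mathcal{C}$. Fixing any $f\in\mathcal{C}$ and applying the hypothesis of Theorem~\ref{expressivnes} with the operator $\rightarrow$ and $f_{1}=f_{2}=f$ yields $f\rightarrow f\in\mathcal{C}$; but $f\rightarrow f$ is semantically the constant $\mathbf{1}$, hence $\mathbf{1}\in\mathcal{C}$. With $\mathbf{1}\in\mathcal{C}$ established, I would instantiate Theorem~\ref{generalized_theorem_1} at $\mathcal{C}_{\mathcal{M}}=\mathcal{C}_{\pi}=\mathcal{C}$, which immediately gives $\mathbf{Q}(\mathcal{C})\leq_{p}\mathbf{Q}(\mathcal{C},\mathcal{C})$: interpreting a model paired with the trivial, all-accepting context indicator simply recovers the unaligned query. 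Combining the two reductions yields $\mathbf{Q}(\mathcal{C},\mathcal{C})=_{p}\mathbf{Q}(\mathcal{C})$.

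The bulk of the argument is routing between the two cited results, so I do not expect a deep obstacle; the one point demanding care is the faithfulness of the self-alignment reduction in the counting case, where I must confirm that the constructed $g$ does not merely preserve the yes/no answer but exactly preserves the number of satisfying completions, so that $=_{p}$ holds for the counting variant of $\mathbf{Q}$ as well as for its decision variant.
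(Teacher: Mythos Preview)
Your proposal is correct and follows essentially the same approach as the paper: self-alignment (Theorem~\ref{expressivnes}) for the direction $\mathbf{Q}(\mathcal{C},\mathcal{C})\leq_{p}\mathbf{Q}(\mathcal{C})$, and Theorem~\ref{generalized_theorem_1} for the reverse. Your argument is in fact slightly more careful than the paper's, which simply cites Theorems~\ref{generalized_theorem_1} and~\ref{generalized_theorem_2} without verifying their hypotheses; your observation that $f\rightarrow f\equiv\mathbf{1}$ cleanly supplies the $\mathbf{1}\in\mathcal{C}$ needed by Theorem~\ref{generalized_theorem_1}, and your concern about counting-faithfulness is resolved by the proof of Theorem~\ref{expressivnes}, which establishes the pointwise equivalence $\psi_{\neg f,\pi}\iff\psi_{\neg g}$ rather than mere equisatisfiability.
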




\subsection{Model-Specific Results}

We move on to analyze which of the aforementioned model classes incorporate 
self-alignment. First, we show that both FBDDs and MLPs are self-aligned, which 
is a result of their capability to polynomially express $\rightarrow$ and 
$\wedge$ relations within their class:

\begin{proposition}
FBDDs and MLPs are self-aligned, and hence, it follows that:
$\mathbf{Q}(\mathcal{C}_{\fbdd},\mathcal{C}_{\fbdd})=_P 
\mathbf{Q}(\mathcal{C}_{\fbdd})$ and 
$\mathbf{Q}(\mathcal{C}_{\mlp},\mathcal{C}_{\mlp})=_P 
\mathbf{Q}(\mathcal{C}_{\mlp})$.
\end{proposition}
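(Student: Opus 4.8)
\emph{Proof proposal.} The plan is to reuse the machinery already established rather than reasoning about the queries directly: by Theorem~\ref{expressivnes}, a class $\mathcal{C}$ is self-aligned as soon as it is closed, in polynomial time, under the binary constructions $g := f_1[op]f_2$ for $[op]\in\{\wedge,\rightarrow\}$; and once self-alignment is in hand, the equalities $\mathbf{Q}(\mathcal{C},\mathcal{C})=_P\mathbf{Q}(\mathcal{C})$ follow immediately from Proposition~\ref{result-of-logical-containment}. So the whole task reduces to exhibiting, for $\mathcal{C}\in\{\mathcal{C}_{\fbdd},\mathcal{C}_{\mlp}\}$, polynomial-time constructions realizing $\wedge$ and $\rightarrow$ inside the class. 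I would first note that it suffices to handle $\wedge$ together with negation: since $f_1\rightarrow f_2=\neg(f_1\wedge\neg f_2)$ and both classes are trivially closed under negation --- an FBDD is negated by swapping its $0$- and $1$-sinks (size preserving), and a Boolean-output ReLU MLP is negated by the affine map $o\mapsto 1-o$ on its output neuron --- closure under $\wedge$ propagates to closure under $\rightarrow$ at no asymptotic cost.

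\textbf{MLPs.} Given two ReLU MLPs $f_1,f_2$ with Boolean outputs $o_1,o_2\in\{0,1\}$, I would build $f_1\wedge f_2$ by placing the two networks in parallel over the shared input layer, padding the shallower network with identity layers (each coordinate propagated as $v\mapsto\relu(v)-\relu(-v)$) so that both outputs emerge at a common depth, and then appending a constant-size conjunction gadget computing $\relu(o_1+o_2-1)$, which equals $1$ exactly when $o_1=o_2=1$. The resulting network has size $O(|f_1|+|f_2|)$ up to the depth padding, so the construction is polynomial and stays within $\mathcal{C}_{\mlp}$; this part is essentially mechanical.

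\textbf{FBDDs.} For FBDDs I would use the standard product (\emph{apply}) construction: the nodes of $g$ are pairs $(u,v)$ of a node of $f_1$ and a node of $f_2$, a pair labelled $x_i$ reads $x_i$ once and advances simultaneously in both diagrams, and a pair of sinks is collapsed to the sink given by their conjunction. The main obstacle lives precisely here, and this is the step I expect to require the most care: the product of two \emph{arbitrary} FBDDs need not remain read-once, and in general the conjunction of polynomial-size FBDDs can suffer an exponential blow-up, so the delicate point is to carry out the construction while preserving the free property \emph{and} a polynomial size bound. The natural way to guarantee this is to perform the product with respect to a common graph ordering respected by both diagrams, under which every path through the pairs $(u,v)$ reads each variable at most once and the size is bounded by $O(|f_1|\cdot|f_2|)$; establishing that such a compatible ordering is available and that the resulting diagram genuinely lies in $\mathcal{C}_{\fbdd}$ is the crux of the argument. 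Once both closures are secured, instantiating Theorem~\ref{expressivnes} and Proposition~\ref{result-of-logical-containment} for each class yields both the self-alignment statements and the two claimed polynomial equivalences.
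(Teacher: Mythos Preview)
Your overall strategy---invoke Theorem~\ref{expressivnes} by exhibiting polynomial-time closure under $\wedge$ and negation, then read off the equivalence from Proposition~\ref{result-of-logical-containment}---is exactly the paper's, and your MLP construction is essentially the same as theirs: they too stack the two networks in parallel, pad the shallower one to a common depth, and append a constant-size Boolean gate (they implement $\vee$ first and derive the rest via negation, you implement $\wedge$ directly; this is cosmetic).

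The FBDD case is where you diverge from the paper and where there is a genuine gap. You propose the standard apply/product over node pairs $(u,v)$ and identify as ``the crux'' the availability of a common graph ordering respected by both diagrams. But for arbitrary FBDDs such a compatible ordering need not exist: an OBDD that tests $x_1$ before $x_2$ and another that tests $x_2$ before $x_1$ are both FBDDs, yet admit no common ordering, so the product does not stay read-once and your $O(|f_1|\cdot|f_2|)$ bound is unsupported. You yourself note that conjunction of polynomial-size FBDDs can blow up; the ``common ordering'' escape hatch you reach for is simply not available in this class, so the argument as written does not go through.

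The paper sidesteps this obstacle with a different construction. Instead of a simultaneous product, it composes the two diagrams \emph{sequentially}: take $f_t$, and at every $1$-leaf attach a fresh copy of $f_s$, obtaining an intermediate diagram $f'$ of size $O(|f_t|\cdot|f_s|)$ that computes $f_t\wedge f_s$ but may test some variables twice along a path (once in the $f_t$ part, once in the attached $f_s$ part). It then runs a per-path pruning pass: walking down each path, whenever a node in the $f_s$ portion tests a variable whose value was already fixed in the $f_t$ portion of that same path, the node is deleted and short-circuited to the child dictated by the already-determined value. This scheme never needs the two diagrams to share a variable ordering. The $\rightarrow$ case is handled analogously after first negating $f_t$. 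So the idea missing from your FBDD argument is precisely this sequential-composition-then-prune construction in place of the simultaneous apply.
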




%


However, in contrast to decision trees and neural networks, linear
classifiers lack the ability to capture the notion of self-alignment. 
It is important to note that a single Perceptron cannot inherently
represent the $\rightarrow$ and $\wedge$ relations over two other
Perceptrons. That said, it is worth emphasizing that this
observation alone does not conclusively establish their lack of self-alignment, 
as this condition is sufficient but not necessary. To rigorously prove the 
inability of Perceptrons to be self-aligned, we prove the 
subsequent proposition:

\begin{proposition}
\label{proposition:MCR_Perceptron_NPcomplete}
While the query MCR$(\mathcal{C}_{\perceptron})$ can be solved in polynomial 
time, the query MCR$(\mathcal{C}_{\perceptron},\mathcal{C}_{\perceptron})$ is 
NP-complete.
\end{proposition}
\medskip\noindent
\emph{Proof sketch.} Membership results from the fact that we can guess a 
subset of features $S$ and validate whether it is contrastive for $f$ and 
whether it is 
also in-distribution (by feeding it to $\pi$). For hardness, we reduce from 
\emph{SSP} (the k-subset-sum problem), which is a classic NP-complete 
problem. 
The reduction exploits the ranges of the Perceptrons of both $f$ and $\pi$ in 
order to bind the target sum $T$ of the subset, both from above and from below.

Building upon Proposition~\ref{proposition:MCR_Perceptron_NPcomplete}, we can 
deduce the following corollary 
(proved in the appendix):

\begin{theorem}
\label{perceptons_not_aligned}
Assuming that $P\neq NP$, the class $\mathcal{C}_{\perceptron}$ is not 
self-aligned.
\end{theorem}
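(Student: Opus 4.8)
\emph{Proof plan.} The plan is to argue by contradiction, leveraging the complexity gap already established in Proposition~\ref{proposition:MCR_Perceptron_NPcomplete}: namely that MCR$(\mathcal{C}_{\perceptron})$ lies in \PComplexity{} while its aligned counterpart MCR$(\mathcal{C}_{\perceptron},\mathcal{C}_{\perceptron})$ is \NPCompleteComplexity. The guiding observation is that self-alignment is, by definition, nothing other than a polynomial-time reduction from the aligned query to the unaligned one; so if $\mathcal{C}_{\perceptron}$ were self-aligned, this gap would collapse, forcing $\PComplexity = \NPComplexity$.

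Concretely, I would first assume for contradiction that $\mathcal{C}_{\perceptron}$ is self-aligned. Instantiating the abstract query $\mathbf{Q}$ as the concrete query MCR, the definition of self-alignment yields, for every $f,\pi\in\mathcal{C}_{\perceptron}$ and every input pair consisting of $\x$ and a bound $I$ (with $I$ playing the role of the cardinality threshold $k$), a function $g\in\mathcal{C}_{\perceptron}$ that is constructible in polynomial time and satisfies $\langle f,\pi,\x,I\rangle \in \text{MCR}(\mathcal{C}_{\perceptron},\mathcal{C}_{\perceptron}) \iff \langle g,\x,I\rangle \in \text{MCR}(\mathcal{C}_{\perceptron})$. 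The second step is to read this biconditional as the map $\langle f,\pi,\x,I\rangle \mapsto \langle g,\x,I\rangle$: since $g$ is polynomially constructible and $\x$ and $I$ are merely copied, the map is computable in polynomial time, and by the biconditional it preserves membership. Hence it is a polynomial-time many-one reduction witnessing $\text{MCR}(\mathcal{C}_{\perceptron},\mathcal{C}_{\perceptron}) \leq_p \text{MCR}(\mathcal{C}_{\perceptron})$.

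The final step combines this reduction with Proposition~\ref{proposition:MCR_Perceptron_NPcomplete}. Since MCR$(\mathcal{C}_{\perceptron})$ is solvable in polynomial time, the reduction places MCR$(\mathcal{C}_{\perceptron},\mathcal{C}_{\perceptron})$ in \PComplexity{} as well; but that problem is \NPCompleteComplexity, so we would have obtained an \NPCompleteComplexity{} problem in \PComplexity, i.e. $\PComplexity = \NPComplexity$, contradicting the hypothesis $\PComplexity \neq \NPComplexity$. Therefore $\mathcal{C}_{\perceptron}$ cannot be self-aligned.

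The main point requiring care --- rather than a genuine obstacle --- is justifying that self-alignment, which is phrased against the abstract form $\mathbf{Q}$, specializes soundly to the concrete MCR query, and that the resulting biconditional is a bona fide many-one reduction in the correct direction (aligned reducing to unaligned, not the reverse). One must confirm that the ``inputs $I$'' of the self-alignment definition indeed carry the MCR cardinality threshold through unchanged, so that no nontrivial recomputation of the instance is hidden inside the reduction; the polynomial-constructibility clause in the definition is precisely what guarantees this. Since the heavy lifting is already done in Proposition~\ref{proposition:MCR_Perceptron_NPcomplete}, the theorem then follows as an essentially immediate corollary.
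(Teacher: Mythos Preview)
Your proposal is correct and follows essentially the same approach as the paper: assume self-alignment for contradiction, read the defining biconditional as a polynomial-time many-one reduction from MCR$(\mathcal{C}_{\perceptron},\mathcal{C}_{\perceptron})$ to MCR$(\mathcal{C}_{\perceptron})$, and combine with Proposition~\ref{proposition:MCR_Perceptron_NPcomplete} to force $\PComplexity=\NPComplexity$. Your write-up is in fact more explicit than the paper's about why the map is a bona fide reduction and why the instantiation $\mathbf{Q}\coloneqq\text{MCR}$ is legitimate, but the underlying argument is identical.
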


These findings underscore a crucial aspect concerning the interpretability of 
Perceptrons. While producing explanations pertaining to them can be achieved 
with low computational complexity (providing further evidence of their 
interpretability), they are not self-aligned.
Consequently, obtaining \emph{aligned} explanations using Perceptrons 
necessitates the adoption of a more sophisticated model, that is expressive 
enough to incorporate social alignment --- and this, in turn, can significantly 
increase the overall complexity of their interpretation.


\section{Related Work}
\label{sec:related-work}
This work continues a line of research that focuses on \emph{Formal 
XAI}~\cite{ignatiev2020towards,waldchen2021computational,arenas2022computing,ArBaBaPeSu21,ignatiev2019relating,
 BaKa23, BaAmCoReKa23}.
Prior studies have already investigated the explanation forms  
that were 
analyzed within our 
work~\cite{ArBaBaPeSu21,waldchen2021computational,arenas2022computing,ArBaBaPeSu21},
 including 
	sufficiency-based explainability queries 
	(\emph{MSR})~\cite{ignatiev2019abduction,marques2020explaining}, 
	contrastive/counterfactual-based queries 
 (\emph{MCR})~\cite{shih2018formal,ignatiev2020contrastive}, and counting-based 
 queries (\emph{CC})~\cite{darwiche2002knowledge}. 
Other work~\cite{GoRu22} defined formal notions of 
sufficient and 
contrastive reasons under specific contexts and suggested ways to compute them 
on 
a wide range of models~\cite{yu2022eliminating}. However, these explanation 
forms were 
not analyzed with respect to their overarching computational complexity. Closer 
to ours is the work of Cooper et al.~\cite{cooper2023abductive} 
which analyzes 
different properties (including the computational complexity) of 
sufficiency-based explanations under logical constraints. 
We also acknowledge the work of Arenas et al.~\cite{ArBaBaPeSu21}, which 
describes a general logic-based explanation form, similar to our abstract 
query form. While their work focuses on explanations of first-order logic 
forms for decision queries, our approach is more expressive, 
encompassing 
second-order logic forms that incorporate both decision-based and 
counting-based explanations.

Another line of research examines the computational complexity of
obtaining Shapley value-based
explanations~\cite{arenas2021tractability, van2022tractability,
  marzouk2024tractability}, where alignment with respect to a given 
  distribution is vital~\cite{sundararajan2020many}. Specifically, Van den 
  Broeck et al.~\cite{van2022tractability} identify a complexity gap in 
  interpreting Shapley values when considering fully factorized or \emph{Naive 
  Bayes-modeled distributions}.

In some cases, the term ``sufficient reason'' is also defined as an 
\emph{abductive explanation}~\cite{ignatiev2019abduction} and correlates with 
the notion of a \emph{prime implicant} for a Boolean 
classifier~\cite{darwiche2002knowledge}.
The \emph{CC} query is associated with probabilistic notions of explainability, 
by correlating the precision of the explanation with the number of possible 
input completions~\cite{ribeiro2018anchors,waldchen2021computational}. A 
similar notion, formally known as a \emph{$\delta$-relevant 
set}~\cite{izza2021efficient,waldchen2021computational}, focuses on 
bounding this specific portion.

The dependency of explanations on OOD assignments has been studied
extensively~\cite{zaidan2007using,sundararajan2017axiomatic,fong2017interpretable,hooker2019benchmark,kim2020interpretation,hsieh2020evaluations,yi2020information,sundararajan2020many}.
Specifically, many heuristic-based tools and frameworks have been proposed 
for dealing with the OOD counterfactual problem in 
model explainability. 
These include marginalizing the prediction of the model over possible 
counterfactual 
assignments~\cite{zintgraf2017visualizing,kim2020interpretation,yi2020information},
 sampling points in the proximity of the original 
input~\cite{chang2018explaining,sanyal2021discretized,ribeiro2018anchors}, as 
well as \emph{counterfactual training}~\cite{HaXiBa21,vafa2021rationales} ---
a method that, similarly to adversarial training~\cite{ZhAlMa22}, seeks to 
robustify models to OOD counterfactuals. Other work focuses on mitigating the 
effect of OOD assignments on the computation of Shapley 
values~\cite{sundararajan2020many,janzing2020feature,taufiq2023manifold}. 
In spite of these notable accomplishments, the theoretical analysis of the OOD 
counterfactual problem with respect to its computational complexity has yet to 
be thoroughly examined.

\section{Conclusion}
\label{sec:conclusion}

Computational complexity theory stands as a potential avenue to formally assess 
the interpretability of various ML models. Prior research examined this by 
considering two main factors: the model type and the explanation form. We 
claim that a third and important factor should be taken into consideration --- 
the underlying distribution over which the explanation is computed. To achieve 
this goal, we generalize existing explainability queries and show how a unified 
form can describe the desired social alignment requirement for any explanation 
form under our second-order logic formalization. Moreover, we 
present a framework for assessing the computational complexity of these queries 
and demonstrate that, for a broad range of model types and query forms, 
providing socially aligned explanations is as hard as interpreting a model 
designed to detect OOD inputs. As OOD detection is known to be substantially 
difficult, such models may often require more expressive capacity than the 
original classification models, significantly impacting the overall complexity 
of model 
interpretation. 
Finally, we provide an analysis of the required capacity of models to 
inherently produce aligned explanations without using an external OOD detector. 
We hope that our work serves as a foundation for a deeper mathematical 
understanding of the interpretability pertaining to various ML models.

\section{Limitations and Future Work}
\label{sec:limitations}

Our framework can be extended along several different
axes. First and foremost, we note that assuming the existence of a 
context indicator $\pi$ for identifying OOD inputs is highly non-trivial. 
Previous work, both theoretical and practical, has highlighted the challenges 
associated with obtaining such an OOD 
detector~\cite{FaLiLuDoHaLi22,PeYi22,HsShJiKi20,SeAlGoSlNuLu19,BeXiMaZhLiXuZh20}.
However, it is important to emphasize that our framework does not necessarily 
assume the complete accuracy or correctness of such a classifier. 
Instead, $\pi$ can be viewed as a function that provides an 
\emph{approximation} of 
the underlying context $\mathbf{C}$. Therefore, future research endeavors 
could 
center around evaluating the computational complexity of specific 
approximations tailored to particular contexts of interest. While these 
approximations may only offer a \emph{partially} guaranteed solution to the 
alignment issue, they may still exhibit an improved complexity overall.

Other limitations correspond to similar
(non-aligned) approaches for analyzing the computational complexity of 
obtaining explanations~\cite{BaMoPeSu20, waldchen2021computational, 
BaAmKa24}. Firstly, our analysis considers only a worst-case scenario that 
may 
change under various parameter-specific configurations. Secondly, the natural 
subjectivity of interpretability makes it challenging to analyze the 
computational complexity of interpreting a model in a single ``correct'' way. 
To address this issue, theoretical frameworks define various explainability 
queries and evaluate them separately. We regard our proof for a wide range of 
explainability queries $\mathbf{Q}$ (the abstract query form) as potential 
evidence that the shared 
characteristics among different types of explainability queries can be utilized 
to offer more generalized assessments. 
%
%

Finally, we highlight that our study primarily concentrates on an OOD detector 
$\pi(\x)$, which classifies each input as either in-distribution or OOD, rather 
than on the input distribution $p_{\theta}(\x)$ itself. This approach is due to 
the strictly formal nature of the explanations we investigate; an explanation 
is either valid or not, necessitating a definitive categorization of the 
presence or absence of each input. In contrast, probabilistic explanation 
forms, such as $\delta$-relevant sets~\cite{waldchen2021computational, 
izza2021efficient} or Shapley values~\cite{lundberg2017unified, 
sundararajan2020many}, are defined in relation to the distribution itself and 
can also be assessed based on the computational complexity of obtaining them. 
For instance, a recent study by Marzouk et al.~\cite{marzouk2024tractability} 
explores the computational complexity of calculating Shapley values within 
Markovian distributions. Future research can focus on expanding the strictly 
formal explanation framework discussed here to include 
probabilistic 
explanation forms as well, where complexity assessments would focus directly on 
the input distribution $p_{\theta}(\x)$ rather than on the OOD detector 
$\pi(\x)$. 
Other, broader future work can explore the relation between the 
computational complexity of generating explanations (our current focus) and the 
complexity of the explanations themselves. This can be achieved using various 
tools, such as Kolmogorov complexity. We also cover additional 
extensions of our framework in the appendix.




\section*{Acknowledgments}
This work was partially funded by the European Union (ERC, VeriDeL, 101112713). 
Views and opinions expressed are however those of the author(s) only and do not 
necessarily reflect those of the European Union or the European Research 
Council Executive Agency. Neither the European Union nor the granting authority 
can be held responsible for them.
The work of Amir was further supported by a scholarship from the Clore Israel 
Foundation.



{
\bibliography{references.bib}

\begin{thebibliography}{100}
\providecommand{\natexlab}[1]{#1}
\providecommand{\url}[1]{\texttt{#1}}
\expandafter\ifx\csname urlstyle\endcsname\relax
  \providecommand{\doi}[1]{doi: #1}\else
  \providecommand{\doi}{doi: \begingroup \urlstyle{rm}\Url}\fi

\bibitem[Amir et~al.(2021{\natexlab{a}})Amir, Schapira, and Katz]{AmScKa21}
G.~Amir, M.~Schapira, and G.~Katz.
\newblock {Towards Scalable Verification of Deep Reinforcement Learning}.
\newblock In \emph{Proc. 21st Int. Conf. on Formal Methods in Computer-Aided
  Design (FMCAD)}, pages 193--203, 2021{\natexlab{a}}.

\bibitem[Amir et~al.(2021{\natexlab{b}})Amir, Wu, Barrett, and
  Katz]{AmWuBaKa21}
G.~Amir, H.~Wu, C.~Barrett, and G.~Katz.
\newblock {An SMT-Based Approach for Verifying Binarized Neural Networks}.
\newblock In \emph{Proc. 27th Int. Conf. on Tools and Algorithms for the
  Construction and Analysis of Systems (TACAS)}, pages 203--222,
  2021{\natexlab{b}}.

\bibitem[Amir et~al.(2022)Amir, Zelazny, Katz, and Schapira]{AmZeKaSc22}
G.~Amir, T.~Zelazny, G.~Katz, and M.~Schapira.
\newblock {Verification-Aided Deep Ensemble Selection}.
\newblock In \emph{Proc. 22nd Int. Conf. on Formal Methods in Computer-Aided
  Design (FMCAD)}, pages 27--37, 2022.

\bibitem[Amir et~al.(2023{\natexlab{a}})Amir, Corsi, Yerushalmi, Marzari,
  Harel, Farinelli, and Katz]{AmCoYeMaHaFaKa23}
G.~Amir, D.~Corsi, R.~Yerushalmi, L.~Marzari, D.~Harel, A.~Farinelli, and
  G.~Katz.
\newblock {Verifying Learning-Based Robotic Navigation Systems}.
\newblock In \emph{Proc. 29th Int. Conf. on Tools and Algorithms for the
  Construction and Analysis of Systems (TACAS)}, pages 607--627,
  2023{\natexlab{a}}.

\bibitem[Amir et~al.(2023{\natexlab{b}})Amir, Freund, Katz, Mandelbaum, and
  Refaeli]{AmFrKaMaRe23}
G.~Amir, Z.~Freund, G.~Katz, E.~Mandelbaum, and I.~Refaeli.
\newblock {veriFIRE: Verifying an Industrial, Learning-Based Wildfire Detection
  System}.
\newblock In \emph{Proc. 25th Int. Symposium on Formal Methods (FM)}, pages
  648--656, 2023{\natexlab{b}}.

\bibitem[Amir et~al.(2023{\natexlab{c}})Amir, Maayan, Zelazny, Katz, and
  Schapira]{AmMaZeKaSc23}
G.~Amir, O.~Maayan, T.~Zelazny, G.~Katz, and M.~Schapira.
\newblock {Verifying Generalization in Deep Learning}.
\newblock In \emph{Proc. 35th Int. Conf. on Computer Aided Verification (CAV)},
  pages 438--455, 2023{\natexlab{c}}.

\bibitem[Arenas et~al.(2021{\natexlab{a}})Arenas, Baez, Barcel{\'o}, P{\'e}rez,
  and Subercaseaux]{ArBaBaPeSu21}
M.~Arenas, D.~Baez, P.~Barcel{\'o}, J.~P{\'e}rez, and B.~Subercaseaux.
\newblock {Foundations of Symbolic Languages for Model Interpretability}.
\newblock In \emph{Proc. 34th Int. Conf. on Advances in Neural Information
  Processing Systems (NeurIPS)}, pages 11690--11701, 2021{\natexlab{a}}.

\bibitem[Arenas et~al.(2021{\natexlab{b}})Arenas, Barcel{\'o}, Bertossi, and
  Monet]{arenas2021tractability}
M.~Arenas, P.~Barcel{\'o}, L.~Bertossi, and M.~Monet.
\newblock {The Tractability of SHAP-Score-Based Explanations for Classification
  over Deterministic and Decomposable Boolean Circuits}.
\newblock In \emph{Proc. 35th AAAI Conf. on Artificial Intelligence}, pages
  6670--6678, 2021{\natexlab{b}}.

\bibitem[Arenas et~al.(2022)Arenas, Barcel{\'o}, Romero~Orth, and
  Subercaseaux]{arenas2022computing}
M.~Arenas, P.~Barcel{\'o}, M.~Romero~Orth, and B.~Subercaseaux.
\newblock {On Computing Probabilistic Explanations for Decision Trees}.
\newblock In \emph{Proc. 35th Int. Conf. on Advances in Neural Information
  Processing Systems (NeurIPS)}, pages 28695--28707, 2022.

\bibitem[Arora and Barak(2009)]{arora2009computational}
S.~Arora and B.~Barak.
\newblock \emph{{Computational Complexity: A Modern Approach}}.
\newblock Cambridge University Press, 2009.

\bibitem[Audemard et~al.(2023)Audemard, Lagniez, Marquis, and
  Szczepanski]{audemard2023computing}
G.~Audemard, J.~Lagniez, P.~Marquis, and N.~Szczepanski.
\newblock {Computing Abductive Explanations for Boosted Trees}.
\newblock In \emph{Proc. 26th Int. Conf. on Artificial Intelligence and
  Statistics (AISTATS)}, 2023.

\bibitem[Barcel{\'o} et~al.(2020)Barcel{\'o}, Monet, P{\'e}rez, and
  Subercaseaux]{BaMoPeSu20}
P.~Barcel{\'o}, M.~Monet, J.~P{\'e}rez, and B.~Subercaseaux.
\newblock {Model Interpretability through the Lens of Computational
  Complexity}.
\newblock In \emph{Proc. 33rd Int. Conf. on Advances in Neural Information
  Processing Systems (NeurIPS)}, pages 15487--15498, 2020.

\bibitem[Bassan and Katz(2023)]{BaKa23}
S.~Bassan and G.~Katz.
\newblock {Towards Formal Approximated Minimal Explanations of Neural
  Networks}.
\newblock In \emph{Proc. 29th Int. Conf. on Tools and Algorithms for the
  Construction and Analysis of Systems (TACAS)}, pages 187--207, 2023.

\bibitem[Bassan et~al.(2023)Bassan, Amir, Corsi, Refaeli, and
  Katz]{BaAmCoReKa23}
S.~Bassan, G.~Amir, D.~Corsi, I.~Refaeli, and G.~Katz.
\newblock {Formally Explaining Neural Networks within Reactive Systems}.
\newblock In \emph{Proc. 23rd Int. Conf. on Formal Methods in Computer-Aided
  Design (FMCAD)}, pages 10--22, 2023.

\bibitem[Bassan et~al.(2024)Bassan, Amir, and Katz]{BaAmKa24}
S.~Bassan, G.~Amir, and G.~Katz.
\newblock {Local vs. Global Interpretability: A Computational Complexity
  Perspective}.
\newblock In \emph{Proc. 41st Int. Conf. on Machine Learning (ICML)}, 2024.

\bibitem[Berend et~al.(2020)Berend, Xie, Ma, Zhou, Liu, Xu, and
  Zhao]{BeXiMaZhLiXuZh20}
D.~Berend, X.~Xie, L.~Ma, L.~Zhou, Y.~Liu, C.~Xu, and J.~Zhao.
\newblock {Cats are not Fish: Deep Learning Testing Calls for
  Out-of-Distribution Awareness}.
\newblock In \emph{Proc. 35th IEEE/ACM Int. Conf. on Automated Software
  Engineering (ASE)}, pages 1041--1052, 2020.

\bibitem[Boumazouza et~al.(2021)Boumazouza, Cheikh-Alili, Mazure, and
  Tabia]{boumazouza2021asteryx}
R.~Boumazouza, F.~Cheikh-Alili, B.~Mazure, and K.~Tabia.
\newblock {ASTERYX: A Model-Agnostic SAT-Based Approach for Symbolic and
  Score-Based Explanations}.
\newblock In \emph{Proc. 30th ACM Int. Conf. on Information \& Knowledge
  Management (CIKM)}, pages 120--129, 2021.

\bibitem[Bunel et~al.(2018)Bunel, Turkaslan, Torr, Kohli, and
  Mudigonda]{BuTuToKoMu18}
R.~Bunel, I.~Turkaslan, P.~Torr, P.~Kohli, and P.~Mudigonda.
\newblock {A Unified View of Piecewise Linear Neural Network Verification}.
\newblock In \emph{Proc. 32nd Conf. on Neural Information Processing Systems
  (NeurIPS)}, pages 4795--4804, 2018.

\bibitem[Casadio et~al.(2022)Casadio, Komendantskaya, Daggitt, Kokke, Katz,
  Amir, and Refaeli]{CaKoDaKoKaAmRe22}
M.~Casadio, E.~Komendantskaya, M.~Daggitt, W.~Kokke, G.~Katz, G.~Amir, and
  I.~Refaeli.
\newblock {Neural Network Robustness as a Verification Property: A Principled
  Case Study}.
\newblock In \emph{Proc. 34th Int. Conf. on Computer Aided Verification (CAV)},
  pages 219--231, 2022.

\bibitem[Chang et~al.(2019)Chang, Creager, Goldenberg, and
  Duvenaud]{chang2018explaining}
C.~Chang, E.~Creager, A.~Goldenberg, and D.~Duvenaud.
\newblock {Explaining Image Classifiers by Counterfactual Generation}.
\newblock In \emph{Proc. 7th Int. Conf. on Learning Representations (ICLR)},
  2019.

\bibitem[Cooper and Amgoud(2023)]{cooper2023abductive}
M.~Cooper and L.~Amgoud.
\newblock {Abductive Explanations of Classifiers under Constraints: Complexity
  and Properties}.
\newblock In \emph{26th European Conf. on Artificial Intelligence (ECAI)},
  2023.

\bibitem[Corsi et~al.(2022)Corsi, Yerushalmi, Amir, Farinelli, Harel, and
  Katz]{CoYeAmFaHaKa22}
D.~Corsi, R.~Yerushalmi, G.~Amir, A.~Farinelli, D.~Harel, and G.~Katz.
\newblock {Constrained Reinforcement Learning for Robotics via Scenario-Based
  Programming}, 2022.
\newblock Technical Report. \url{https://arxiv.org/abs/2206.09603}.

\bibitem[Corsi et~al.(2024{\natexlab{a}})Corsi, Amir, Katz, and
  Farinelli]{CoAmKaFa24}
D.~Corsi, G.~Amir, G.~Katz, and A.~Farinelli.
\newblock {Analyzing Adversarial Inputs in Deep Reinforcement Learning},
  2024{\natexlab{a}}.
\newblock Technical Report. \url{https://arxiv.org/abs/2402.05284}.

\bibitem[Corsi et~al.(2024{\natexlab{b}})Corsi, Amir, Rodr\'iguez, S\'anchez,
  Katz, and Fox]{CoAmRoSaKaFo24}
D.~Corsi, G.~Amir, A.~Rodr\'iguez, C.~S\'anchez, G.~Katz, and R.~Fox.
\newblock {Verification-Guided Shielding for Deep Reinforcement Learning}.
\newblock In \emph{Proc. 1st Int. Reinforcement Learning Conf. (RLC)},
  2024{\natexlab{b}}.

\bibitem[Darwiche and Hirth(2020)]{darwiche2020reasons}
A.~Darwiche and A.~Hirth.
\newblock {On the Reasons Behind Decisions}.
\newblock In \emph{Proc. 23rd European Conf. on Artificial Intelligence
  (ECAI)}, pages 712--720, 2020.

\bibitem[Darwiche and Hirth(2023)]{darwiche2023complete}
A.~Darwiche and A.~Hirth.
\newblock {On the (Complete) Reasons Behind Decisions}.
\newblock \emph{Journal of Logic, Language and Information}, 32\penalty0
  (1):\penalty0 63--88, 2023.

\bibitem[Darwiche and Ji(2022)]{darwiche2022computation}
A.~Darwiche and C.~Ji.
\newblock {On the Computation of Necessary and Sufficient Explanations}.
\newblock In \emph{Proc. 36th AAAI Conf. on Artificial Intelligence}, pages
  5582--5591, 2022.

\bibitem[Darwiche and Marquis(2002)]{darwiche2002knowledge}
A.~Darwiche and P.~Marquis.
\newblock {A Knowledge Compilation Map}.
\newblock \emph{Journal of Artificial Intelligence Research (JAIR)},
  17:\penalty0 229--264, 2002.

\bibitem[Ehlers(2017)]{Eh17}
R.~Ehlers.
\newblock {Formal Verification of Piece-Wise Linear Feed-Forward Neural
  Networks}.
\newblock In \emph{Proc. 15th Int. Symp. on Automated Technology for
  Verification and Analysis (ATVA)}, pages 269--286, 2017.

\bibitem[Elboher et~al.(2020)Elboher, Gottschlich, and Katz]{ElGoKa20}
Y.~Elboher, J.~Gottschlich, and G.~Katz.
\newblock {An Abstraction-Based Framework for Neural Network Verification}.
\newblock In \emph{Proc. 32nd Int. Conf. on Computer Aided Verification (CAV)},
  pages 43--65, 2020.

\bibitem[Fagin(1974)]{Fa74}
R.~Fagin.
\newblock {Generalized First-Order Spectra and Polynomial-Time Recognizable
  Sets}.
\newblock \emph{Complexity of Computation}, 7:\penalty0 43--73, 1974.

\bibitem[Fang et~al.(2022)Fang, Li, Lu, Dong, Han, and Liu]{FaLiLuDoHaLi22}
Z.~Fang, Y.~Li, J.~Lu, J.~Dong, B.~Han, and F.~Liu.
\newblock {Is Out-of-Distribution Detection Learnable?}
\newblock In \emph{Proc. 36th Int. Conf. on Advances in Neural Information
  Processing Systems (NeurIPS)}, 2022.

\bibitem[Fong and Vedaldi(2017)]{fong2017interpretable}
R.~Fong and A.~Vedaldi.
\newblock {Interpretable Explanations of Black Boxes by Meaningful
  Perturbation}.
\newblock In \emph{Proc. IEEE Int. Conf. on Computer Vision (ICCV)}, pages
  3429--3437, 2017.

\bibitem[Gardner and Dorling(1998)]{GaDo98}
M.~Gardner and S.~Dorling.
\newblock {Artificial Neural Networks (the Multilayer Perceptron)— a Review
  of Applications in the Atmospheric Sciences}.
\newblock \emph{Atmospheric Environment}, 32\penalty0 (14-15):\penalty0
  2627--2636, 1998.

\bibitem[Gehr et~al.(2018)Gehr, Mirman, Drachsler-Cohen, Tsankov, Chaudhuri,
  and Vechev]{GeMiDrTsChVe18}
T.~Gehr, M.~Mirman, D.~Drachsler-Cohen, E.~Tsankov, S.~Chaudhuri, and
  M.~Vechev.
\newblock {AI2: Safety and Robustness Certification of Neural Networks with
  Abstract Interpretation}.
\newblock In \emph{Proc. 39th IEEE Symposium on Security and Privacy (S\&P)},
  2018.

\bibitem[Gorji and Rubin(2022)]{GoRu22}
N.~Gorji and S.~Rubin.
\newblock {Sufficient Reasons for Classifier Decisions in the Presence of
  Domain Constraints}.
\newblock In \emph{Proc. 36th AAAI Conf. on Artificial Intelligence}, pages
  5660--5667, 2022.

\bibitem[Halpern and Pearl(2005)]{halpern2005causes}
J.~Halpern and J.~Pearl.
\newblock {Causes and Explanations: A Structural-Model Approach. Part I:
  Causes}.
\newblock \emph{The British Journal for the Philosophy of Science}, 2005.

\bibitem[Hase et~al.(2021)Hase, Xie, and Bansal]{HaXiBa21}
P.~Hase, H.~Xie, and M.~Bansal.
\newblock {The Out-of-Distribution Problem in Explainability and Search Methods
  for Feature Importance Explanations}.
\newblock In \emph{Proc. 34th Int. Conf. on Advances in Neural Information
  Processing Systems (NeurIPS)}, pages 3650--3666, 2021.

\bibitem[Hooker et~al.(2019)Hooker, Erhan, Kindermans, and
  Kim]{hooker2019benchmark}
S.~Hooker, D.~Erhan, P.~Kindermans, and B.~Kim.
\newblock {A Benchmark for Interpretability Methods in Deep Neural Networks}.
\newblock In \emph{Proc. 32nd Int. Conf. on Advances in Neural Information
  Processing Systems (NeurIPS)}, 2019.

\bibitem[Hsieh et~al.(2021)Hsieh, Yeh, Liu, Ravikumar, Kim, Kumar, and
  Hsieh]{hsieh2020evaluations}
C.~Hsieh, C.~Yeh, X.~Liu, P.~Ravikumar, S.~Kim, S.~Kumar, and C.~Hsieh.
\newblock {Evaluations and Methods for Explanation through Robustness
  Analysis}.
\newblock In \emph{Proc. 9th Int. Conf. on Learning Representations (ICLR)},
  2021.

\bibitem[Hsu et~al.(2020)Hsu, Shen, Jin, and Kira]{HsShJiKi20}
Y.~Hsu, Y.~Shen, H.~Jin, and Z.~Kira.
\newblock {Generalized ODIN: Detecting Out-of-Distribution Image Without
  Learning From Out-of-Distribution Data}.
\newblock In \emph{Proc. IEEE/CVF Conf. on Computer Vision and Pattern
  Recognition (CVPR)}, 2020.

\bibitem[Huang and Marques-Silva(2023)]{HuMa23}
X.~Huang and J.~Marques-Silva.
\newblock {From Robustness to Explainability and Back Again}, 2023.
\newblock Technical Report. \url{https://arxiv.org/abs/2306.03048}.

\bibitem[Huang et~al.(2020)Huang, Kroening, Ruan, Sharp, Sun, Thamo, Wu, and
  Yi]{huang2020survey}
X.~Huang, D.~Kroening, W.~Ruan, J.~Sharp, Y.~Sun, E.~Thamo, M.~Wu, and X.~Yi.
\newblock {A Survey of Safety and Trustworthiness of Deep Neural Networks:
  Verification, Testing, Adversarial attack and Defence, and Interpretability}.
\newblock \emph{Computer Science Review}, 37:\penalty0 100270, 2020.

\bibitem[Huang et~al.(2021)Huang, Izza, Ignatiev, and
  Marques-Silva]{huang2021efficiently}
X.~Huang, Y.~Izza, A.~Ignatiev, and J.~Marques-Silva.
\newblock {On Efficiently Explaining Graph-Based Classifiers}, 2021.
\newblock Technical Report. \url{https://arxiv.org/abs/2106.01350}.

\bibitem[Huang et~al.(2023)Huang, Cooper, Morgado, Planes, and
  Marques-Silva]{huang2023feature}
X.~Huang, M.~Cooper, A.~Morgado, J.~Planes, and J.~Marques-Silva.
\newblock {Feature Necessity \& Relevancy in ML Classifier Explanations}.
\newblock In \emph{Proc. 29th Int. Conf. on Tools and Algorithms for the
  Construction and Analysis of Systems (TACAS)}, pages 167--186, 2023.

\bibitem[Ignatiev(2020)]{ignatiev2020towards}
A.~Ignatiev.
\newblock {Towards Trustable Explainable AI}.
\newblock In \emph{Proc. 29th Int. Joint Conf. on Artificial Intelligence
  (IJCAI)}, pages 5154--5158, 2020.

\bibitem[Ignatiev et~al.(2019{\natexlab{a}})Ignatiev, Narodytska, and
  Marques-Silva]{ignatiev2019abduction}
A.~Ignatiev, N.~Narodytska, and J.~Marques-Silva.
\newblock {Abduction-Based Explanations for Machine Learning Models}.
\newblock In \emph{Proc. 33rd AAAI Conf. on Artificial Intelligence}, pages
  1511--1519, 2019{\natexlab{a}}.

\bibitem[Ignatiev et~al.(2019{\natexlab{b}})Ignatiev, Narodytska, and
  Marques-Silva]{ignatiev2019relating}
A.~Ignatiev, N.~Narodytska, and J.~Marques-Silva.
\newblock {On Relating Explanations and Adversarial Examples}.
\newblock In \emph{Proc. 32nd Int. Conf. on Advances in Neural Information
  Processing Systems (NeurIPS)}, 2019{\natexlab{b}}.

\bibitem[Ignatiev et~al.(2020)Ignatiev, Narodytska, Asher, and
  Marques-Silva]{ignatiev2020contrastive}
A.~Ignatiev, N.~Narodytska, N.~Asher, and J.~Marques-Silva.
\newblock {From Contrastive to Abductive Explanations and Back Again}.
\newblock In \emph{Proc. Int. Conf. Italian Association for Artificial
  Intelligence}, 2020.

\bibitem[Ignatiev et~al.(2022)Ignatiev, Izza, Stuckey, and
  Marques-Silva]{IgIzStMa22}
A.~Ignatiev, Y.~Izza, P.~Stuckey, and J.~Marques-Silva.
\newblock {Using MaxSAT for Efficient Explanations of Tree Ensembles}.
\newblock In \emph{Proc. 36th AAAI Conf. on Artificial Intelligence}, pages
  3776--3785, 2022.

\bibitem[Izza and Marques-Silva(2021)]{izza2021explaining}
Y.~Izza and J.~Marques-Silva.
\newblock {On Explaining Random Forests with SAT}.
\newblock In \emph{Proc. 30th Int. Joint Conf. on Artificial Intelligence
  (IJCAI)}, 2021.

\bibitem[Izza et~al.(2021)Izza, Ignatiev, Narodytska, Cooper, and
  Marques-Silva]{izza2021efficient}
Y.~Izza, A.~Ignatiev, N.~Narodytska, M.~Cooper, and J.~Marques-Silva.
\newblock {Efficient Explanations with Relevant Sets}, 2021.
\newblock Technical Report. \url{https://arxiv.org/abs/2106.00546}.

\bibitem[Izza et~al.(2022)Izza, Ignatiev, and Marques-Silva]{IzIgMa22}
Y.~Izza, A.~Ignatiev, and J.~Marques-Silva.
\newblock {On Tackling Explanation Redundancy in Decision Trees}.
\newblock \emph{Journal of Artificial Intelligence Research (JAIR)},
  75:\penalty0 261--321, 2022.

\bibitem[Izza et~al.(2024)Izza, Huang, Morgado, Planes, Ignatiev, and
  Marques-Silva]{izza2024distance}
Y.~Izza, X.~Huang, A.~Morgado, J.~Planes, A.~Ignatiev, and J.~Marques-Silva.
\newblock {Distance-Restricted Explanations: Theoretical Underpinnings \&
  Efficient Implementation}, 2024.
\newblock Technical Report. \url{https://arxiv.org/abs/2405.08297}.

\bibitem[Jacoby et~al.(2020)Jacoby, Barrett, and Katz]{JaBaKa20}
Y.~Jacoby, C.~Barrett, and G.~Katz.
\newblock {Verifying Recurrent Neural Networks using Invariant Inference}.
\newblock In \emph{Proc. 18th Int. Symposium on Automated Technology for
  Verification and Analysis (ATVA)}, pages 57--74, 2020.

\bibitem[Janzing et~al.(2020)Janzing, Minorics, and
  Bl{\"o}baum]{janzing2020feature}
D.~Janzing, L.~Minorics, and P.~Bl{\"o}baum.
\newblock {Feature Relevance Quantification in Explainable AI: A Causal
  Problem}.
\newblock In \emph{Proc. 23rd Int. Conf. on Artificial Intelligence and
  Statistics (AISTATS)}, pages 2907--2916, 2020.

\bibitem[Katz et~al.(2017)Katz, Barrett, Dill, Julian, and
  Kochenderfer]{KaBaDiJuKo17}
G.~Katz, C.~Barrett, D.~Dill, K.~Julian, and M.~Kochenderfer.
\newblock {Reluplex: An Efficient SMT Solver for Verifying Deep Neural
  Networks}.
\newblock In \emph{Proc. 29th Int. Conf. on Computer Aided Verification (CAV)},
  pages 97--117, 2017.

\bibitem[Katz et~al.(2019)Katz, Huang, Ibeling, Julian, Lazarus, Lim, Shah,
  Thakoor, Wu, Zelji\'c, Dill, Kochenderfer, and
  Barrett]{KaHuIbJuLaLiShThWuZeDiKoBa19}
G.~Katz, D.~Huang, D.~Ibeling, K.~Julian, C.~Lazarus, R.~Lim, P.~Shah,
  S.~Thakoor, H.~Wu, A.~Zelji\'c, D.~Dill, M.~Kochenderfer, and C.~Barrett.
\newblock {The Marabou Framework for Verification and Analysis of Deep Neural
  Networks}.
\newblock In \emph{Proc. 31st Int. Conf. on Computer Aided Verification (CAV)},
  pages 443--452, 2019.

\bibitem[Kim et~al.(2020)Kim, Yi, Kim, and Yoon]{kim2020interpretation}
S.~Kim, J.~Yi, E.~Kim, and S.~Yoon.
\newblock {Interpretation of NLP Models Through Input Marginalization}.
\newblock In \emph{Proc. Conf. on Empirical Methods in Natural Language
  Processing (EMNLP)}, 2020.

\bibitem[K\"{o}nighofer et~al.(2020)K\"{o}nighofer, Lorber, Jansen, and
  Bloem]{KoLoJaBl20}
B.~K\"{o}nighofer, F.~Lorber, N.~Jansen, and R.~Bloem.
\newblock {Shield Synthesis for Reinforcement Learning}.
\newblock In \emph{Proc. Int. Symposium on Leveraging Applications of Formal
  Methods, Verification and Validation (ISoLA)}, pages 290--306, 2020.

\bibitem[La~Malfa et~al.(2021)La~Malfa, Zbrzezny, Michelmore, Paoletti, and
  Kwiatkowska]{LaZbMiPaKw21}
E.~La~Malfa, A.~Zbrzezny, R.~Michelmore, N.~Paoletti, and M.~Kwiatkowska.
\newblock {On Guaranteed Optimal Robust Explanations for NLP Models}.
\newblock In \emph{Proc. 30th Int. Joint Conf. on Artificial Intelligence
  (IJCAI)}, 2021.

\bibitem[Lee(1959)]{Le59}
C.~Lee.
\newblock {Representation of Switching Circuits by Binary-Decision Programs}.
\newblock \emph{The Bell System Technical Journal}, 38\penalty0 (4):\penalty0
  985--999, 1959.

\bibitem[Liang et~al.(2021)Liang, Huang, Lai, and Ruan]{LiHuLaRu21}
C.~Liang, P.~Huang, W.~Lai, and Z.~Ruan.
\newblock {GAN-Based Out-of-Domain Detection Using Both In-Domain and
  Out-of-Domain Samples}.
\newblock In \emph{Proc. IEEE Int. Conf. on Acoustics, Speech and Signal
  Processing (ICASSP)}, pages 7663--7667, 2021.

\bibitem[Lundberg and Lee(2017)]{lundberg2017unified}
S.~Lundberg and S.~Lee.
\newblock {A Unified Approach to Interpreting Model Predictions}.
\newblock In \emph{Proc. 30th Int. Conf. on Advances in Neural Information
  Processing Systems (NeurIPS)}, 2017.

\bibitem[Lyu et~al.(2020)Lyu, Ko, Kong, Wong, Lin, and Daniel]{LyKoKoWoLiDa20}
Z.~Lyu, C.~Ko, Z.~Kong, N.~Wong, D.~Lin, and L.~Daniel.
\newblock {Fastened Crown: Tightened Neural Network Robustness Certificates}.
\newblock In \emph{Proc. 34th AAAI Conf. on Artificial Intelligence (AAAI)},
  pages 5037--5044, 2020.

\bibitem[Mandal et~al.(2024{\natexlab{a}})Mandal, Amir, Wu, Daukantas, Newell,
  Ravaioli, Meng, Durling, Ganai, Shim, Katz, and
  Barrett]{MaAmWuDaNeRaMeDuGaShKaBa24}
U.~Mandal, G.~Amir, H.~Wu, I.~Daukantas, F.~Newell, U.~Ravaioli, B.~Meng,
  M.~Durling, M.~Ganai, T.~Shim, G.~Katz, and C.~Barrett.
\newblock {Formally Verifying Deep Reinforcement Learning Controllers with
  Lyapunov Barrier Certificates}.
\newblock In \emph{Proc. 24th Int. Conf. on Formal Methods in Computer-Aided
  Design (FMCAD)}, 2024{\natexlab{a}}.

\bibitem[Mandal et~al.(2024{\natexlab{b}})Mandal, Amir, Wu, Daukantas, Newell,
  Ravaioli, Meng, Durling, Hobbs, Ganai, Shim, Katz, and
  Barrett]{MaAmWuDaNeRaMeDuHoGaShKaBa24}
U.~Mandal, G.~Amir, H.~Wu, I.~Daukantas, F.~Newell, U.~Ravaioli, B.~Meng,
  M.~Durling, K.~Hobbs, M.~Ganai, T.~Shim, G.~Katz, and C.~Barrett.
\newblock {Safe and Reliable Training of Learning-Based Aerospace Controllers}.
\newblock In \emph{Proc. 43rd Digital Avionics Systems Conf. (DASC)},
  2024{\natexlab{b}}.

\bibitem[Marques-Silva and Ignatiev(2022)]{marques2022delivering}
J.~Marques-Silva and A.~Ignatiev.
\newblock {Delivering Trustworthy AI through formal XAI}.
\newblock In \emph{Proc. 36th AAAI Conf. on Artificial Intelligence}, pages
  3806--3814, 2022.

\bibitem[Marques-Silva et~al.(2020)Marques-Silva, Gerspacher, Cooper, Ignatiev,
  and Narodytska]{marques2020explaining}
J.~Marques-Silva, T.~Gerspacher, M.~Cooper, A.~Ignatiev, and N.~Narodytska.
\newblock {Explaining Naive Bayes and Other Linear Classifiers with Polynomial
  Time and Delay}.
\newblock In \emph{Proc. 33rd Int. Conf. on Advances in Neural Information
  Processing Systems (NeurIPS)}, pages 20590--20600, 2020.

\bibitem[Marques-Silva et~al.(2021)Marques-Silva, Gerspacher, Cooper, Ignatiev,
  and Narodytska]{marques2021explanations}
J.~Marques-Silva, T.~Gerspacher, M.~Cooper, A.~Ignatiev, and N.~Narodytska.
\newblock {Explanations for Monotonic Classifiers}.
\newblock In \emph{Proc. 38th Int. Conf. on Machine Learning (ICML)}, 2021.

\bibitem[Marzari et~al.(2023)Marzari, Corsi, Cicalese, and
  Farinelli]{MaCoCiFa23}
L.~Marzari, D.~Corsi, F.~Cicalese, and A.~Farinelli.
\newblock {The \#DNN-Verification problem: Counting Unsafe Inputs for Deep
  Neural Networks}.
\newblock In \emph{Proc. 32nd Int. Joint Conf. on Artificial Intelligence
  (IJCAI)}, 2023.

\bibitem[Marzouk and de~La~Higuera(2024)]{marzouk2024tractability}
R.~Marzouk and C.~de~La~Higuera.
\newblock {On the Tractability of SHAP Explanations under Markovian
  Distributions}, 2024.
\newblock Technical Report. \url{http://arxiv.org/abs/2405.02936}.

\bibitem[Morteza and Li(2022)]{PeYi22}
P.~Morteza and Y.~Li.
\newblock {Provable Guarantees for Understanding Out-of-Distribution
  Detection}.
\newblock In \emph{Proc. 36th AAAI Conf. on Artificial Intelligence}, pages
  7831--7840, 2022.

\bibitem[Poyiadzi et~al.(2020)Poyiadzi, Sokol, Santos-Rodriguez, De~Bie, and
  Flach]{poyiadzi2020face}
R.~Poyiadzi, K.~Sokol, R.~Santos-Rodriguez, T.~De~Bie, and P.~Flach.
\newblock {FACE: Feasible and Actionable Counterfactual Explanations}.
\newblock In \emph{Proc. AAAI/ACM Conf. on AI, Ethics, and Society (AIES)},
  2020.

\bibitem[Ralston et~al.(2003)Ralston, Reilly, and Hemmendinger]{RaReHe03}
A.~Ralston, E.~Reilly, and D.~Hemmendinger.
\newblock \emph{{Encyclopedia of Computer Science}}.
\newblock John Wiley and Sons Ltd., 2003.

\bibitem[Ramchoun et~al.(2016)Ramchoun, Ghanou, Ettaouil, and Amine
  Janati~Idrissi]{RaGhEtAm16}
H.~Ramchoun, Y.~Ghanou, M.~Ettaouil, and M.~Amine Janati~Idrissi.
\newblock {Multilayer Perceptron: Architecture Optimization and Training}.
\newblock \emph{Int. Journal of Interactive Multimedia and Artificial
  Intelligence}, 2016.

\bibitem[Ribeiro et~al.(2018)Ribeiro, Singh, and Guestrin]{ribeiro2018anchors}
M.~Ribeiro, S.~Singh, and C.~Guestrin.
\newblock {Anchors: High-Precision Model-Agnostic Explanations}.
\newblock In \emph{Proc. 32nd AAAI Conf. on Artificial Intelligence}, 2018.

\bibitem[Rodriguez et~al.(2024)Rodriguez, Amir, Corsi, Sanchez, and
  Katz]{RoAmCoSaKa24}
A.~Rodriguez, G.~Amir, D.~Corsi, C.~Sanchez, and G.~Katz.
\newblock {Shield Synthesis for LTL Modulo Theories }, 2024.
\newblock Technical Report. \url{https://arxiv.org/abs/2406.04184}.

\bibitem[Sanyal and Ren(2021)]{sanyal2021discretized}
S.~Sanyal and X.~Ren.
\newblock {Discretized Integrated Gradients for Explaining Language Models}.
\newblock In \emph{Proc. Conf. on Empirical Methods in Natural Language
  Processing (EMNLP)}, 2021.

\bibitem[Serr{\`a} et~al.(2019)Serr{\`a}, {\'A}lvarez, G{\'o}mez, Slizovskaia,
  N{\'u}{\~n}ez, and Luque]{SeAlGoSlNuLu19}
J.~Serr{\`a}, D.~{\'A}lvarez, V.~G{\'o}mez, O.~Slizovskaia, J.~N{\'u}{\~n}ez,
  and J.~Luque.
\newblock {Input Complexity and Out-of-Distribution Detection with
  Likelihood-Based Generative Models}.
\newblock In \emph{Proc. 7th Int. Conf. on Learning Representations (ICLR)},
  2019.

\bibitem[Shih et~al.(2018)Shih, Choi, and Darwiche]{shih2018formal}
A.~Shih, A.~Choi, and A.~Darwiche.
\newblock {Formal Verification of Bayesian Network Classifiers}.
\newblock In \emph{Proc. Int. Conf. on Probabilistic Graphical Models (PGM)},
  pages 427--438, 2018.

\bibitem[Sun et~al.(2019)Sun, Khedr, and Shoukry]{SuKhSh19}
X.~Sun, H.~Khedr, and Y.~Shoukry.
\newblock {Formal Verification of Neural Network Controlled Autonomous
  Systems}.
\newblock In \emph{Proc. 22nd ACM Int. Conf. on Hybrid Systems: Computation and
  Control (HSCC)}, 2019.

\bibitem[Sundararajan and Najmi(2020)]{sundararajan2020many}
M.~Sundararajan and A.~Najmi.
\newblock {The Many Shapley Values for Model Explanation}.
\newblock In \emph{Proc. 37th Int. Conf. on Machine Learning (ICML)}, pages
  9269--9278, 2020.

\bibitem[Sundararajan et~al.(2017)Sundararajan, Taly, and
  Yan]{sundararajan2017axiomatic}
M.~Sundararajan, A.~Taly, and Q.~Yan.
\newblock {Axiomatic Attribution for Deep Networks}.
\newblock In \emph{Proc. 34th Int. Conf. on Machine Learning (ICML)}, 2017.

\bibitem[Taufiq et~al.(2023)Taufiq, Bl{\"o}baum, and
  Minorics]{taufiq2023manifold}
M.~Taufiq, P.~Bl{\"o}baum, and L.~Minorics.
\newblock {Manifold Restricted Interventional Shapley Values}.
\newblock In \emph{Proc. 26th Int. Conf. on Artificial Intelligence and
  Statistics (AISTATS)}, 2023.

\bibitem[Tran et~al.(2020)Tran, Bak, and Johnson]{TrBkJo20}
H.~Tran, S.~Bak, and T.~Johnson.
\newblock {Verification of Deep Convolutional Neural Networks Using
  ImageStars}.
\newblock In \emph{Proc. 32nd Int. Conf. on Computer Aided Verification (CAV)},
  pages 18--42, 2020.

\bibitem[Vafa et~al.(2021)Vafa, Deng, Blei, and Rush]{vafa2021rationales}
K.~Vafa, Y.~Deng, D.~Blei, and A.~Rush.
\newblock {Rationales for Sequential Predictions}.
\newblock In \emph{Proc. Conf. on Empirical Methods in Natural Language
  Processing (EMNLP)}, 2021.

\bibitem[Van~den Broeck et~al.(2022)Van~den Broeck, Lykov, Schleich, and
  Suciu]{van2022tractability}
G.~Van~den Broeck, A.~Lykov, M.~Schleich, and D.~Suciu.
\newblock {On the Tractability of SHAP Explanations}.
\newblock \emph{Journal of Artificial Intelligence Research (JAIR)},
  74:\penalty0 851--886, 2022.

\bibitem[W{\"a}ldchen et~al.(2021)W{\"a}ldchen, Macdonald, Hauch, and
  Kutyniok]{waldchen2021computational}
S.~W{\"a}ldchen, J.~Macdonald, S.~Hauch, and G.~Kutyniok.
\newblock {The Computational Complexity of Uderstanding Binary Classifier
  Decisions}.
\newblock \emph{Journal of Artificial Intelligence Research (JAIR)},
  70:\penalty0 351--387, 2021.

\bibitem[Wu et~al.(2024{\natexlab{a}})Wu, Isac, Zelji\'c, Tagomori, Daggitt,
  Kokke, Refaeli, Amir, Julian, Bassan, Huang, Lahav, Wu, Zhang,
  Komendantskaya, Katz, and Barrett]{WuIsZeTaDaKoReAmJuBaHuLaWuZhKoKaBa24}
H.~Wu, O.~Isac, A.~Zelji\'c, T.~Tagomori, M.~Daggitt, W.~Kokke, I.~Refaeli,
  G.~Amir, K.~Julian, S.~Bassan, P.~Huang, O.~Lahav, M.~Wu, M.~Zhang,
  E.~Komendantskaya, G.~Katz, and C.~Barrett.
\newblock {Marabou 2.0: A Versatile Formal Analyzer of Neural Networks}.
\newblock In \emph{Proc. 36th Int. Conf. on Computer Aided Verification (CAV)},
  2024{\natexlab{a}}.

\bibitem[Wu et~al.(2024{\natexlab{b}})Wu, Wu, and Barrett]{WuWuBa24}
M.~Wu, H.~Wu, and C.~Barrett.
\newblock {Verix: Towards Verified Explainability of Deep Neural Networks}.
\newblock In \emph{Proc. 36th Int. Conf. on Advances in Neural Information
  Processing Systems (NeurIPS)}, 2024{\natexlab{b}}.

\bibitem[Xuan et~al.(2022)Xuan, Xizhou, Nan, Xing, Lin, Xiaoguang, and
  Ning]{CuCiNaXiLiXiNi22}
X.~Xuan, P.~Xizhou, L.~Nan, H.~Xing, M.~Lin, Z.~Xiaoguang, and D.~Ning.
\newblock {GAN-Based Anomaly Detection: A Review}.
\newblock \emph{Neurocomputing}, 493, 2022.

\bibitem[Yerushalmi et~al.(2022)Yerushalmi, Amir, Elyasaf, Harel, Katz, and
  Marron]{YeAmElHaKaMa22}
R.~Yerushalmi, G.~Amir, A.~Elyasaf, D.~Harel, G.~Katz, and A.~Marron.
\newblock {Scenario-Assisted Deep Reinforcement Learning}.
\newblock In \emph{Proc. 10th Int. Conf. on Model-Driven Engineering and
  Software Development (MODELSWARD)}, pages 310--319, 2022.

\bibitem[Yerushalmi et~al.(2023)Yerushalmi, Amir, Elyasaf, Harel, Katz, and
  Marron]{YeAmElHaKaMa23}
R.~Yerushalmi, G.~Amir, A.~Elyasaf, D.~Harel, G.~Katz, and A.~Marron.
\newblock {Enhancing Deep Reinforcement Learning with Scenario-Based Modeling}.
\newblock \emph{SN Computer Science}, 4\penalty0 (2):\penalty0 156, 2023.

\bibitem[Yi et~al.(2020)Yi, Kim, Kim, and Yoon]{yi2020information}
J.~Yi, E.~Kim, S.~Kim, and S.~Yoon.
\newblock {Information-Theoretic Visual Explanation for Black-Box Classifiers},
  2020.
\newblock Technical Report. \url{https://arxiv.org/abs/2009.11150}.

\bibitem[Yu et~al.(2022)Yu, Ignatiev, Stuckey, Narodytska, and
  Marques-Silva]{yu2022eliminating}
J.~Yu, A.~Ignatiev, P.~Stuckey, N.~Narodytska, and J.~Marques-Silva.
\newblock {Eliminating The Impossible, Whatever Remains Must Be True}, 2022.
\newblock Technical Report. \url{https://arxiv.org/abs/2206.09551}.

\bibitem[Zaidan et~al.(2007)Zaidan, Eisner, and Piatko]{zaidan2007using}
O.~Zaidan, J.~Eisner, and C.~Piatko.
\newblock {Using ``Annotator Rationales” to Improve Machine Learning for Text
  Categorization}.
\newblock In \emph{Proc. Conf. North American Chapter of the Association for
  Computational Linguistics (NAACL)}, pages 260--267, 2007.

\bibitem[Zhang et~al.(2020)Zhang, Shinn, Gupta, Gurfinkel, Le, and
  Narodytska]{ZhShGuGuLeNa20}
H.~Zhang, M.~Shinn, A.~Gupta, A.~Gurfinkel, N.~Le, and N.~Narodytska.
\newblock {Verification of Recurrent Neural Networks for Cognitive Tasks via
  Reachability Analysis}.
\newblock In \emph{Proc. 24th European Conf. on Artificial Intelligence
  (ECAI)}, pages 1690--1697, 2020.

\bibitem[Zhao et~al.(2022)Zhao, Alwidian, and Mahmoud]{ZhAlMa22}
W.~Zhao, S.~Alwidian, and Q.~Mahmoud.
\newblock {Adversarial Training Methods for Deep Learning: A Systematic
  Review}.
\newblock \emph{Algorithms}, 15\penalty0 (8):\penalty0 283, 2022.

\bibitem[Zintgraf et~al.(2017)Zintgraf, Cohen, Adel, and
  Welling]{zintgraf2017visualizing}
L.~Zintgraf, T.~Cohen, T.~Adel, and M.~Welling.
\newblock {Visualizing Deep Neural Network Decisions: Prediction Difference
  Analysis}.
\newblock In \emph{Proc. 7th Int. Conf. on Learning Representations (ICLR)},
  2017.

\end{thebibliography}
} 

\clearpage 
\appendix

\setcounter{definition}{0}
\setcounter{proposition}{0}
\setcounter{theorem}{0}
\setcounter{lemma}{0}


%

\begin{center}\begin{huge} Appendix\end{huge}\end{center}

\noindent{The appendix contains definitions, formalizations, and proofs that 
were mentioned throughout the paper:}

\newlist{MyIndentedList}{itemize}{4}
\setlist[MyIndentedList,1]{%
label={},
noitemsep,
leftmargin=0pt,
}

\begin{MyIndentedList}

\item \textbf{Appendix~\ref{appendix:general_query_form}} describes the 
abstract query form.


\item  \textbf{Appendix~\ref{appendix:models}} 
describes the specific model types, and the 
universal model properties. 

\item \textbf{Appendix~\ref{appendix:model_specific_results}} contains an 
analysis of the model-specific properties. %

\item  \textbf{Appendix~\ref{appendix:main_proofs}} includes the proofs for the 
theorems and propositions mentioned in the paper.

\item  \textbf{Appendix~\ref{appendix:extensions}} includes possible extensions 
of our theoretical framework.

\end{MyIndentedList}

\section{Abstract Query Form}
\label{appendix:general_query_form}

We present a comprehensive analysis of the abstract query form $\mathbf{Q}$ 
discussed in our paper, providing a more detailed explanation of our process. 

\subsection{The Misaligned Case}
Initially, we introduce the query form for the ``misaligned'' case, referring 
to the abstract query form $\mathbf{Q}$ that does not consider the context 
indicator $\pi$ as an input. We formulate this specific scenario and 
demonstrate its applicability in generalizing the various discussed 
explainability queries: \emph{MSR}, \emph{CC}, and \emph{MCR}, all in their 
misaligned versions. Subsequently, we proceed to outline the consequential 
query for the aligned scenario, where $\pi$ is taken into account, dismantling 
the effect of any OOD counterfactual. We then reiterate how the explainability 
queries of \emph{MSR}, \emph{CC}, and \emph{MCR}, when considered in their 
aligned forms, are all specific instances of the abstract query $\mathbf{Q}$.

Let $\psi_{\neg f}(\x,S,\z)$ 
denote the following conjunct:
\begin{equation}
\begin{aligned}
\psi_{\neg f} \coloneqq [f(\x_{\Bar{S}};\z_{S})\neq f(\x)]
\end{aligned}
\end{equation}

Let $\textit{SOL}_{\neg f}$ denote an SOL formula that includes $\psi_{\neg 
f}$, 
where the 
variable $f$ is exclusively present in $\psi_{\neg f}$. In other words, $f$ is 
not found in any other conjunct of $\textit{SOL}_{\neg f}$ apart from 
$\psi_{\neg f}$. We 
denote the (misaligned) version of $\mathbf{Q}$ as any explainability query 
that takes $f$, $x$, and $I$ as inputs, where $I$ represents a set of 
additional arbitrary inputs. The output of $\mathbf{Q}$ is a satisfying 
solution to $\textit{SOL}_{\neg f}$ or the counting of $\#\textit{SOL}_{\neg 
f}$. 
We are now able to formulate the abstract (misaligned) query form $\mathbf{Q}$:

\vspace{0.5em} 

\noindent\fbox{%
\parbox{\columnwidth}{%
	\mysubsection{(Misaligned) $\mathbf{Q}$ (Abstract Query Form)}:
	
	\textbf{Input}: Model $f$, input $\x$, and input $I$.
	
	\textbf{Output}: a \yes{} or \no{} answer, to whether $\textit{SOL}_{\neg 
	f}$ holds, 
	or 
	the number of assignments of $\textit{SOL}_{\neg f}$.
}%
}
\vspace{0.5em} 

It is important to highlight that the values of $S$ and $\z$ are implicitly 
present in $\psi_{\neg f}$. These values can either be included as part of the 
input $I$, or they can be explicitly defined within $\textit{SOL}_{\neg f}$. 
Now, we demonstrate how the aforementioned explainability queries (in their 
misaligned form) can be precisely formulated as specific instances of 
$\mathbf{Q}$. We start by illustrating this for the relatively simpler 
scenarios of \emph{MCR} and \emph{CC}:

\vspace{0.5em} 

\noindent\fbox{%
\parbox{\columnwidth}{%
\mysubsection{(Misaligned) MCR (Minimum Change Required)}:

\textbf{Input}: Model $f$, input $\x$, and input $I\coloneqq\langle k\rangle$.

\textbf{Output}: \yes{}, if $\textit{SOL}_{\neg f}\coloneqq \exists S\subseteq 
(1,\ldots,n) 
\ \ \exists (\z\in \mathbb{F}) \ \psi_{\neg f}(\x,S,\z)\wedge |S|\leq k$ is 
satisfiable, and \no{} otherwise.
}%
}

\vspace{0.5em} 

\noindent\fbox{%
\parbox{\columnwidth}{%
\mysubsection{(Misaligned) CC (Count Completions)}:

\textbf{Input}: Model $f$, input $\x$, and input $I\coloneqq\langle S\rangle$.

\textbf{Output}: The number of assignments of $\textit{SOL}_{\neg f}\coloneqq 
\exists 
(\z\in \mathbb{F}) \ \psi_{\neg f}(\x,S,\z)$.
}%
}
\vspace{0.5em} 

Once again, it is worth noting that in these particular scenarios, the value of 
$S$ is derived from a subset of the input $I$ for the \emph{CC} query, while 
for \emph{MCR}, it is defined within $\textit{SOL}_{\neg f}$. Now, we proceed 
to demonstrate how the sufficiency-based explainability query (\emph{MSR}) can 
also be acquired. In this case, $\textit{SOL}_{\neg f}$ corresponds to the 
\emph{negation} of $\psi_{\neg f}$:

\vspace{0.5em} 

\noindent\fbox{%
\parbox{\columnwidth}{%
\mysubsection{(Misaligned) MSR (Minimum Sufficient Reason)}:

\textbf{Input}: Model $f$, input $\x$, and input $I\coloneqq\langle k\rangle$.

\textbf{Output}: 
\yes{}, if $\textit{SOL}_{\neg f}\coloneqq \exists S\subseteq (1,\ldots,n) \ \ 
\neg \exists 
(\z\in \mathbb{F}) \ \psi_{\neg f}(\x,\Bar{S},\z)\wedge |S|\leq k$ is 
satisfiable, and \no{} otherwise.
}%
} 
\vspace{0.5em} 

It is straightforward to show that this formalization of \emph{MSR} is 
equivalent to its predefined versions, since:

\begin{equation}
\begin{aligned}
\forall (\z\in \mathbb{F}) \quad [f(\x_{S};\z_{\Bar{S}})= f(\x)] \iff \\ \neg 
\exists (\z\in \mathbb{F}) \quad [f(\x_{\Bar{S}};\z_{S})\neq f(\x)] 
\end{aligned}
\end{equation}

In other words, a subset $S$ is sufficient to determine a prediction $f(\x)$ if 
and only if there does not exist any assignment to the complementary $\bar{S}$ 
that is contrastive. This leads us to the fact that there exists a sufficient 
reason of size $k$ if and only if there exists some subset $S$ of size $k$ such 
that no possible assignment to $\bar{S}$ is contrastive. 

We note that when evaluating the complexity of the misaligned abstract 
explainability query $\mathbf{Q}$, we consider it in relation to a single class 
of models. For instance, $\mathbf{Q}(\mathcal{C}_{\mathcal{M}})$ describes the 
complexity of obtaining a (misaligned) explainability query for a model $f\in 
\mathcal{C}_{\mathcal{C}}$ using the (misaligned) abstract query form 
$\mathbf{Q}$. Unlike the aligned version, we do not input two families of 
functions since the context indicator $\pi$ is not included as part of the 
input for these queries.

\subsection{The Aligned Case}
We now proceed to describe the \emph{aligned} version of the abstract query 
form. In this case, we aim to construct a similar abstract query form with the 
additional requirement of neutralizing the influence of any OOD 
counterfactuals. This is obtained by adding an additional constraint, namely 
$[\pi(\x_{\bar{S}};\z_{S})= 1]$. To formally define this, we introduce 
$\psi_{\neg f, \pi}(\x,S,\z)$ as follows:

\begin{equation}
\begin{aligned}
\psi_{\neg f, \pi} \coloneqq [f(\x_{\bar{S}};\z_{S})\neq f(\x)]\wedge 
[\pi(\x_{\bar{S}};\z_{S}) = 1]
\end{aligned}
\end{equation}

Similarly, we define $\textit{SOL}_{\neg f,\pi}$ as any SOL formula that 
includes 
$\psi_{\neg f,\pi}$, where $f$ and $\pi$ are exclusively included within 
$\psi_{\neg f,\pi}$. In other words, $f$ and $\pi$ are not present in any 
conjunct of $\textit{SOL}_{\neg f,\pi}$ except for $\psi_{\neg f,\pi}$. We 
denote the 
(aligned) version of $\mathbf{Q}$ as an explainability query that takes $f$, 
$x$, $\pi$, and $I$ as inputs, where $I$ represents an arbitrary set of 
additional inputs. The output of $\mathbf{Q}$ is a solution (either decision or 
counting), over $\textit{SOL}_{\neg f, \pi}$. Therefore, the abstract query 
form, in this 
case, can be expressed as follows:

\vspace{0.5em} 

\noindent\fbox{%
\parbox{\columnwidth}{%
\mysubsection{$\mathbf{Q}$ (Abstract Query Form)}:

\textbf{Input}:Model $f$, input $\x$, context indicator $\pi$, and input $I$.

\textbf{Output}: a \yes{} or \no{} answer, to whether $\textit{SOL}_{\neg f, 
\pi}$ 
holds, or 
the number of assignments of $\textit{SOL}_{\neg f, \pi}$.
}%
}
\vspace{0.5em} 

We briefly illustrate how this abstract query form encompasses all the 
previously defined explainability queries, including the aligned versions of 
\emph{MSR}, \emph{MCR}, and \emph{CC}. Once again, it is straightforward to 
show that \emph{MCR} and \emph{CC} are instances of this abstract query form 
(this time, in the \emph{aligned} version):

\vspace{0.5em} 

\noindent\fbox{%
\parbox{\columnwidth}{%
\mysubsection{MCR (Minimum Change Required)}:

\textbf{Input}: Model $f$, input $\x$, context indicator $\pi$, and input 
$I\coloneqq\langle k\rangle$.

\textbf{Output}: \yes{}, if $\textit{SOL}_{\neg f, \pi}\coloneqq \exists 
S\subseteq 
(1,\ldots,n) \ \ \exists (\z\in \mathbb{F}) \ \psi_{\neg f, \pi}(\x,S,\z)\wedge 
|S|\leq k$ is satisfiable, and \no{} otherwise.
}
}
\vspace{0.5em} 

\vspace{0.5em} 
\noindent\fbox{%
\parbox{\columnwidth}{%
\mysubsection{CC (Count Completions)}:

\textbf{Input}: Model $f$, input $\x$, context indicator $\pi$, and input 
$I\coloneqq\langle S\rangle$.

\textbf{Output}: The number of assignments of $\textit{SOL}_{\neg f, 
\pi}\coloneqq \exists 
(\z\in \mathbb{F}) \ \psi_{\neg f, \pi}(\x,S,\z)$.
}%
}
\vspace{0.5em} 

Similarly to the misaligned case, the aligned version of the sufficiency-based 
query (\emph{MSR}) can be obtained as follows: 

\vspace{0.5em} 

\noindent\fbox{%
\parbox{\columnwidth}{%
\mysubsection{MSR (Minimum Sufficient Reason)}:

\textbf{Input}: Model $f$, input $\x$, context indicator $\pi$, and input 
$I\coloneqq\langle k\rangle$.

\textbf{Output}: 
\yes{}, if $\textit{SOL}_{\neg f, \pi}\coloneqq \exists S\subseteq (1,\ldots,n) 
\ \ \neg 
\exists (\z\in \mathbb{F}) \ \psi_{\neg f, \pi}(\x,\Bar{S},\z)\wedge |S|\leq k$ 
is satisfiable, and \no{} otherwise.
}%
} 
\vspace{0.5em} 

The equivalence between these specific instances and the predefined 
explainability queries holds in these particular scenarios. This is due to the 
following:

\begin{equation}
\begin{aligned}
\forall (\z\in \mathbb{F}) \quad [\pi(\x_{S};\z_{\Bar{S}})=1 \to 
f(\x_{S};\z_{\Bar{S}})= f(\x)] \iff \\
\neg \exists (\z\in \mathbb{F}) \quad [f(\x_{\Bar{S}};\z_{S})\neq f(\x)]\wedge 
[\pi(\x_{\Bar{S}};\z_{S})=1] 
\end{aligned}
\end{equation}


Recall that in the case of the aligned version of $\mathbf{Q}$, the underlying 
computational complexity is evaluated by considering two classes of models: 
$\mathcal{C}_{\mathcal{M}}$ for the classification model and 
$\mathcal{C}_{\pi}$ for the context indicator. For instance, 
$\mathbf{Q}(\mathcal{C}_{\mathcal{M}}, \mathcal{C}_{\pi})$ represents the 
computational complexity of obtaining an explainability query for models $f\in 
\mathcal{C}_{\mathcal{M}}$ with respect to the context indicators $\pi \in 
\mathcal{C}_{\pi}$ and an abstract query form $\mathbf{Q}$.

\section{Model Types and Universal Properties}
\label{appendix:models}

\subsection{Model Types}
\label{appendix:model_types}
Next, we provide a full description of the models that are taken into account 
within our work.

\mysubsection{Binary Decision Diagram (BDD).} A BDD~\cite{Le59} is a graphical 
representation of a Boolean function $f: \mathbb{F}  \to \{0,1\}$, realized by 
a directed, acyclic graph, for which: 
\begin{inparaenum}[(i)]
\item each internal node $v$ (i.e., non-sink nodes) corresponds to a single 
feature $(1,\ldots,n)$;
\item each internal node $v$ has precisely two output edges, representing the 
values $\{0,1\}$ assigned to $v$;
\item each leaf corresponds to either a true, or false, label; and 
\item each variable appears at most once, along any given path $\alpha$ within 
the BDD.
\end{inparaenum}

Hence, every path $\alpha$ from the root node to a leaf, corresponds to a 
specific input assignment $\x\in\mathbb{F}$, with $f(\x)$ matching the value of 
the leaf of the relevant path $\alpha$. Following previous 
conventions~\cite{BaMoPeSu20,huang2021efficiently,huang2023feature,arenas2022computing},
 we regard the size $|f|$ of the BDD to be the total number of edges. We focus 
on the popular hypothesis class of ``Free BDDs'' (FBDDs), in which different 
paths may have various orderings of the input variables $\{1,\ldots,n \}$. 



\mysubsection{Multi-Layer Perceptron (MLP).} Given a set of $t$ weight matrices 
$W^{(1)},\ldots,W^{(t)}$, $t$ bias vectors $b^{(1)},\ldots,b^{(t)}$ and $t$ 
activation functions $f^{(1)},\ldots,f^{(t)}$, a Multi-Layer Perceptron 
(MLP)~\cite{GaDo98,RaGhEtAm16} $f$, with $t-1$ hidden layers ($h^{j}$ for 
$j\in\{1,\ldots, t-1\}$) and a single output layer ($h^{t}$), is recursively 
defined based on the following series of functions:

\begin{equation}
h^{(j)} \coloneqq \sigma^{(j)}(h^{(j-1)}W^{(j)} + b^{(j)}) \quad (j \in 
\{1,\ldots,t\})
\end{equation}

$f$ outputs the value of the function $f \coloneqq h^{(t)}$, and 
$h^{(0)}\coloneqq \x \in \{0,1\}^n$ corresponds to the input of the model. The 
weight matrices and biases are defined by a series of positive values 
$d_{0},\ldots,d_{t}$ representing the dimensions of their inputs. In addition, 
we assume that all the weights and biases (learned during training) have 
rational values, i.e., $W^{(j)}\in \mathbb{Q}^{d_{j-1}\times d_{j}}$ and 
$b^{(j)}\in \mathbb{Q}^{d_{j}}$. Notice that due to our focus on \emph{binary} 
classifiers over $\{0,1\}^n$, then it holds that: $d_{0} = n$ and $d_{t} = 
1$. Furthermore, we consider the popular $ReLU(x)=\max(0,x)$ activation 
function. The last 
activation of MLPs is typically a sigmoid function, but since we are only 
interested in post-hoc interpretations, we can equivalently, without loss of 
generality, consider 
the last activation to correspond to the \emph{step} function:
\begin{equation}
\label{eq:step-definition}
\begin{aligned}
\text{step}(y) =
\begin{cases}
	1, & \begin{aligned} &y > 0  \end{aligned} \\
	0, & \begin{aligned} &y \leq 0 \end{aligned}
\end{cases}
\end{aligned}
\end{equation}

\mysubsection{Perceptron.} A Perceptron~\cite{RaReHe03} is an MLP with a single 
layer (i.e., $t=1$): $f(\x) = \sigma(\langle W,\x\rangle+b)$, for 
$W\in\mathbb{Q}^{n\times d_{1}}$ and $b\in\mathbb{Q}$. Hence, without loss of 
generality, for a 
Perceptron $f$ it holds that: 

\begin{equation}
f(\x)=1 \iff \langle W,\x\rangle+b > 0
\end{equation}


\subsection{Universal Properties}

Next, we provide the precise formalization for the universal properties over 
$\mathcal{C}_{\mathcal{M}}$ and $\mathcal{C}_{\pi}$ that were mentioned within 
our study. These are that $\mathcal{C}_{\pi}$ is \emph{symmetrically 
constructible}, whereas $\mathcal{C}_{\mathcal{M}}$ is \emph{naively 
construcatble}.

\begin{definition}
A class of functions $\mathcal{C}$ is \textbf{symmetrically constructible} if 
given a model $f\in \mathcal{C}$, then $\neg f\in \mathcal{C}$ can be 
constructed in polynomial time.
\end{definition}

\begin{definition}
A class of functions $\mathcal{C}$ is \textbf{naively constructible} if for any 
value $\x\in \mathbb{F}$, then $\mathbf{1}_{\{\x\}}\in \mathcal{C}$ can be 
constructed in polynomial time.
\end{definition}

\section{Model-Specific Properties}
\label{appendix:model_specific_results}

As mentioned above, our propositions and theorems are based on the universal 
properties formulated in Section~\ref{appendix:models} of the appendix. These 
qualities include symmetric constructability and naive constructability. In 
this section, we 
illustrate how these properties indeed hold for the particular models discussed 
in our work, namely FBDDs, Perceptrons, and MLPs. We emphasize that these are 
only particular illustrations, and that these properties can be proven to hold 
for a broader range of hypothesis classes. First, we recall 
Proposition~\ref{appendix:general_model_form}:

\begin{proposition}
\label{appendix:general_model_form}
FBDDs, Perceptrons, and MLPs are all symmetrically constructible and naively 
constructible.
\end{proposition}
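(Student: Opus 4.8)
The plan is to verify both properties --- symmetric constructibility (producing $\neg f$ in the same class) and naive constructibility (producing the point indicator $\mathbf{1}_{\{\x\}}$ in the same class) --- separately for each of the three classes, treating FBDDs first since they are the most transparent, and exploiting the fact that every Perceptron is a single-layer MLP so that the naive construction for Perceptrons transfers immediately to MLPs.

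For FBDDs, symmetric constructibility is immediate: given an FBDD for $f$, I would simply swap the labels of the two sink leaves (true $\leftrightarrow$ false), which preserves the acyclic, read-once structure and yields an FBDD for $\neg f$ in linear time. For naive constructibility I would build a single path of length $n$ that tests $x_1,\dots,x_n$ in order: at the node for variable $i$, the edge agreeing with $x_i$ proceeds to the node for variable $i+1$, while the disagreeing edge jumps to the false leaf, and after the $n$-th consistent test the path reaches the true leaf. Each variable occurs once along every path, so this is a valid FBDD of size $O(n)$ accepting exactly $\x$.

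For Perceptrons, write $f(\x) = \text{step}(\langle W, \x\rangle + b)$ with rational $W,b$. For symmetric constructibility I would first clear denominators (multiply $W$ and $b$ by their common positive denominator, which does not change the sign of $\langle W,\x\rangle + b$), so that $\langle W, \x\rangle + b$ is integer-valued on $\{0,1\}^n$; then $\neg f(\x)=1 \iff \langle W,\x\rangle+b \le 0 \iff -\langle W,\x\rangle - b + \tfrac12 > 0$, since the argument is an integer. Hence the Perceptron with weights $-W$ and bias $-b+\tfrac12$ computes $\neg f$. For naive constructibility, let $k$ be the number of ones in $\x$ and set $w_i = 1$ when $x_i=1$ and $w_i=-1$ when $x_i=0$; then $\langle W,\z\rangle$ is maximized uniquely at $\z=\x$ with value $k$ and is at most $k-1$ otherwise, so the bias $-k+\tfrac12$ gives $\text{step}(\langle W,\z\rangle - k + \tfrac12) = \mathbf{1}_{\{\x\}}(\z)$. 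Read as a one-layer network, this same construction also witnesses naive constructibility for MLPs.

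It remains to show that MLPs are symmetrically constructible, which I expect to be the only delicate step. Here the output is $f=\text{step}(y)$ where $y=\langle h^{(t-1)}, W^{(t)}\rangle + b^{(t)}$ is the final pre-activation. As in the Perceptron case I would fold the negation into the last layer, replacing $W^{(t)},b^{(t)}$ by $-W^{(t)}, -b^{(t)}+\tfrac{\delta}{2}$, so the new network outputs $\text{step}(-y+\tfrac{\delta}{2})$; this equals $\neg f$ provided $0 < \delta \le \min\{\, y : y > 0,\ \x \in \{0,1\}^n\}$, since then $y \le 0$ forces $-y+\tfrac{\delta}{2} > 0$, while $y > 0$ forces $y \ge \delta$ and hence $-y+\tfrac{\delta}{2} < 0$. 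The obstacle is that, unlike the Perceptron case, $y$ need not be integer-valued, so I must exhibit such a $\delta$ of polynomial bit-length. The standard route is a bit-complexity bound: on binary inputs a ReLU network with rational parameters produces, at every neuron, rationals whose numerators and denominators are bounded by $2^{p(|f|)}$ for a fixed polynomial $p$, since affine maps and ReLU only compose these bounds depth-many times. Consequently any nonzero value of $y$ has magnitude at least $2^{-p'(|f|)}$, and I would take $\delta = 2^{-p'(|f|)}$, which has polynomial size and is read off directly from the network description. This yields $\neg f$ as an MLP of the same architecture in polynomial time, completing all six cases.
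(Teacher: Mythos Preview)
Your proof is correct and matches the paper's argument almost point-for-point for FBDDs and Perceptrons (swap leaf labels / single accepting path; clear denominators and negate with a $\tfrac12$ shift; $\pm 1$ weights with threshold for $\mathbf{1}_{\{\x\}}$). The paper likewise remarks that the Perceptron indicator already witnesses naive constructibility for MLPs.

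The one place your route differs is MLP symmetric constructibility. You invoke a bit-complexity lower bound to produce a polynomial-size $\delta$ with $|y|\ge \delta$ whenever $y\neq 0$, and then negate with a $\tfrac{\delta}{2}$ shift. The paper instead observes that one can rescale the network layer by layer (multiplying weights and biases by suitable common denominators) so that \emph{all} parameters become integers while the final step output is unchanged; on binary inputs every intermediate ReLU activation is then an integer, hence the last pre-activation $y$ is an integer, and a fixed shift of $-\tfrac12$ guarantees $y\neq 0$. After that, simply negating the last layer's weights and bias yields $\neg f$. Both arguments are valid; the paper's is more elementary and constructive (no asymptotic bit-bound needed, just denominator clearing), whereas yours is slightly more general in spirit since it does not touch the earlier layers. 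For naive constructibility of MLPs the paper additionally appeals to the Boolean-circuit-to-MLP translation to realize the conjunction of literals picking out $\x$, but explicitly notes your Perceptron-as-MLP construction suffices as well.
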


\noindent
To this end, we prove the following lemmas.

\begin{lemma}
\label{fbdds_symetrically_constructible}
The class $\mathcal{C}_{\fbdd}$ is naively constructible and symmetrically 
constructible.
\end{lemma}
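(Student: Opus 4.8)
The plan is to verify both constructibility properties for FBDDs directly from the definition of a Free Binary Decision Diagram given in Appendix~\ref{appendix:model_types}. Recall that an FBDD is a directed acyclic graph whose internal nodes are labeled by features, whose edges are labeled by the two Boolean values, and whose leaves carry a \true{}/\false{} label, with the free-ordering property that each variable appears at most once along any root-to-leaf path. The size $|f|$ is the number of edges.

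For symmetric constructibility, the goal is to show that given an FBDD $f$ we can build an FBDD for $\neg f$ in polynomial time. The natural construction is to keep the entire graph structure of $f$ unchanged --- same internal nodes, same feature labels, same edge labels, same free-ordering --- and simply swap the two leaf labels, replacing every \true{} leaf by \false{} and vice versa. I would argue that the resulting graph is still a valid FBDD (all four defining properties are preserved since only leaf labels changed), that it computes exactly $\neg f$ (every input $\x$ traces the same path to the same leaf, whose label is now flipped), and that the transformation touches only the leaves and hence runs in time linear in $|f|$.

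For naive constructibility, the goal is to show that for any $\x\in\mathbb{F}$ the indicator $\mathbf{1}_{\{\x\}}$ can be realized as an FBDD of polynomial size. The plan is to build a ``chain'' FBDD that tests the features $x_1,\ldots,x_n$ in order: at level $i$ there is one decision node on feature $x_i$; the edge labeled by the correct bit $x_i$ proceeds to the node for feature $x_{i+1}$, while the edge labeled by the opposite bit goes to a single shared \false{} leaf. After the node for $x_n$, the ``correct'' edge leads to a \true{} leaf. I would check that this diagram satisfies the FBDD definition (each variable read once, free-ordering trivially holds, two outgoing edges per internal node), that it outputs $1$ precisely on $\x$ and $0$ on every other input, and that it has $n$ internal nodes and $O(n)$ edges, so it is constructible in polynomial time in $n$.

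I do not expect a genuine obstacle here; the lemma is essentially a structural bookkeeping check, and the only mild care needed is to confirm that both constructions respect the free-ordering and single-read constraints that distinguish FBDDs (the negation construction inherits them for free, and the chain construction reads each variable exactly once along its unique branching path). The main point worth stating cleanly is the correctness argument for negation --- that flipping leaf labels commutes with the function computed because the path taken by any input is determined entirely by the internal structure, which is untouched.
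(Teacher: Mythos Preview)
Your proposal is correct and matches the paper's own proof essentially verbatim: the paper negates an FBDD by duplicating it and flipping all leaf labels, and realizes $\mathbf{1}_{\{\x\}}$ by building an FBDD with a single accepting path corresponding to $\x$. Your write-up is in fact more careful than the paper's (you explicitly verify the free-ordering and size bounds), but the approach is identical.
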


It is straightforward to show this in the following manner:
\begin{inparaenum}[(i)]
\item given an FBDD $f$ we can construct $\neg f$ by duplicating $f$ and 
negating all leaf nodes $v$ in the duplicated diagram; and
\item given an input $\x \in \mathbb{F}$ we can simply construct an FBDD $f$ 
with a single accepting path $\alpha$ matching the assignment of 
$\x$.\end{inparaenum} 

\begin{lemma}
\label{MLPs_symetrically_constructible}
The class $\mathcal{C}_{\mlp}$ is naively constructible and symmetrically 
constructible.
\end{lemma}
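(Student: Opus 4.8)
The plan is to establish both universal properties directly from the MLP definition, in which the hidden layers use \relu{} activations and the output neuron applies the step function.

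For \emph{naive constructibility}, I would show that $\mathbf{1}_{\{\x\}}$ is realizable already by a single \perceptron{} (hence by an MLP with $t=1$). Given a target $\x\in\mathbb{F}$, I set the weight vector $w$ with $w_i = 2x_i-1$ and bias $b=\tfrac{1}{2}-\sum_i x_i$, so that $\langle w,\z\rangle + b = a(\z) - n + \tfrac{1}{2}$, where $a(\z)=|\{i : z_i=x_i\}|$ is the number of agreeing coordinates. Since $a(\z)=n$ exactly when $\z=\x$ and $a(\z)\le n-1$ otherwise, the step activation yields $1$ precisely on $\z=\x$. This network is built in time linear in $n$, giving naive constructibility.

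For \emph{symmetric constructibility}, write $f(\x)=\text{step}(\ell(\x))$, where $\ell$ is the affine pre-activation feeding the output neuron on top of the \relu{} layers. I want a network computing $\neg f = \mathbf{1}_{\{\ell(\x)\le 0\}}$. The naive idea---negating the output layer to obtain $\text{step}(-\ell(\x))$---fails on inputs with $\ell(\x)=0$, because the step function is asymmetric at the origin (it maps $y>0$ to $1$ but $y\le 0$ to $0$). The plan is therefore to negate the output weights and bias and additionally shift the bias by a small $\delta>0$, producing $\text{step}(-\ell(\x)+\delta)$. The key observation is that a suitable $\delta$ is computable in polynomial time: since $\mathbb{F}$ is finite and all parameters are rational, every attainable value $\ell(\x)$ is a rational whose denominator divides a number $D$ of bit-length polynomial in the encoding of $f$ (denominators accumulate only through the affine maps, while \relu{} introduces none). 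Hence the nonzero values of $\ell(\x)$ are bounded away from $0$ by $1/D$, and choosing $\delta=1/(2D)$ guarantees $\text{step}(-\ell(\x)+\delta)=1$ iff $\ell(\x)\le 0$, i.e. exactly $\neg f$. As only the output layer is modified, $\neg f\in\mathcal{C}_{\mlp}$ is produced in polynomial time.

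The main obstacle I anticipate is precisely this boundary case in the symmetric construction: one must exhibit a single, efficiently computable threshold $\delta$ that separates $0$ from all realized pre-activations \emph{without} enumerating the exponentially many inputs of $\mathbb{F}$. I would resolve this through the rational bit-complexity bound on $D$ above, which is also exactly what makes the construction genuinely polynomial rather than merely finite. The naive-constructibility half is routine by comparison.
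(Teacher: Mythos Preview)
Your proposal is correct, and the overall strategy---handle the step-function boundary by a computable strictly-positive gap, and realize $\mathbf{1}_{\{\x\}}$ by a single linear threshold---matches the paper's in spirit, though the execution differs in both halves.

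For naive constructibility, the paper invokes the Boolean-circuit-to-MLP reduction (Lemma~\ref{lemma_boolean_circuit}) to build the conjunction of the appropriate literals, whereas you give the direct one-layer construction (which is exactly what the paper uses later for \emph{Perceptrons}, with $w^{+}=1$, $w^{-}=-1$). Your route is more elementary and avoids the external lemma.

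For symmetric constructibility, the paper first rescales all weights and biases to integers (so every pre-activation $\ell(\x)$ is an integer), then shifts the final bias by $-\tfrac{1}{2}$ to guarantee $\ell(\x)\neq 0$, and only then negates the last layer. You instead keep the rational parameters and argue that the common denominator $D$ of all attainable $\ell(\x)$ has polynomial bit-length (since each layer multiplies the running denominator by at most the LCD of that layer's parameters, and \relu{} preserves denominators), so $\delta=1/(2D)$ suffices. The two arguments are really the same observation viewed from opposite ends: the paper clears denominators first so the gap is trivially $1$; you compute the gap directly. Your version is more self-contained, while the paper's integer normalization (borrowed from~\cite{BaMoPeSu20}) makes the bound slightly cleaner to state.
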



MLPs can also be constructed symmetrically and naively in a straightforward 
manner. 
First, we state that for every MLP $f$, we can construct, in linear time, an 
equivalent MLP $f'$, such that the weights and biases are integers (this can be 
achieved by multiplying the values by the lowest common denominator, as done 
in~\cite{BaMoPeSu20}). Next, for the bias in the last layer, we also add 
$-0.5$. 
This procedure guarantees that: (i) for 
every input $x\in\{0,1\}^n$, it holds that $f(x)=f'(x)$, i.e., the new MLP $f'$ 
is equivalent to $f$; and (ii) there is no binary input $x$ 
such that for $f'=step(h'^{(t-1)}W'^{(j)} + b'^{(t)})$ it holds that 
$(h'^{(t-1)}W'^{(j)} + b'^{(t)})(x)=0$, i.e., no input $x$ is \emph{exactly} on 
the decision 
boundary of $f'$ (as all linear combinations of integers --- remain integers, 
and the single bias is not an integer).
Next, symmetric constructability for $f'$ (and hence, for $f$) is acquired as 
follows. We 
can construct $\neg f'$ by negating the weights of the last layer $h'^t$ 
(setting $h'^t_{i}=-h'^t_{i}$ for all $i$) and negating the bias of the output 
layer $b'^t$. Since the last layer contains a single \emph{step} function, 
negating 
the corresponding weights (and bias) will result in a flipped classification. 

To show naive constructability, we make use of the following 
Lemma~\cite{BaMoPeSu20}:

\begin{lemma}
\label{lemma_boolean_circuit}
Given a Boolean circuit $B$, we can construct, in polynomial time, an MLP 
$f_{B}$, which induces an equivalent Boolean function relative to $B$.
\end{lemma}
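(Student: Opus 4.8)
The plan is to show that every Boolean gate can be simulated by a small, fixed-size block of ReLU neurons that maps Boolean inputs to \emph{exact} Boolean outputs, and then to stack these blocks according to the topology of $B$ so that the resulting network computes the same function gate-by-gate. This reduces the lemma to a gadget construction plus a layout argument.

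First I would fix a complete gate basis for $B$; without loss of generality I may assume $B$ is built from the gates $\wedge$, $\vee$, and $\neg$, since any circuit can be rewritten over this basis with only polynomial blow-up. The crucial observation is that, on Boolean arguments, each of these gates is computed exactly by an affine map followed by a single ReLU. For $x,y\in\{0,1\}$ we have $x\wedge y=\relu(x+y-1)$ and $x\vee y=\relu(x+y)-\relu(x+y-1)$, while negation is purely affine, $\neg x=1-x$, and can therefore be absorbed into the weights and bias of whichever neuron consumes its output. In every case the output is again exactly $0$ or $1$, which is precisely what lets the gadgets be composed without any drift away from Boolean values.

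Next I would lay the circuit out in layers. Sorting the gates of $B$ topologically, I assign each gate to a hidden layer equal to its longest-path depth from the inputs, so that a gate in layer $\ell$ depends only on outputs produced in layers strictly below $\ell$. Each gate becomes its $O(1)$-neuron gadget in its layer; fan-out is handled for free, since a neuron's output may be routed to arbitrarily many neurons of the next layer through the weight matrix. The one bookkeeping point is a wire whose source sits in layer $\ell$ but whose target sits in layer $\ell'>\ell+1$: such a value must be carried across the intervening layers, which I would do with identity pass-through neurons, relying on $\relu(v)=v$ for $v\ge 0$ so that Boolean values survive transport unchanged. The output gate's gadget then feeds the final step activation; since the gadget already emits an exact $0$ or $1$ and $\text{step}(0)=0$, $\text{step}(1)=1$, the network reproduces $B(\x)$ on all inputs.

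Finally I would bound the construction. The depth equals the depth of $B$, the width of each layer is at most the number of gates plus the pass-through neurons it carries, so the total neuron count --- and the number of integer (hence rational) weights and biases --- is polynomial in $|B|$; the layout itself is produced by a single topological sort. The main obstacle is not any individual gadget but the global bookkeeping: I must guarantee that every intermediate signal remains \emph{exactly} Boolean throughout the fan-out and pass-through routing, since the correctness of the ReLU gadgets hinges on their inputs being precisely $0$ or $1$, and I must verify that the number of pass-through neurons introduced to transport values across long wires stays polynomial rather than accumulating blow-up.
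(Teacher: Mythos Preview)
Your construction is correct: the ReLU gadgets you give for $\wedge$, $\vee$, $\neg$ do produce exact Boolean outputs on Boolean inputs, the topological layering with pass-through neurons is sound (since $\relu(v)=v$ for $v\in\{0,1\}$), and the size bound is indeed polynomial because the number of pass-through neurons is at most (number of wires) $\times$ (depth) $\leq |B|^2$.

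It is worth noting, however, that the paper does not prove this lemma at all; it simply imports it from~\cite{BaMoPeSu20} as a known fact and uses it as a black box. Your gate-by-gate simulation is essentially the standard argument used in that prior work, so there is no methodological divergence to speak of---you have just supplied the proof that the paper outsources. A small simplification you could make: since $\neg$ is affine and $\wedge$ needs one ReLU, De~Morgan gives $x\vee y = 1-\relu(1-x-y)$ with a single ReLU neuron, which avoids the two-neuron $\vee$ gadget and the bookkeeping of carrying a linear combination of neurons between layers.
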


Hence, as a direct corollary, it is possible to polynomially construct an MLP 
that corresponds to the Boolean circuit representing: $x_1 \wedge x_2 \ldots 
\wedge x_n$.




\begin{lemma}
\label{perceptrons_symetrically_constructible}

The class $\mathcal{C}_{\perceptron}$ is naively constructible and 
symmetrically 
constructible.
\end{lemma}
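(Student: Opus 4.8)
The plan is to establish the two universal properties separately, reusing the integerization device already employed for MLPs in Lemma~\ref{MLPs_symetrically_constructible}, but keeping every construction within a single linear layer so that the result remains a genuine Perceptron rather than a deeper network.

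For \emph{symmetric constructibility}, I would first normalize the weights. Given a Perceptron $f$ with rational $W,b$ satisfying $f(\x)=1 \iff \langle W,\x\rangle + b > 0$, multiply $W$ and $b$ by the least common denominator $L>0$ of all their entries; since $L>0$ this preserves the sign of $\langle W,\x\rangle+b$ and hence leaves $f$ unchanged on every input, while guaranteeing that $\langle W,\x\rangle+b$ takes integer values on all binary inputs $\x\in\mathbb{F}$. The negation must output $1$ exactly when $\langle W,\x\rangle+b\le 0$, so I would set $W':=-W$ and $b':=-b+\tfrac12$, giving $\langle W',\x\rangle+b'=-(\langle W,\x\rangle+b)+\tfrac12$. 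Writing $k:=\langle W,\x\rangle+b\in\mathbb{Z}$, the elementary fact that $k\le 0 \iff -k+\tfrac12>0$ for integers $k$ yields $\neg f(\x)=1 \iff k\le 0 \iff f(\x)=0$, as required, and this is a single-layer construction performed in linear time.

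For \emph{naive constructibility}, given a target point $\x\in\mathbb{F}$ I would realize the indicator $\mathbf{1}_{\{\x\}}$ as the Perceptron with weights $w_i:=1$ when $x_i=1$ and $w_i:=-1$ when $x_i=0$, together with bias $b:=-m+\tfrac12$ where $m:=\sum_i x_i$. The point is that $\langle W,\z\rangle=\sum_{i:\,x_i=1} z_i-\sum_{i:\,x_i=0} z_i$ is maximized over $\z\in\mathbb{F}$ uniquely at $\z=\x$, where it equals $m$; any single deviation from $\x$ strictly decreases the sum by at least one, so for every $\z\ne\x$ the value is an integer at most $m-1$. Placing the threshold at $m-\tfrac12$ then gives $\langle W,\x\rangle+b=\tfrac12>0$ while $\langle W,\z\rangle+b\le-\tfrac12<0$ for all $\z\ne\x$, so the Perceptron accepts exactly $\x$.

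The only genuinely delicate point in either part is the mismatch between the strict inequality $>0$ hard-wired into the step activation and the non-strict condition $\le 0$ (respectively, the requirement of a \emph{unique} strict maximizer) that the target functions impose; I expect this boundary issue to be the main obstacle. In both cases it is dissolved by the same mechanism: forcing the linear form to be integer-valued on $\mathbb{F}$ (automatic for the $\pm1$ weights in the naive case, obtained by clearing denominators in the symmetric case) and then offsetting the bias by $\tfrac12$, which cleanly separates the integer values lying on either side of the threshold. Finally I would remark that, since a Perceptron is exactly the $t=1$ special case of an MLP, these arguments are consistent with — though more economical than — the generic constructions of Lemma~\ref{MLPs_symetrically_constructible}, and that together with Lemma~\ref{fbdds_symetrically_constructible} this completes the proof of Proposition~\ref{appendix:general_model_form}.
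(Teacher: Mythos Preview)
Your proposal is correct and follows essentially the same approach as the paper: for symmetric constructibility, both you and the paper integerize via the least common denominator and then negate weights and bias with a half-integer offset; for naive constructibility, the paper uses the identical $\pm 1$ weight scheme (it writes generic $w^+,w^-$ but illustrates with $w^+=1,w^-=-1$) and the same bias $-m+\tfrac12$. Your write-up is somewhat more explicit about why the half-integer shift resolves the strict-versus-non-strict boundary issue, but the underlying constructions coincide.
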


The symmetric construction proved for MLPs also holds directly for 
Perceptrons (by negating the weights of $h^1$ as well as the bias $b^1$). 
Given an input $\x\in \mathbb{F}$, naive constructability can be achieved
by constructing a model where the corresponding single hidden layer $h^{1}$ is 
weighted such that $h^{1}_i\coloneqq w^{+}$ for $x_i=1$ and $h^{1}_i \coloneqq 
w^{-}$ for $x_i = 
0$, for some user-defined value $w$. The single bias term $b^1$ is set to $b^1 
\coloneqq 
- [\sum_{1\leq i\leq n}(h_{i}^{1}\cdot x_{i})]+0.5$. The intuition behind 
this construction is that it maximizes the contribution of the particular input 
$\x$ while rendering negative values for any other input in 
$\mathbb{F}\setminus \x$. 
An illustration of this construction is provided in 
Fig.~\ref{fig:perceptronIndicator}. Also, we observe that this construction 
clearly serves as valid proof for the naive constructability of MLPs, but this 
was already trivially derived from the properties discussed in the previous 
section.

\section{Main Theorem Proofs}
\label{appendix:main_proofs}
In this section, we prove all the theorems and propositions presented in the 
main text.

\subsection{The Complexity of Obtaining Socially Aligned Explanations}

First, we provide the proofs for Theorems~\ref{generalized_theorem_1} 
and~\ref{generalized_theorem_2}, as discussed in the main text. More 
specifically: 




\begin{figure}[ht]
	\centering
	\includegraphics[width=0.45\textwidth]{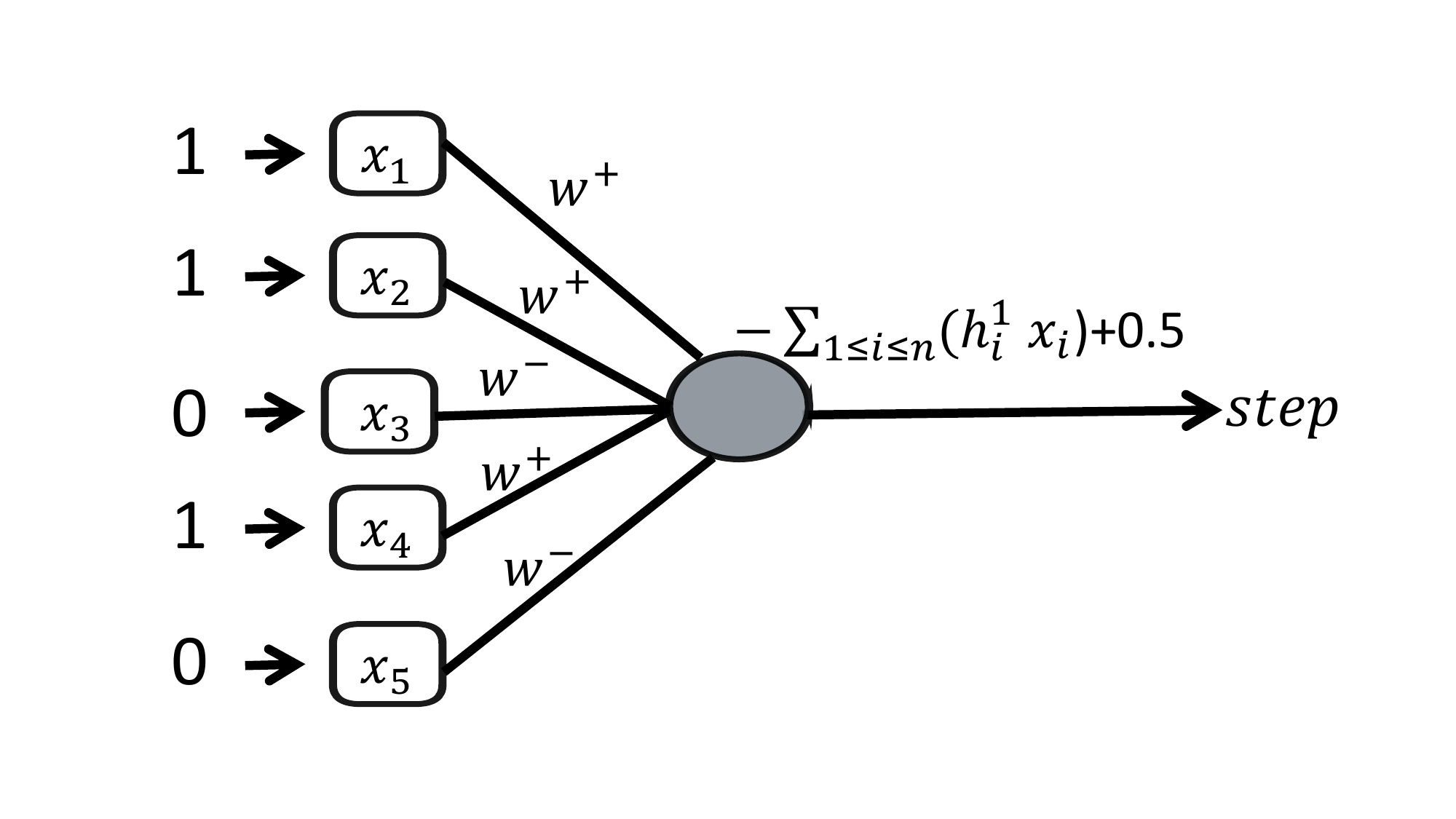}
	\caption{An illustration of the naive constructability of a Perceptron 
		model, 
		indicating the value $[1,1,0,1,0]$, and $w^+=1$, $w^-=(-1)$. The bias 
		term is 
		[$- \sum_{1\leq i\leq n}(h^{1}_i\cdot x_i)] + 0.5=(-3)+0.5=(-2.5)$.}
	\vspace{20pt}
	\label{fig:perceptronIndicator}
\end{figure}
\begin{theorem}
\label{appendix:generalized_theorem_1}
If $\mathbf{1}\in \mathcal{C}_{\pi}$ then 
$\mathbf{Q}(\mathcal{C}_{\mathcal{M}})\leq_{p} 
\mathbf{Q}(\mathcal{C}_{\mathcal{M}},\mathcal{C}_{\pi})$. 
\end{theorem}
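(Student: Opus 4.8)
The plan is to give a direct polynomial-time many-one reduction that simply plants the constant indicator $\mathbf{1}$ into every instance. First I would take an arbitrary instance $\langle f,\x,I\rangle$ of $\mathbf{Q}(\mathcal{C}_{\mathcal{M}})$, with $f\in\mathcal{C}_{\mathcal{M}}$, and map it to the instance $\langle f,\x,\mathbf{1},I\rangle$ of $\mathbf{Q}(\mathcal{C}_{\mathcal{M}},\mathcal{C}_{\pi})$. This map is well defined precisely because the hypothesis $\mathbf{1}\in\mathcal{C}_{\pi}$ guarantees that $\mathbf{1}$ is a legal context indicator for the aligned query; and it is computable in polynomial (indeed essentially constant) time, since it only appends a fixed description of $\mathbf{1}$ to the input while leaving $f$, $\x$, and $I$ untouched.

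The core of the argument is then to verify that the two queries agree on the mapped instance. Recall that $\textit{SOL}_{\neg f,\pi}$ is obtained from $\textit{SOL}_{\neg f}$ by replacing the atom $\psi_{\neg f}$ with $\psi_{\neg f,\pi}=\psi_{\neg f}\wedge[\pi(\x_{\Bar{S}};\z_{S})=1]$, where by definition $f$ and $\pi$ occur nowhere in the formula except inside these atoms. Substituting $\pi\coloneqq\mathbf{1}$ makes the conjunct $[\mathbf{1}(\x_{\Bar{S}};\z_{S})=1]$ identically true for every partial assignment, so that $\psi_{\neg f,\mathbf{1}}$ is logically equivalent to $\psi_{\neg f}$. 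Since the two formulas differ only in this atom and are otherwise identical --- the same quantifier prefix, the same auxiliary constraints supplied by $I$, and the same free second-order variable $S$ and first-order variable $\z$ --- the formulas $\textit{SOL}_{\neg f,\mathbf{1}}$ and $\textit{SOL}_{\neg f}$ possess exactly the same set of satisfying assignments.

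From this equivalence both the decision and the counting versions follow at once: in the decision case $\textit{SOL}_{\neg f,\mathbf{1}}$ is true iff $\textit{SOL}_{\neg f}$ is true, so the \yes{}/\no{} answers coincide; in the counting case $\#\textit{SOL}_{\neg f,\mathbf{1}}=\#\textit{SOL}_{\neg f}$ because the satisfying-assignment sets are literally identical. Hence the identity map on outputs certifies the reduction, establishing $\mathbf{Q}(\mathcal{C}_{\mathcal{M}})\leq_{p}\mathbf{Q}(\mathcal{C}_{\mathcal{M}},\mathcal{C}_{\pi})$. I do not expect a genuine obstacle here: the only point requiring care is the structural bookkeeping that $f$ and $\pi$ appear \emph{exclusively} inside $\psi_{\neg f}$ and $\psi_{\neg f,\pi}$, which is exactly the constraint built into the definition of $\textit{SOL}_{\neg f,\pi}$. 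Because the rest of the formula is untouched by the substitution, the logical equivalence --- and therefore the correctness of the reduction in both the decision and counting regimes --- is immediate.
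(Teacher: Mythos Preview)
Your proposal is correct and matches the paper's own proof, which likewise maps $\langle f,\x,I\rangle$ to $\langle f,\x,\mathbf{1},I\rangle$ and appeals to the equivalence $\langle f,\x,I\rangle\in\mathbf{Q}(\mathcal{C}_{\mathcal{M}})\iff\langle f,\x,\mathbf{1},I\rangle\in\mathbf{Q}(\mathcal{C}_{\mathcal{M}},\mathcal{C}_{\pi})$. Your write-up is simply more explicit than the paper's one-line justification, spelling out why the substitution $\pi\coloneqq\mathbf{1}$ collapses $\psi_{\neg f,\pi}$ to $\psi_{\neg f}$ and why this suffices for both the decision and counting variants.
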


\begin{proof}
The proof is straightforward since given some $\langle f,\x,I \rangle$ the 
reduction can simply encode and return $\langle f,\x,\mathbf{1} ,I\rangle$. 
Clearly, it holds that: $\langle f,\x,I \rangle \in 
\mathbf{Q}(\mathcal{C}_{\mathcal{M}}) \iff \langle f,\x,\mathbf{1},I\rangle \in 
\mathbf{Q}(\mathcal{C}_{\mathcal{M}},\mathcal{C}_{\pi})$, which concludes the 
correctness of the reduction.
\end{proof}

\begin{theorem}
\label{appendix:generalized_theorem_2}
If $\mathcal{C}_{\mathcal{M}}$ is symmetrically constructible and 
$\mathcal{C}_{\pi}$ is naively constructible, then 
$\mathbf{Q}(\mathcal{C}_{\pi})\leq_{p}\mathbf{Q}(\mathcal{C}_{\mathcal{M}},\mathcal{C}_{\pi})$.

\end{theorem}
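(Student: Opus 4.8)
The plan is to exhibit a polynomial-time many-one reduction that takes an instance $\langle g, \x, I\rangle$ of the misaligned query $\mathbf{Q}(\mathcal{C}_{\pi})$ — where $g\in\mathcal{C}_{\pi}$ is the model to be interpreted — and produces an instance $\langle f, \x, \pi, I\rangle$ of the aligned query $\mathbf{Q}(\mathcal{C}_{\mathcal{M}},\mathcal{C}_{\pi})$ with the same yes/no (or counting) answer. Recall that the aligned formula is built from the conjunct $\psi_{\neg f,\pi} = [f(\x_{\overline{S}};\z_S)\neq f(\x)] \wedge [\pi(\x_{\overline{S}};\z_S)=1]$, in which $f$ and $\pi$ occur only inside $\psi_{\neg f,\pi}$. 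Since $\textit{SOL}_{\neg g}$ and $\textit{SOL}_{\neg f,\pi}$ share the identical surrounding quantifier structure, it suffices to choose $f$ and $\pi$ so that $\psi_{\neg f,\pi}(\x,S,\z)$ agrees with $\psi_{\neg g}(\x,S,\z)=[g(\x_{\overline{S}};\z_S)\neq g(\x)]$ for every $(S,\z)$; the decision and counting answers are then preserved verbatim.

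First I would make the second conjunct carry the entire computation of $g$. Using the symmetric-constructibility hypothesis (which lets me build $\neg g$ in the indicator class), I set $\pi := g$ when $g(\x)=0$ and $\pi := \neg g$ when $g(\x)=1$, evaluating $g(\x)$ once and, in the latter case, constructing $\neg g\in\mathcal{C}_{\pi}$ in polynomial time. In either case $[\pi(\x_{\overline{S}};\z_S)=1]$ is exactly $[g(\x_{\overline{S}};\z_S)\neq g(\x)]=\psi_{\neg g}$.

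Next I would neutralize the first (contrastive) conjunct, which is the delicate step. Using the naive-constructibility hypothesis (which lets me build the point indicator in the prediction class), I set $f := \mathbf{1}_{\{\x\}}\in\mathcal{C}_{\mathcal{M}}$, so that $f(\x)=1$ and $f(\x_{\overline{S}};\z_S)=1$ exactly when $\z_S=\x_S$. Hence $[f(\x_{\overline{S}};\z_S)\neq f(\x)]$ collapses to the predicate $[\z_S\neq\x_S]$, asserting merely that the perturbation $\z$ genuinely differs from $\x$ on $S$. The key observation is that this predicate is absorbed by $\psi_{\neg g}$: whenever $\psi_{\neg g}$ holds we have $g(\x_{\overline{S}};\z_S)\neq g(\x)$, which forces $(\x_{\overline{S}};\z_S)\neq\x$ and therefore $\z_S\neq\x_S$. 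Consequently $[\z_S\neq\x_S]\wedge\psi_{\neg g}\equiv\psi_{\neg g}$, so $\psi_{\neg f,\pi}\equiv\psi_{\neg g}$ pointwise and the spurious conjunct is rendered obsolete without altering any satisfying assignment.

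The main obstacle I anticipate is precisely pinning down this absorption argument: one must check that the only inputs on which the $f$-conjunct is false — the trivial perturbations with $\z_S=\x_S$, including $S=\emptyset$ — are exactly the inputs on which $\psi_{\neg g}$ is already false, so that no satisfying assignment of $\textit{SOL}_{\neg g}$ is lost and none is spuriously created. This simultaneous preservation is what guarantees the equivalence for the decision version and for the counting version $\#\textit{SOL}$ alike. The remaining obligations are routine: confirming that constructing $\mathbf{1}_{\{\x\}}$ and $\neg g$ and evaluating $g(\x)$ are all polynomial (granted by the constructibility hypotheses and the efficient evaluability of the classes in question), and observing that, because $f$ and $\pi$ appear only within $\psi_{\neg f,\pi}$, substituting the equivalent conjunct leaves the enclosing SOL formula — and hence the query's output — unchanged.
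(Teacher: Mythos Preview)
Your proposal is correct and takes essentially the same approach as the paper's proof: set $f:=\mathbf{1}_{\{\x\}}\in\mathcal{C}_{\mathcal{M}}$, set $\pi$ to either $g$ or $\neg g$ depending on the value $g(\x)$, and verify that the $f$-conjunct $[\mathbf{1}_{\{\x\}}(\x_{\bar S};\z_S)\neq 1]$ is absorbed by $\psi_{\neg g}$ so that $\psi_{\neg f,\pi}\equiv\psi_{\neg g}$ pointwise, preserving both the decision and counting answers. (Both you and the paper's own proof use the hypotheses with the roles swapped relative to the theorem's literal wording---$\mathcal{C}_{\pi}$ symmetrically constructible, $\mathcal{C}_{\mathcal{M}}$ naively constructible---which is what the construction actually requires and what the paper's surrounding prose and proof sketch say.)
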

Given some $\langle f_1, \x, I\rangle$: the reduction checks whether $f_1$ is a 
valid encoding of a function in $\mathcal{C}_{\pi}$. If not, it returns an 
invalid encoding. If so, it constructs the negation function $\neg f_1$ (based 
on our assumptions, this can be computed in polynomial time). Then, the 
reduction computes $f_1(\x)$ and constructs $\mathbf{1}_{\{\x\}}\in 
\mathcal{C}_{\mathcal{M}}$  (also in polynomial time). If $f_1(\x)=1$, the 
reduction returns $\langle f_2=\mathbf{1}_{\{\x\}}, \pi_2=\neg f_1, \x, 
I\rangle$, and if $f_1(\x)=0$, it returns $\langle f_2=\mathbf{1}_{\{\x\}}, 
\pi_2=f_1, \x, I\rangle$.

Let $Q_1$ denote the SOL formula that corresponds to the solution of 
$\mathbf{Q}(\mathcal{C}_{\pi})$ and let $Q_2$ denote the SOL formula that 
corresponds to the solution of 
$\mathbf{Q}(\mathcal{C}_{\mathcal{M}},\mathcal{C}_{\pi})$. Let us denote 
$\psi_{\neg f_{1}}$ as the aforementioned conjunct (see 
Sec.~\ref{appendix:general_query_form} of the appendix) that corresponds to 
$Q_1$, and by $\psi_{\neg f_{2}, \pi}$ the conjunct that corresponds to $Q_2$. 

Assume $\langle f_1, \x, I\rangle \in \mathbf{Q}(\mathcal{C}_{\pi})$. Since in 
this case it holds that $f_1\in \mathcal{C}_{\pi}$, then: 
\begin{align*}
	\psi_{\neg f_{1}} = [f_1(\x_{S};\z_{\Bar{S}})\neq f_1(\x)] 
\end{align*}

For $\phi_2 \coloneqq [\pi_2(\x_{S};\z_{\Bar{S}})=1]$ it holds that:

\begin{align*}
	\psi_{\neg f_{2}, \pi} = [f_2(\x_{S};\z_{\Bar{S}})\neq f_2(\x)] \wedge 
	[\pi_2(\x_{S};\z_{\Bar{S}}) = 1] \\
	=[f_{2}(\x_{S};\z_{\Bar{S}})\neq f_{2}(\x)] \wedge
	\phi_2 \\
	= [\mathbf{1}_{\{\x\}}(\x_{S};\z_{\Bar{S}})\neq \mathbf{1}_{\{\x\}}(\x)] 
	\wedge
	\phi_2\\
	=[\mathbf{1}_{\{\x\}}(\x_{S};\z_{\Bar{S}})\neq 1] \wedge
	\phi_2\\
\end{align*}




Assume that $f_1(\x)=1$. In this case, the reduction sets $\pi_2$ to $\neg 
f_1$, and thus:

\begin{align*}
	\phi_2 = [\pi_2(\x_{S};\z_{\Bar{S}})=1] = [\neg f_1(\x_{S};\z_{\Bar{S}})=1] 
	=  \\
	[f_1(\x_{S};\z_{\Bar{S}})\neq1]=
	[f_1(\x_{S};\z_{\Bar{S}})\neq1]\wedge [f_1(\x)=1]
\end{align*}

Where the last encoded conjunct $f_1(\x)=1$ is a tautology under this scenario. 
Overall, we get that:
\begin{align*}
	\psi_{\neg f_{2}, \pi} = [\mathbf{1}_{\{\x\}}(\x_{S};\z_{\Bar{S}})\neq 1] 
	\wedge [f_1(\x_{S};\z_{\Bar{S}})\neq1] \\ \wedge [f_1(\x)=1]
	= [f_1(\x_{S};\z_{\Bar{S}})\neq1] = \psi_{\neg f_{1}}
\end{align*}

This means that $Q_1$ and $Q_2$ are equivalent and hence any solution for SOL 
or \#SOL will be equivalent to 
$\mathbf{Q}(\mathcal{C}_{\mathcal{M}},\mathcal{C}_{\pi})$ and 
$\mathbf{Q}(\mathcal{C}_{\pi})$. Thus, $\langle f_1,  \x, I\rangle \in 
\mathbf{Q}(\mathcal{C}_{\pi}) \iff \langle f_2, \x, \pi_2, I\rangle \in 
\mathbf{Q}(\mathcal{C}_{\mathcal{M}},\mathcal{C}_{\pi})$.

Assume that $f_1(\x)=0$. In this case, the reduction sets $\pi_2$ to $f_1$ and 
thus:

\begin{align*}
	\phi_2 = [\pi_2(\x_{S};\z_{\Bar{S}})=1] = [f_1(\x_{S};\z_{\Bar{S}})=1]
	\\= [f_1(\x_{S};\z_{\Bar{S}})\neq0] =  
	[f_1(\x_{S};\z_{\Bar{S}})\neq0]\wedge [f_1(\x)=0]
\end{align*}


Overall, we again obtain that:
\begin{align*}
	\psi_{\neg f_{2}, \pi} = [\mathbf{1}_{\{\x\}}(\x_{S};\z_{\Bar{S}})\neq 1] 
	\wedge [f_1(\x_{S};\z_{\Bar{S}})\neq0] \\ \wedge [f_1(\x)=0]
	= [f_1(\x_{S};\z_{\Bar{S}})\neq0] = \psi_{\neg f_{1}}
\end{align*}


Hence, again it holds that $Q_1$ and $Q_2$ are equivalent, and from the same 
reason stated above, it thus holds that $\langle f_2, \x, \pi_2, I\rangle \in 
\mathbf{Q}(\mathcal{C}_{\mathcal{M}},\mathcal{C}_{\pi})$.

Now, assume $\langle f_1, \x, I\rangle \not\in \mathbf{Q}(\mathcal{C}_{\pi})$. 
In this case, the reduction initially checks the validity of the encoding, 
which includes that of $f\in \mathcal{C}_{\pi}$. Hence, we are only left to 
check the cases where $f\in \mathcal{C}_{\pi}$ but $\langle f, \x, \pi, 
I\rangle \not\in \mathbf{Q}(\mathcal{C}_{\pi})$. This implies that the SOL was 
unsatisfiable or, if \textbf{$Q$} is a counting query, that \#SOL returned an 
incorrect count. Since the previous result demonstrated that $Q_1$ and $Q_2$ 
are equivalent under the assumption that $f\in \mathcal{C}_{\pi}$, any 
assignment to $Q_1$ will hold if and only if it holds to $Q_2$. Consequently, 
we can 
conclude that $\langle f_2, \x, \pi_2, I\rangle \not\in 
\mathbf{Q}(\mathcal{C}_{\mathcal{M}},\mathcal{C}_{\pi})$.

\begin{theorem}
\label{appendix:theorem_mlp_always_wins}
Let $\mathcal{C}_{\mathcal{M}},\mathcal{C}_{\pi}$ be classes of 
polynomially computable functions such that $\mathcal{C}_{\mathcal{M}} = 
\mathcal{C}_{\mlp}$ or $\mathcal{C}_{\pi} = \mathcal{C}_{\mlp}$. If 
$\mathbf{Q}(\mathcal{C}_{\mlp})$ is $\mathcal{K}$-complete, where $\mathcal{K}$ 
is a complexity class of the polynomial hierarchy (or its associated counting 
class), then $\mathbf{Q}(\mathcal{C}_{\mathcal{M}},\mathcal{C}_{\pi})$ is also 
$\mathcal{K}$-complete.
\end{theorem}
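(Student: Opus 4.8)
The plan is to prove the two halves of completeness separately, pushing everything back to the MLP case. For $\mathcal{K}$-hardness I would reduce from $\mathbf{Q}(\mathcal{C}_{\mlp})$, which is $\mathcal{K}$-complete (hence $\mathcal{K}$-hard) by hypothesis, and handle the two disjuncts of the assumption in turn. If $\mathcal{C}_{\pi}=\mathcal{C}_{\mlp}$, then the constructibility hypotheses of Theorem~\ref{generalized_theorem_2} are met (by Proposition~\ref{general_model_form} for the standard classes), so Theorem~\ref{generalized_theorem_2} gives $\mathbf{Q}(\mathcal{C}_{\mlp})=\mathbf{Q}(\mathcal{C}_{\pi})\leq_p\mathbf{Q}(\mathcal{C}_{\mathcal{M}},\mathcal{C}_{\pi})$. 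If instead $\mathcal{C}_{\mathcal{M}}=\mathcal{C}_{\mlp}$, then since $\mathbf{1}\in\mathcal{C}_{\pi}$ (a constant indicator is expressible by every expressive indicator class we consider), Theorem~\ref{generalized_theorem_1} gives $\mathbf{Q}(\mathcal{C}_{\mlp})=\mathbf{Q}(\mathcal{C}_{\mathcal{M}})\leq_p\mathbf{Q}(\mathcal{C}_{\mathcal{M}},\mathcal{C}_{\pi})$. In both cases the aligned query inherits $\mathcal{K}$-hardness.

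For membership in $\mathcal{K}$ I would instead reduce $\mathbf{Q}(\mathcal{C}_{\mathcal{M}},\mathcal{C}_{\pi})$ to $\mathbf{Q}(\mathcal{C}_{\mlp})$, which lies in $\mathcal{K}$ by hypothesis. The leverage is that $\mathcal{C}_{\mathcal{M}}$ and $\mathcal{C}_{\pi}$ consist of polynomially computable functions. Given an instance $\langle f,\x,\pi,I\rangle$, I would compile the polynomial-time algorithms for $f$ and $\pi$ (at the fixed input length $n$) into polynomial-size Boolean circuits via the standard computation-tableau construction, and then apply Lemma~\ref{lemma_boolean_circuit} to turn those circuits into equivalent MLPs $f'$ and $\pi'$ in polynomial time. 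Because $f'\equiv f$ and $\pi'\equiv\pi$ as functions over $\{0,1\}^n$, the formula $\textit{SOL}_{\neg f,\pi}$ and its counting version $\#\textit{SOL}_{\neg f,\pi}$ are left unchanged, so this map sends the instance to an equivalent instance of $\mathbf{Q}(\mathcal{C}_{\mlp},\mathcal{C}_{\mlp})$. Finally, since MLPs are self-aligned, Proposition~\ref{result-of-logical-containment} yields $\mathbf{Q}(\mathcal{C}_{\mlp},\mathcal{C}_{\mlp})=_p\mathbf{Q}(\mathcal{C}_{\mlp})\in\mathcal{K}$, completing membership and hence $\mathcal{K}$-completeness.

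A more direct sanity check on membership is that $\textit{SOL}_{\neg f,\pi}$ has exactly the quantifier-alternation structure of the MLP query, while its matrix $[f(\x_{\bar{S}};\z_{S})\neq f(\x)]\wedge[\pi(\x_{\bar{S}};\z_{S})=1]$ is polynomial-time evaluable whenever $f$ and $\pi$ are; a fully quantified SOL formula with fixed alternation depth and a poly-time matrix sits exactly at the level $\mathcal{K}$ (or its counting analogue) that the MLP case occupies. I expect the main obstacle to be the explicit membership reduction: I must verify that the tableau-to-circuit and circuit-to-MLP steps both run in time polynomial in $n$ and in the description sizes, that the resulting MLPs compute the same Boolean functions on every binary input, and --- crucially for the counting case --- that substituting equivalent functions preserves not only the truth value but the exact number of satisfying assignments of $\textit{SOL}_{\neg f,\pi}$. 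A secondary point to pin down is the side conditions invoked in the hardness cases ($\mathbf{1}\in\mathcal{C}_{\pi}$, and naive constructibility of $\mathcal{C}_{\mathcal{M}}$), which hold for FBDDs, Perceptrons, and MLPs but should be stated as explicit hypotheses when the classes are left abstract.
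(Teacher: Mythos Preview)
Your hardness argument coincides with the paper's: both invoke Theorems~\ref{generalized_theorem_1} and~\ref{generalized_theorem_2} (using the constructibility of MLPs) to reduce from $\mathbf{Q}(\mathcal{C}_{\mlp})$. Your closing caveat about the side conditions ($\mathbf{1}\in\mathcal{C}_{\pi}$, naive constructibility of $\mathcal{C}_{\mathcal{M}}$) is apt and is a genuine implicit assumption in the paper's own proof as well.

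For membership, however, you take a genuinely different route. The paper argues structurally: both $\textit{SOL}_{\neg f}$ and $\textit{SOL}_{\neg f,\pi}$ share the same quantifier prefix, so by the extension of Fagin's theorem (valid over finite structures) they determine the same PH level $\mathcal{K}'$; polynomial computability of $f,\pi$ places the aligned query in $\mathcal{K}'$; and since $\mathbf{Q}(\mathcal{C}_{\mlp})$ is both $\mathcal{K}'$-complete (by this argument plus Lemma~\ref{lemma_boolean_circuit}) and $\mathcal{K}$-complete (by hypothesis), the levels coincide. Your primary argument instead compiles $f$ and $\pi$ through the tableau-to-circuit-to-MLP pipeline and then invokes MLP self-alignment (Proposition~\ref{result-of-logical-containment}) to land in $\mathbf{Q}(\mathcal{C}_{\mlp})\in\mathcal{K}$. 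This is correct and has the virtue of bypassing the Fagin-style descriptive-complexity machinery entirely, at the cost of a forward reference to Section~\ref{sec:self-alignment} (there is no circularity, since that proposition does not rely on the present theorem). Your ``sanity check'' paragraph is essentially the paper's own argument, so you have in fact sketched both proofs.
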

\medskip\noindent
\emph{Proof.} 
The complexity classes within the polynomial hierarchy consist of the classes 
$\Sigma_{k}^p$ and $\Pi_{k}^p$ for all $k$. The class $\Sigma_{k}^p$ is defined 
as all languages $L$ such that there exists a polynomial time Turing machine 
$M$ and polynomials $q_1,\ldots, q_k$ such that:

\begin{equation}
\begin{aligned}
	x\in L \iff \exists y_1 \forall y_2, \ldots (\exists/\forall)y_k, |y_i|\leq 
	q_i(|X|) \\ \wedge \ \ M(x,y_1,\ldots,y_k)=1
\end{aligned}
\end{equation}

On the other hand, $\Pi_{k}^p$ is defined respectively by alternating 
quantifiers $\forall \exists,\ldots$, and hence $\Pi_{k}^p= \{L|\Bar{L}\in 
\Sigma_{k}^p\}$. In the context of SOL, we define $SOL^k$ as a formula of 
the form:

\begin{equation}
\begin{aligned}
	\exists X_1 \forall X_2, \ldots (\exists/\forall)X_k, \phi(X_1,\ldots,X_k)
\end{aligned}
\end{equation}

\noindent
where $\phi$ is a quantifier-free FOL formula over 
$(X_1,\ldots,X_k)$. 
Extending Fagin's theorem~\cite{Fa74} shows that a solution to an SOL formula 
with $k$ alternating quantifiers (starting with $\exists$) is 
$\Sigma_{k}^p$-complete. We note that this holds in the case of finite 
structures, 
which is our setting, as for any $n$ we have a finite number of inputs. 
Since any SOL formula can be written as an SOL formula consisting of 
alternating 
quantifiers $\exists,\forall,\ldots$ or 
$\forall,\exists,\ldots$, then by extending Fagin's theorem~\cite{Fa74}, we can 
conclude that each SOL formula is associated with a class in 
the polynomial hierarchy (this holds in the case of finite structures, which 
is indeed our case, as we focus on discrete inputs). For example 
$\exists\forall SOL$ is 
$\Sigma_{2}^p$-complete and $\forall\exists SOL$ is $\Pi_{2}^p$-complete 
(again, as in 
our case the formula includes models that are restricted to finite inputs). 
We 
note that for the counting case, each one of these complexity classes has a 
corresponding associated counting class, for example, the number of satisfying 
assignments for $\exists SOL$ or $\exists\forall SOL$.

Recall that each query $\mathbf{Q}(C_M)$ is associated with an SOL formula 
$\textit{SOL}_{\neg f}$ and each aligned query $\mathbf{Q}(C_M,C_{\pi})$ is 
associated 
with an SOL formula $\textit{SOL}_{\neg f, \pi}$. Both of these formulas can be 
written in 
their Prenex Normal Form ${SOL}^k_{\neg f}$ and ${SOL}^k_{\neg f, \pi}$. Since 
the 
prefixes of both these formulas are equivalent, then both of them are 
associated with some complexity class $K'$ in the polynomial hierarchy (or its 
corresponding counting class), due to the extension of Fagin's Theorem, which 
as mentioned, 
holds for our case (as we focus on discrete inputs of a finite size).

Clearly, since $\mathcal{C}_{\mathcal{M}}$ and $\mathcal{C}_{\pi}$ are 
classes of polynomially computable functions, then by definition 
$\mathbf{Q}(\mathcal{C}_{\mathcal{M}},\mathcal{C}_{\pi})\in \mathcal{K}'$ and 
$\mathbf{Q}(\mathcal{C}_{\mathcal{M}})\in \mathcal{K}'$. More specifically, 
this means that $\mathbf{Q}(\mathcal{C}_{\mlp})\in \mathcal{K}'$ as well. Next, 
to prove hardness, we will make use of Lemma~\ref{lemma_boolean_circuit}, as 
well as Karp's reduction.

Karp's reduction implies that any quantified propositional 
formula $\exists 
X_1 \forall 
X_2, \ldots (\exists/\forall)X_k, \psi(X_1,\ldots,X_k)$, for a quantifier-free 
Boolean formula $\psi$ (over 
$(X_1,\ldots,X_k)$) is $\Sigma_{k}^p$-complete. 
In addition, as  Lemma~\ref{lemma_boolean_circuit} indicates that an arbitrary 
Boolean formula $\psi$ can be translated to an equivalent MLP in polynomial 
time, 
it holds that $\mathbf{Q}(\mathcal{C}_{\mlp})$ is $\mathcal{K'}$-hard. As we 
have also shown that $\mathbf{Q}(\mathcal{C}_{\mlp})$ is in $\mathcal{K'}$, we 
deduce that 
$\mathbf{Q}(\mathcal{C}_{\mlp})$ is $\mathcal{K'}$-complete.


Overall, we get that 
$\mathbf{Q}(\mathcal{C}_{\mlp})$ is $\mathcal{K}$-complete for some class in 
the polynomial hierarchy, and also $\mathcal{K}'$-complete. Each language is 
complete only for one class in the hierarchy (or otherwise, the hierarchy 
collapses), and thus $\mathcal{K}=\mathcal{K}'$.

Now, we get that $\mathbf{Q}(\mathcal{C}_{\mathcal{M}},\mathcal{C}_{\pi})\in 
\mathcal{K}'=\mathcal{K}$. Since we know that 
$\mathcal{C}_{\mathcal{M}}=\mathcal{C}_{\mlp}$ or 
$\mathcal{C}_{\pi}=\mathcal{C}_{\mlp}$, and since MLPs are both symmetrically  
constructible and naively constructible 
(Lemma~\ref{MLPs_symetrically_constructible}) then as a consequence of 
Theorems~\ref{appendix:generalized_theorem_1} 
and~\ref{appendix:generalized_theorem_2} we get that 
$\mathbf{Q}(\mathcal{C}_{\mathcal{M}},\mathcal{C}_{\pi})$ is also 
$\mathcal{K}$-hard. We deduce that 
$\mathbf{Q}(\mathcal{C}_{\mathcal{M}},\mathcal{C}_{\pi})$ is 
$\mathcal{K}$-complete. 

\medspace
\noindent
\textbf{Note.}
In order to rely on Fagin's thorem~\cite{Fa74}, we must assume a finite 
structure. This holds in the case of \emph{discrete} inputs, but not for any 
general SOL encoding (which in fact, may even be undecidable).

\subsection{``Self-Alignment'': Incorporating Social Alignment within a Single 
Model}

In the next subsection, we elaborate on the general, and model-specific, 
proofs for our findings regarding the self-alignment property of a given model 
class $\mathcal{C}$.
First, we reiterate the definition of self-alignment:

\setcounter{definition}{3}

\begin{definition}
A class of models $\mathcal{C}$ is \textbf{self-aligned} if for any $f,\pi \in 
\mathcal{C}$, and any inputs $\x$ and $I$, there exists a 
polynomially constructible function $g\in \mathcal{C}$, 
such that:
\begin{equation}
	\begin{aligned}
		\langle f,\pi,\x, I \rangle \in \mathbf{Q}(\mathcal{C},\mathcal{C}) 
		\iff \langle g,\x, I \rangle \in \mathbf{Q}(\mathcal{C})
	\end{aligned}
\end{equation}
\end{definition}

\begin{theorem}
\label{appendix:expressivnes}
Given a class of models $\mathcal{C}$, if for any $f_{1}, f_{2} \in 
\mathcal{C}$, we can polynomially construct $g:=f_{1}[op]f_2\in \mathcal{C}$, 
for [op]$\in\{\wedge, \rightarrow \}$, then $\mathcal{C}$ is self-aligned.
\end{theorem}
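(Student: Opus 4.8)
\medskip\noindent\emph{Proof plan.} The plan is to reduce the whole claim to a single pointwise statement about the ``core'' conjunct shared by the two queries. Recall that the aligned query $\mathbf{Q}(\mathcal{C},\mathcal{C})$ on $\langle f,\pi,\x,I\rangle$ is decided by $\textit{SOL}_{\neg f,\pi}$, whose only occurrence of the models is the conjunct $\psi_{\neg f,\pi}=[f(\x_{\bar{S}};\z_{S})\neq f(\x)]\wedge[\pi(\x_{\bar{S}};\z_{S})=1]$, whereas the misaligned query $\mathbf{Q}(\mathcal{C})$ on $\langle g,\x,I\rangle$ is decided by $\textit{SOL}_{\neg g}$, whose only model occurrence is $\psi_{\neg g}=[g(\x_{\bar{S}};\z_{S})\neq g(\x)]$. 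Since the surrounding quantifier prefix, the auxiliary inputs $I$ (e.g. a size bound $k$), and the quantified variables $S,\z$ are identical in both formulas, I would first argue that it suffices to build $g\in\mathcal{C}$ for which $\psi_{\neg g}$ and $\psi_{\neg f,\pi}$ agree on every assignment: the two formulas then define exactly the same satisfying assignments, so their yes/no answer and their count coincide at once, covering both the decision and counting variants. Writing $y\coloneqq(\x_{\bar{S}};\z_{S})$, the target collapses to constructing $g$ with
\[
[g(y)\neq g(\x)]\iff [f(y)\neq f(\x)]\wedge[\pi(y)=1]\quad\text{for every }y\in\mathbb{F}.
\]

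Next I would give the construction, branching on the constant $f(\x)$ (obtained by a single evaluation of $f$ on $\x$, which is polynomial for the model classes we consider). If $f(\x)=1$, set $g\coloneqq \pi\rightarrow f$; if $f(\x)=0$, set $g\coloneqq f\wedge\pi$. Both combinations are polynomially constructible and lie in $\mathcal{C}$ by the hypothesis of the theorem, instantiated with $[op]=\rightarrow$ and $[op]=\wedge$ respectively. For the first branch I would verify that $g(\x)=\pi(\x)\rightarrow 1=1=f(\x)$ regardless of $\pi(\x)$, while $g(y)=\neg\pi(y)\vee f(y)$ equals $0$ exactly when $\pi(y)=1$ and $f(y)=0$; hence $[g(y)\neq g(\x)]=[g(y)=0]=[\pi(y)=1]\wedge[f(y)\neq f(\x)]$. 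The second branch is symmetric: $g(\x)=0\wedge\pi(\x)=0=f(\x)$, and $g(y)=f(y)\wedge\pi(y)$ equals $1$ exactly when $f(y)=1$ (equivalently $f(y)\neq f(\x)$) and $\pi(y)=1$, so $[g(y)\neq g(\x)]=[f(y)\neq f(\x)]\wedge[\pi(y)=1]$. Notably, neither verification needs any assumption on the value $\pi(\x)$.

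Finally I would assemble the pieces: the pointwise equivalence makes $\textit{SOL}_{\neg g}$ and $\textit{SOL}_{\neg f,\pi}$ equivalent formulas, yielding $\langle f,\pi,\x,I\rangle\in\mathbf{Q}(\mathcal{C},\mathcal{C})\iff\langle g,\x,I\rangle\in\mathbf{Q}(\mathcal{C})$ (and identical counts in the counting case), which is precisely the definition of self-alignment; the construction costs one evaluation of $f$ plus one binary combination, so $g$ is polynomially constructible. The step I expect to be the crux---and the reason the result is not immediate---is the case split itself: no single binary operator over $\{f,\pi\}$ realizes the target predicate simultaneously for $f(\x)=0$ and $f(\x)=1$, because the direction in which $g$ must differ from $g(\x)$ depends on $f(\x)$. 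This forces exactly the two operators $\rightarrow$ and $\wedge$, which is what ties the construction to the stated expressiveness hypothesis; the remaining care is purely in checking that both chosen operators belong to the allowed set $\{\wedge,\rightarrow\}$.
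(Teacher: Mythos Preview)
Your proposal is correct and follows essentially the same route as the paper: a case split on $f(\x)$, with $g\coloneqq f\wedge\pi$ when $f(\x)=0$ and an implication-type combination when $f(\x)=1$, followed by a pointwise verification that $\psi_{\neg g}\Leftrightarrow\psi_{\neg f,\pi}$. The only cosmetic difference is that the paper writes the $f(\x)=1$ case as $g\coloneqq f\lor\neg\pi$, whereas you write $g\coloneqq \pi\rightarrow f$; these are the same Boolean function, and your formulation has the minor advantage of making explicit that the construction uses exactly one application of an operator from the hypothesized set $\{\wedge,\rightarrow\}$.
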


\medskip\noindent
\emph{Proof.} 
Based on the assumptions on $\mathcal{C}$, given any two models $f,\pi \in 
\mathcal{C}$, we can construct, in polynomial time, a function $g$ that encodes 
a logical relation between the original classifier $f$ and the context 
indicator $\pi$. We define $g\in \mathcal{C}$ based on the original 
classification $f(\x)$.
Our reduction defines slightly different functions $g \in \mathcal{C}$, 
depending on whether the original classification is $f(\x)=1$ or 
$f(\x)=0$.

In the case of $f(\x)=1$, we define $g\in \mathcal{C}$ as follows:
\begin{equation}
\begin{aligned}
	g\coloneqq [f \lor \neg \pi] \in \mathcal{C}. 
\end{aligned}
\end{equation}
Next, we note that for the given input $\x$ it holds that:
\begin{equation}
\begin{aligned}
	g (\x) = [f \lor \neg \pi] (\x) = f(\x) \lor \pi(\x) =  1 \lor \neg \pi(\x) 
	= 1 
\end{aligned}
\end{equation}

It also holds that:
\begin{equation}
\begin{aligned}
	\psi_{\neg f, \pi} 
	\iff \\ 
	[f(\x_{S};\z_{\Bar{S}})\neq f(\x)=1] \wedge 
	[\pi(\x_{S};\z_{\Bar{S}})=1] 
	\iff \\
	[f(\x_{S};\z_{\Bar{S}}) = 0] \wedge 
	[\pi(\x_{S};\z_{\Bar{S}})=1]  
	\iff \\
	[\neg f(\x_{S};\z_{\Bar{S}}) = 1] \wedge 
	[\pi(\x_{S};\z_{\Bar{S}})=1]  
	\iff \\ 
	[(\neg f \wedge \pi) (\x_{S};\z_{\Bar{S}}) = 1]
	\iff \\
	[\neg(\neg f \wedge \pi) (\x_{S};\z_{\Bar{S}}) = 0]  
	\iff \\ 
	[(f \lor \neg \pi) (\x_{S};\z_{\Bar{S}}) = 0]  \iff \\
	[g(\x_{S};\z_{\Bar{S}}) = 0 \neq 1 = g(\x)] 
	\iff \psi_{\neg g} 
\end{aligned}
\end{equation}

In the case of $f(\x)=0$, we define $g\in \mathcal{C}$ as follows:
\begin{equation}
\begin{aligned}
	g\coloneqq [f \wedge \pi] \in \mathcal{C}. 
\end{aligned}
\end{equation}

And hence, if $f(\x)=1$:
\begin{equation}
\begin{aligned}
	\langle f,\x,\pi,I\rangle \in \mathbf{Q}(\mathcal{C},\mathcal{C}) \iff
	\langle g,\x,I \rangle \in \mathbf{Q}(\mathcal{C})
\end{aligned}
\end{equation}

In the case that for the given input $\x$ it holds that:
\begin{equation}
\begin{aligned}
	g (\x) = [f \wedge \pi] (\x) = f(\x) \wedge \pi(\x) =  0 \wedge \pi(\x) = 0 
\end{aligned}
\end{equation}

It also holds that:
\begin{equation}
\begin{aligned}
	\psi_{\neg f, \pi} 
	\iff \\
	[f(\x_{S};\z_{\Bar{S}})\neq f(\x)=0] \wedge 
	[\pi(\x_{S};\z_{\Bar{S}})=1] 
	\iff \\
	[f(\x_{S};\z_{\Bar{S}}) = 1] \wedge 
	[\pi(\x_{S};\z_{\Bar{S}})=1]
	\iff \\
	[(f \wedge \pi) (\x_{S};\z_{\Bar{S}}) = 1]  \iff \\
	[g(\x_{S};\z_{\Bar{S}}) = 1 \neq 0 = g(\x)] 
	\iff \psi_{\neg g}
\end{aligned}
\end{equation}

Hence, for all cases, it holds that:
\begin{equation}
\begin{aligned}
	\langle f,\x,\pi,I\rangle \in \mathbf{Q}(\mathcal{C},\mathcal{C}) \iff 
	\langle g,\x,I \rangle \in \mathbf{Q}(\mathcal{C})
\end{aligned}
\end{equation}

Thus, we conclude that  $\mathcal{C}$ is self-aligned, and hence  
Theorem~\ref{appendix:expressivnes} is proven.

\begin{proposition}
\label{appendix:result-of-logical-containment}
If the conditions in 
Theorem~\ref{expressivnes} hold for a class of models $C$, then 
$\mathbf{Q}(\mathcal{C},\mathcal{C})=_P\mathbf{Q}(\mathcal{C})$.
\end{proposition}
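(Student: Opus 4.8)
\emph{Proof proposal.} Recall that $\mathbf{Q}(\mathcal{C},\mathcal{C})=_P\mathbf{Q}(\mathcal{C})$ abbreviates the existence of polynomial-time reductions in both directions, so the plan is to establish $\mathbf{Q}(\mathcal{C},\mathcal{C})\leq_{p}\mathbf{Q}(\mathcal{C})$ and $\mathbf{Q}(\mathcal{C})\leq_{p}\mathbf{Q}(\mathcal{C},\mathcal{C})$ separately. The first direction I would read off almost immediately from self-alignment, while the second I would obtain by instantiating Theorem~\ref{appendix:generalized_theorem_1}, once I verify that the constant-true function lies in $\mathcal{C}$.

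For the direction $\mathbf{Q}(\mathcal{C},\mathcal{C})\leq_{p}\mathbf{Q}(\mathcal{C})$, I would first invoke Theorem~\ref{appendix:expressivnes}: since the hypothesis here is exactly that $\mathcal{C}$ can polynomially construct $f_1[op]f_2$ for $[op]\in\{\wedge,\rightarrow\}$, the class $\mathcal{C}$ is self-aligned. Unwinding the definition of self-alignment then yields, for every instance $\langle f,\pi,\x,I\rangle$, a polynomially constructible $g\in\mathcal{C}$ with $\langle f,\pi,\x,I\rangle\in\mathbf{Q}(\mathcal{C},\mathcal{C})\iff\langle g,\x,I\rangle\in\mathbf{Q}(\mathcal{C})$. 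The map $\langle f,\pi,\x,I\rangle\mapsto\langle g,\x,I\rangle$ is therefore the desired reduction: it is computable in polynomial time because $g$ is, and it is answer-preserving by construction. I would note that this works uniformly for the decision and counting versions, since self-alignment is stated as a biconditional on instance membership (resp. count value).

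For the converse $\mathbf{Q}(\mathcal{C})\leq_{p}\mathbf{Q}(\mathcal{C},\mathcal{C})$, the key observation is that the trivial context indicator $\mathbf{1}$ already belongs to $\mathcal{C}$: picking any $f\in\mathcal{C}$ and applying the $\rightarrow$ construction to the pair $(f,f)$ produces $f\rightarrow f$ in polynomial time, which is precisely the tautology $\mathbf{1}$. With $\mathbf{1}\in\mathcal{C}_{\pi}=\mathcal{C}$ in hand, I would apply Theorem~\ref{appendix:generalized_theorem_1} with $\mathcal{C}_{\mathcal{M}}=\mathcal{C}_{\pi}=\mathcal{C}$, which gives $\mathbf{Q}(\mathcal{C})\leq_{p}\mathbf{Q}(\mathcal{C},\mathcal{C})$ directly, the reduction there simply padding an instance $\langle f,\x,I\rangle$ with the indicator $\mathbf{1}$.

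Chaining the two reductions establishes $\mathbf{Q}(\mathcal{C},\mathcal{C})=_P\mathbf{Q}(\mathcal{C})$. I expect the only genuinely non-automatic step to be the observation $\mathbf{1}\in\mathcal{C}$ needed for the converse: the self-alignment definition hands back a single combined model $g$, but does not by itself certify that $\mathcal{C}$ contains the trivial indicator, so the $\rightarrow$-closure hypothesis must be used a second time, on the degenerate pair $(f,f)$, to supply it. Everything else is bookkeeping, provided one keeps the decision and counting cases parallel throughout.
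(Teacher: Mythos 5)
Your proof is correct and follows essentially the same decomposition as the paper's: the direction $\mathbf{Q}(\mathcal{C},\mathcal{C})\leq_{p}\mathbf{Q}(\mathcal{C})$ read off directly from self-alignment (Theorem~\ref{expressivnes}), and the converse obtained from the general framework theorems. The one place where you genuinely diverge is in how the converse is justified, and your version is the more careful one. The paper simply cites Theorems~\ref{generalized_theorem_1} and~\ref{generalized_theorem_2} without verifying their hypotheses for $\mathcal{C}$; this is a real gap, since the hypotheses of Theorem~\ref{generalized_theorem_2} (symmetric and naive constructibility) do \emph{not} follow from closure under $\wedge$ and $\rightarrow$ --- indeed $\{\wedge,\rightarrow\}$ is truth-preserving, so no combination of models from $\mathcal{C}$ under these connectives can ever realize a negation $\neg f$, and point indicators $\mathbf{1}_{\{\x\}}$ are likewise not implied. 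Your route avoids this entirely: you use only Theorem~\ref{generalized_theorem_1}, and you supply the missing precondition $\mathbf{1}\in\mathcal{C}$ by applying the $\rightarrow$-closure to the degenerate pair $(f,f)$, noting that $f\rightarrow f$ computes the constant-true function and can be built in polynomial time from the input instance's own $f$. This makes your proof self-contained where the paper's is not, and your closing remark correctly identifies this as the only non-automatic step. One minor caveat worth stating explicitly: the argument needs the input instance to actually contain some $f\in\mathcal{C}$ (equivalently, $\mathcal{C}\neq\emptyset$), which is harmless here since the reduction constructs $f\rightarrow f$ from the very model being queried.
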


\medskip\noindent
\emph{Proof.}
Based on the previous reduction, we can deduce that if the 
conditions in Theorem~\ref{expressivnes} hold for a model class $\mathcal{C}$, 
then $\mathcal{C}$ is self-aligned, i.e., given $f,\pi \in \mathcal{C}$ we can 
polynomially construct a function $g \in \mathcal{C}$, such that:

\begin{align*}    
\langle f,\pi,\x, I \rangle \in \mathbf{Q}(\mathcal{C},\mathcal{C}) \iff 
\langle g,\x, I \rangle \in \mathbf{Q}(\mathcal{C})
\end{align*}


Hence: 
\begin{align*}  
\mathbf{Q}(\mathcal{C},\mathcal{C}) \leq_{p} \mathbf{Q}(\mathcal{C})
\end{align*}

In addition, from Theorems~\ref{appendix:generalized_theorem_1} 
and~\ref{appendix:generalized_theorem_2}, it holds that:

\begin{align*}  
\mathbf{Q}(\mathcal{C}) \leq_{p} \mathbf{Q}(\mathcal{C},\mathcal{C})
\end{align*}

Finally, it is straightforward to conclude that:
\begin{align*}  
\mathbf{Q}(\mathcal{C},\mathcal{C})=_P\mathbf{Q}(\mathcal{C})
\end{align*}

\begin{proposition}
\label{appendix:fbdd_and_mlp_self_alignment}
FBDDs and MLPs are self-aligned, and hence, it follows that:
$\mathbf{Q}(\mathcal{C}_{\fbdd},\mathcal{C}_{\fbdd})=_P 
\mathbf{Q}(\mathcal{C}_{\fbdd})$ and 
$\mathbf{Q}(\mathcal{C}_{\mlp},\mathcal{C}_{\mlp})=_P 
\mathbf{Q}(\mathcal{C}_{\mlp})$.
\end{proposition}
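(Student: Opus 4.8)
\emph{Proof plan.} The plan is to reduce both assertions to the sufficient condition already isolated in Theorem~\ref{appendix:expressivnes}: a class $\mathcal{C}$ is self-aligned whenever, for arbitrary $f_1,f_2\in\mathcal{C}$, one can construct in polynomial time $g:=f_1[\mathrm{op}]f_2\in\mathcal{C}$ for $[\mathrm{op}]\in\{\wedge,\rightarrow\}$. Hence the whole proposition collapses to exhibiting these two gate constructions inside $\mathcal{C}_{\fbdd}$ and $\mathcal{C}_{\mlp}$: once closure under $\wedge$ and $\rightarrow$ is in hand, self-alignment is immediate from Theorem~\ref{appendix:expressivnes}, and the claimed identities $\mathbf{Q}(\mathcal{C}_{\fbdd},\mathcal{C}_{\fbdd})=_P\mathbf{Q}(\mathcal{C}_{\fbdd})$ and $\mathbf{Q}(\mathcal{C}_{\mlp},\mathcal{C}_{\mlp})=_P\mathbf{Q}(\mathcal{C}_{\mlp})$ then follow directly by instantiating Proposition~\ref{appendix:result-of-logical-containment}.

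For MLPs I would argue constructively. First I would invoke the normalization from Lemma~\ref{MLPs_symetrically_constructible}, so that each $f_i$ may be assumed to have integer weights and a final pre-activation $\ell_i$ with $\ell_i(\x)\geq 0.5$ when $f_i(\x)=1$ and $\ell_i(\x)\leq -0.5$ otherwise, i.e. no binary input sits on the decision boundary. I would then place $f_1$ and $f_2$ side by side over the shared input layer (size polynomial in $|f_1|+|f_2|$) and recover their exact Boolean outputs as hidden neurons via the clamp gadget $b_i:=\mathrm{ReLU}(\ell_i+0.5)-\mathrm{ReLU}(\ell_i-0.5)$, where the $0.5$-margin guarantees $b_i\in\{0,1\}$ equals $f_i(\x)$. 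A single final step neuron then realizes the gate, e.g. $\mathrm{step}(b_1+b_2-1.5)$ for $\wedge$ and $\mathrm{step}(-b_1+b_2+0.5)$ for $\rightarrow$. Every step here is polynomial, so $\mathcal{C}_{\mlp}$ is closed under both operators. (One could alternatively route through Lemma~\ref{lemma_boolean_circuit}, but the direct gadget keeps the size bound transparent.)

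For FBDDs I would attempt a grafting construction. To build $f_1\wedge f_2$, take the diagram of $f_1$ and redirect each accepting leaf into a copy of the root of $f_2$, leaving rejecting leaves unchanged; for $f_1\rightarrow f_2$, redirect each rejecting leaf of $f_1$ to an accepting sink and each accepting leaf to the root of $f_2$ (symmetric constructibility from Lemma~\ref{fbdds_symetrically_constructible} can be used as an alternative route). The main obstacle I anticipate lies entirely in this FBDD step: a naive graft may, along a single root-to-leaf path, re-query a variable already tested in the $f_1$-part, violating the read-once (``free'') requirement. The delicate point is therefore to append, at each grafting site, a copy of $f_2$ restricted to be consistent with the variables already fixed, and to argue that this can be organized so that the resulting diagram remains both a legal FBDD \emph{and} of polynomial size, rather than blowing up over the exponentially many root-to-leaf paths of $f_1$. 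I expect essentially all of the genuine work to concentrate on verifying these two properties simultaneously; the MLP direction, and the two reductions to Theorem~\ref{appendix:expressivnes} and Proposition~\ref{appendix:result-of-logical-containment}, are routine by comparison.
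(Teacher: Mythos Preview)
Your plan is correct and follows essentially the same route as the paper: reduce to Theorem~\ref{appendix:expressivnes}, establish closure of each class under $\wedge$ and $\rightarrow$, and then invoke Proposition~\ref{appendix:result-of-logical-containment} for the $=_P$ identities. The differences are cosmetic. For MLPs, the paper does not build $\wedge$ and $\rightarrow$ directly; instead it first constructs $f_t\lor f_s$ (by stacking the two networks in parallel, replacing their final step activations by ReLUs, and adding a step neuron with positive weights on top), then combines this with symmetric constructibility to obtain a NOR gate, from which the required operators follow. Your clamp-then-gate gadget is a perfectly valid alternative and arguably more direct; both are clearly polynomial.

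For FBDDs, you have identified exactly the construction and exactly the obstacle the paper faces. The paper also grafts a copy of $f_s$ at each appropriate leaf of $f_t$ (yielding a decision diagram of size $O(|f_t|\cdot|f_s|)$ that is not yet read-once), and then supplies the step you leave open: a recursive path-traversal that, whenever a node $v_s$ in the $f_s$-part tests a variable already fixed in the $f_t$-part of the current path, deletes $v_s$ and rewires its parent to whichever child agrees with the earlier decision. This is precisely the ``restrict $f_2$ consistently with the already-fixed variables'' move you anticipate. Your caution about the potential blow-up over exponentially many root-to-leaf paths is well placed and is where the care in the argument must go; the paper handles this by working per leaf-copy rather than per abstract path, so the pruning only touches the $O(|f_t|\cdot|f_s|)$ nodes already laid down.
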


\begin{lemma}
The class $\mathcal{C}_{\fbdd}$ is self-aligned.
\end{lemma}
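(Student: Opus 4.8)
The plan is to derive this lemma as an immediate instance of Theorem~\ref{appendix:expressivnes}: it suffices to exhibit, for any two FBDDs $f_1,f_2\in\mathcal{C}_{\fbdd}$, a polynomial-time construction of an FBDD computing $f_1\wedge f_2$ and one computing $f_1\rightarrow f_2$. Since Lemma~\ref{fbdds_symetrically_constructible} already supplies polynomial negation (symmetric constructibility), I would first collapse both targets into a single closure requirement: using $f_1\rightarrow f_2=\neg f_1\vee f_2$ together with De Morgan (so $a\wedge b=\neg(\neg a\vee\neg b)$), it is enough to close $\mathcal{C}_{\fbdd}$ under one binary connective, say $\wedge$, and then compose with the negation construction to obtain $\rightarrow$. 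Feeding the resulting combined FBDD into Theorem~\ref{appendix:expressivnes} then yields self-alignment directly.

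For the conjunction itself, I would use a synchronized product / leaf-substitution scheme. Starting from the root of $f_1$, I simulate $f_1$ node by node while carrying along a \emph{restriction} of $f_2$ obtained by fixing every variable already queried on the current path to the value taken there; upon reaching the $1$-sink of $f_1$ I splice in the current restricted copy of $f_2$, and upon reaching the $0$-sink I route to the global $0$-sink. The key structural point is that every variable queried during the simulation of $f_1$ is simultaneously eliminated from the attached copy of $f_2$, so no variable is ever read twice along any root-to-leaf path. Hence the free (read-once) property is preserved by construction, and the computed function is exactly $f_1\wedge f_2$ on all of $\{0,1\}^n$, which is what the query equivalence $\psi_{\neg g}\iff\psi_{\neg f,\pi}$ in Theorem~\ref{appendix:expressivnes} requires.

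The main obstacle, which I expect to be the crux of the argument, is controlling the size of this product. The reachable states are pairs consisting of an $f_1$-node together with a restricted copy of $f_2$, and two paths reaching the same $f_1$-node may induce different restrictions of $f_2$; without care, FBDDs with incompatible variable orderings are exactly the setting where such a product can proliferate. The proof must therefore establish a polynomial bound on the number of distinct $(f_1\text{-node},\,f_2\text{-restriction})$ states actually generated, which I would attempt by merging isomorphic restricted subdiagrams and exploiting that each restriction is determined by the read-once trace of $f_1$. This size analysis is the delicate part; the correctness of the function computed and the preservation of read-once are comparatively routine, and once the polynomial bound is in hand the lemma follows by invoking Theorem~\ref{appendix:expressivnes}.
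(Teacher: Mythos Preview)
Your plan matches the paper's proof almost exactly: both invoke Theorem~\ref{appendix:expressivnes} and both build $f_1\wedge f_2$ by attaching a copy of $f_2$ at each $1$-leaf of $f_1$ and then eliminating, within each attached copy, any variable already tested on the incoming $f_1$-path. The paper also constructs $\rightarrow$ directly (negate $f_1$, then hang copies of $f_2$ off the $0$-leaves) rather than deriving it from $\wedge$ via De~Morgan as you do, but that difference is cosmetic. The paper's answer to the size question you single out is simply that the intermediate decision diagram $f'$---before the read-once property is enforced---already has size $O(|f_1|\cdot|f_2|)$, and the subsequent pruning pass only \emph{deletes} nodes and reroutes edges, so the bound carries through; there is no attempt to count $(f_1\text{-node},\,f_2\text{-restriction})$ state pairs.

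Your concern about two $f_1$-paths reaching the same $1$-leaf with different variable histories, and hence demanding inconsistent restrictions of the shared $f_2$ copy, is a genuine subtlety for DAG-shaped FBDDs that the paper's per-path pruning description does not explicitly confront; the argument reads most cleanly when each $1$-leaf of $f_1$ is reached along a unique path (so that the restriction of the attached $f_2$ copy is unambiguous). In other words, the point you flag as ``the crux'' is precisely where the paper's write-up is least detailed---but the paper commits to the polynomial bound via the ``pruning only removes nodes'' observation rather than via the state-merging argument you were contemplating, so you should not expect to find a separate combinatorial lemma bounding the number of distinct restricted subdiagrams.
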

\medskip \noindent
\emph{Proof.}
We rely on Proposition~\ref{appendix:expressivnes} and show the corresponding 
encodings of $\wedge$ and $\rightarrow$ for $\mathcal{C}_{\fbdd}$ (a sufficient 
condition for the self-alignment of the class). More specifically, given two 
functions $f_t,f_s\in C_{FBDD}$, over $\{0,1\}^n$, we can polynomially 
construct a new function $f_k\in \mathcal{C}_{\fbdd}$, such that $f_k = f_t 
\wedge f_s$. 

As a first step, we show how, given $f_t$ and $f_s$, we can construct some 
general decision diagram $f':=f_t\rightarrow f_s$ or $f':=f_t\wedge f_s$. In 
the second step of this process, we explain how we can reduce $f'$ to some 
$f_k\in \mathcal{C}_{\fbdd}$.

\begin{lemma}
\label{first_lemma_fbdd}
Given some $f_t,f_s\in \mathcal{C}_{\fbdd}$, it is possible to polynomially 
construct the Boolean functions $f':=f_t\rightarrow f_s$ and $f':=f_t\wedge 
f_s$.
\end{lemma}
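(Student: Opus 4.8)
The plan is to prove Lemma~\ref{first_lemma_fbdd} by giving an explicit graph-theoretic construction that composes two FBDDs $f_t$ and $f_s$ into a single decision diagram computing $f_t \wedge f_s$ (and, since $f_t \rightarrow f_s = \neg f_t \vee f_s = \neg(f_t \wedge \neg f_s)$, the implication case follows by combining this with the symmetric constructibility of $\mathcal{C}_{\fbdd}$ established in Lemma~\ref{fbdds_symetrically_constructible}). The central idea is \emph{sequential composition}: I would take the diagram for $f_t$, and redirect each of its accepting (\true) leaves to the root of a fresh copy of $f_s$, while redirecting each rejecting (\false) leaf of $f_t$ to a single shared \false{} leaf. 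Intuitively, an input first traverses $f_t$; if it reaches a \false{} leaf the conjunction is already $0$, and if it reaches a \true{} leaf we must still verify $f_s$, so we continue evaluating $f_s$ on the same input. The resulting leaves are exactly the leaves of the $f_s$-copies, preserving the semantics $f' = f_t \wedge f_s$.

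First I would formalize this construction and argue its size is polynomial: the composed diagram has at most one copy of $f_t$ plus one copy of $f_s$ per accepting leaf of $f_t$, which naively is $|f_t| \cdot |f_s|$ edges in the worst case, still polynomial in the input size. (If a tighter bound is desired, a single shared copy of $f_s$ suffices, since all accepting leaves of $f_t$ can point to the same root of $f_s$ — this keeps the total size at $O(|f_t| + |f_s|)$.) I would then verify correctness by tracing an arbitrary input $\x$: the path through $f_t$ determines $f_t(\x)$, and only when $f_t(\x)=1$ does the path continue into $f_s$ to resolve on the value $f_s(\x)$, so the reached leaf equals $f_t(\x) \wedge f_s(\x)$ as required.

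The main obstacle is that the naive composition need \emph{not} itself be an FBDD: the defining ``free'' property requires that along \emph{every} root-to-leaf path each variable appears at most once, but a variable tested in $f_t$ may be tested again inside the appended copy of $f_s$. This is precisely why Lemma~\ref{first_lemma_fbdd} only claims a general decision diagram $f'$, and why the excerpt defers to a ``second step'' that reduces $f'$ to a genuine $f_k \in \mathcal{C}_{\fbdd}$. For this lemma I would therefore keep the claim modest: I produce a (possibly non-free) branching program $f'$ computing the desired Boolean function in polynomial time, postponing the read-once restoration. The cleanest way to present the $\wedge$ case is the redirection argument above; the $\rightarrow$ case I would obtain by writing $f_t \rightarrow f_s \equiv \neg f_t \vee f_s$, negating $f_t$ via leaf-flipping, and dualizing the composition (redirecting the \emph{rejecting} leaves of $\neg f_t$ onwards, or equivalently composing via \textsc{or}). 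This reduces both operators to one uniform gadget plus the already-available negation, completing the construction of $f'$.
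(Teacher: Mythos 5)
Your proposal is correct and follows essentially the same route as the paper's proof: sequential composition by redirecting the accepting leaves of $f_t$ into (a copy of) $f_s$ for the $\wedge$ case, and handling $\rightarrow$ by first negating $f_t$ via leaf-flipping (symmetric constructibility) and then composing through the rejecting leaves, while correctly deferring the read-once restoration to the subsequent reduction step. Your observation that a single shared copy of $f_s$ gives an $O(|f_t|+|f_s|)$ bound is a minor tightening of the paper's $O(|f_t|\cdot|f_s|)$ estimate, but does not change the argument.
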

\medskip\noindent
\emph{Proof.}
Notice that we do not require that $f'\in \mathcal{C}_{\fbdd}$, as we later 
show, $f'$ will be a general form of a decision diagram, and the encoding 
of $f'$ will be of size $O(f_s \cdot f_t)$. For $f':=f_s\wedge f_t$ we perform 
the following steps:

\begin{enumerate}
\item Create a single copy of $f_t$.

\item 
For each leaf node labeled ``1'' in 
$f_t$, delete the leaf and connect its predecessor to the root of a copy of 
$f_s$.

\item 
We note that all leaf nodes labeled ``0'' in $f_t$ are left unchanged. 

\end{enumerate}


It is hence straightforward to show that $\forall \ \z\in \mathbb{F}$:

\begin{align*}
f'(\z) = 1 \iff f_t(\z) = 1 \wedge  f_s(\z) = 1
\end{align*}

Next, we demonstrate how, given two functions $f_t,f_s\in \mathcal{C}_{\fbdd}$ 
over $\{0,1\}^n$, we can polynomially construct a function $f'$, satisfying 
$f_k = f_t \rightarrow f_s$, again using an encoding of size $O(f_t\cdot 
f_s)$.


Specifically, our construction includes the following steps:

\begin{enumerate}
\item Since the class $\mathcal{C}_{\fbdd}$ is symmetrically constructible 
(Proposition~\ref{fbdds_symetrically_constructible}), we can polynomially 
construct a function $f':=\neg f_t\in \mathcal{C}_{\fbdd}$. 
\item We construct a new function $f_k\in \mathcal{C}_{\fbdd}$ by deleting all 
``0'' leaf nodes of $f'$, and adding edges between the predecessors of the 
``0'' leaves, and the root of a copy of $f_s$.
\end{enumerate}

It is now straightforward to show that $\forall \z\in \mathbb{F}$:

\begin{align*}
f_k(\z)=0 
\iff \\
\neg f_t(\z)=0 \wedge f_s(\z)=0 \iff \\
f_t(\z)=1 \wedge f_s(\z)=0 
\end{align*}

Hence, $f' = f_t \rightarrow f_s$. Fig.~\ref{fig:combineFbdds} depicts this 
construction. We emphasize that the decision diagram $f'$ is \emph{not 
necessarily an FBDD}. This is because the same variables can repeat themselves 
more than once within $f'$. We later prove how any $f'$ can be reduced to 
some $f_k \in \mathcal{C}_{\fbdd}$.


\begin{figure}[ht]
\centering
\includegraphics[width=0.48\textwidth]{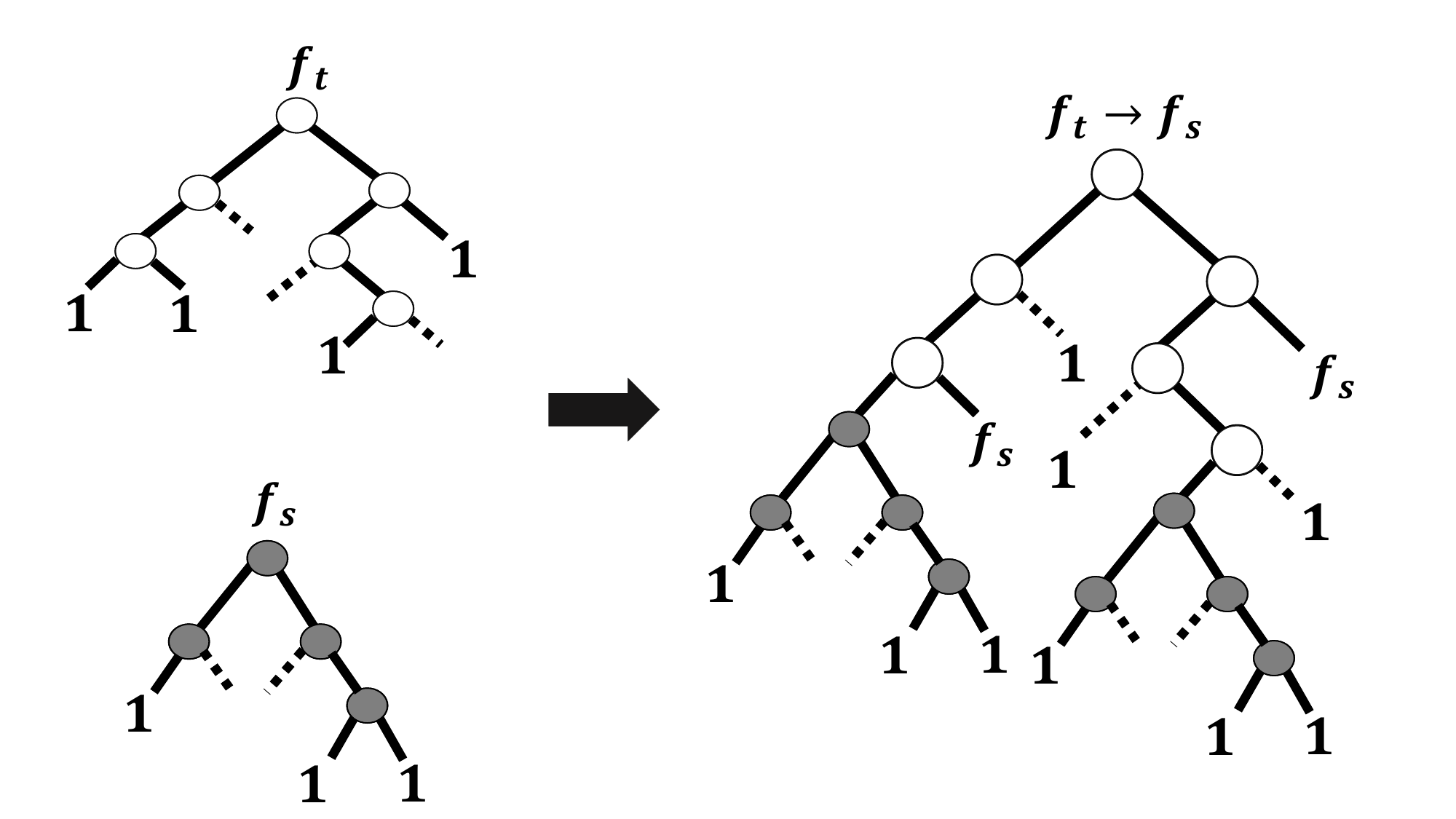}
\caption{An illustration of the polynomial construction $f':=f_t \rightarrow 
f_s$, relying on $f_t, f_s\in \mathcal{C}_{\fbdd}$. For $f_s$ and $f_t$ the 
dashed lines represent paths that end with a ``0'' leaf node, while solid lines 
represent paths that end with a ``1'' leaf node.}
\vspace{20pt}
\label{fig:combineFbdds}
\end{figure}




\begin{lemma}
Given the encoding of $f'$ from Lemma~\ref{first_lemma_fbdd}, we can 
polynomially reduce $f'$ to some $f_k\in \mathcal{C}_{\fbdd}$.
\end{lemma}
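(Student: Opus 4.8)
\medskip\noindent
\emph{Proof plan.}
The key observation is that $f'$ has a very restricted, \emph{two-phase} structure: every root-to-sink path first traverses a read-once diagram (a copy of $f_t$ for the $\wedge$ construction, or the FBDD $\neg f_t$ for the $\rightarrow$ construction, which is available by symmetric constructibility) and only afterwards enters a single grafted copy of $f_s$. Since both phases are individually read-once, the \emph{only} way $f'$ can fail to be an FBDD is that some variable $x_i$ queried during the first phase is queried \emph{again} inside the grafted copy of $f_s$; hence each variable occurs at most twice along any path, and the whole task reduces to removing the second occurrence. The plan is therefore to convert $f'$ into an FBDD $f_k$ by \emph{restricting} each grafted copy of $f_s$ so that it is consistent with the partial assignment already fixed along the incoming path.

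Concretely, I would process $f'$ top-down. For each grafted copy $C$ of $f_s$ (rooted at a former sink of the first-phase diagram), I consider the partial assignment that reaching the root of $C$ forces on the first-phase variables. Whenever a node of $C$ queries a variable $x_i$ that is already fixed to a value $b$ on the way in, I bypass that node, redirecting its incoming edge directly to the $b$-child. This eliminates the repeated test while preserving the computed function: along every completion $\z$ the bypassed edge is exactly the one $\z$ would have taken, so $f_k(\z)=f'(\z)$ for all $\z\in\mathbb{F}$, which equals $f_t(\z)\wedge f_s(\z)$ (and, in the other case, $f_t(\z)\rightarrow f_s(\z)$). A routine check then confirms that no variable is read twice along any path of $f_k$, so that $f_k\in\mathcal{C}_{\fbdd}$.

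The main obstacle is the \emph{size bound}. Because $\mathcal{C}_{\fbdd}$ allows distinct paths to use distinct variable orderings, a single grafted copy $C$ may be reached by several first-phase paths that fix \emph{different} variables, or the same variable to \emph{different} values; the restriction above is then not uniform across $C$, and the naive remedy is to split $C$ into one restricted copy per incoming partial assignment, which risks an exponential blow-up. I expect the crux of the argument to be showing that this splitting can be controlled --- for instance by memoizing restricted subdiagrams under a canonical key on the pair (current $f_s$-node, the restriction induced on the $f_s$-subdiagram below it) and arguing, using the read-once structure of \emph{both} $f_t$ and $f_s$, that the number of distinct restricted subdiagrams that actually arise stays polynomial in $|f_t|\cdot|f_s|$. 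Establishing this polynomial accounting is the delicate step on which the entire reduction rests, and is where I would concentrate the detailed work.
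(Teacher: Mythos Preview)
Your plan is essentially identical to the paper's: the paper, too, walks every root-to-sink path $\alpha=[\alpha_t;\alpha_s]$ of $f'$ top-down and, whenever a node $v_s$ on the $f_s$-side tests a variable already fixed along $\alpha_t$, deletes $v_s$ and reconnects its parent to the child matching the direction taken at the earlier occurrence in $\alpha_t$.

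The difference is that you isolate the size bound as the crux and leave it open, whereas the paper simply does \emph{not} address it. In fact the paper's procedure is only well defined when each node of a grafted $f_s$-copy lies on a single incoming $\alpha_t$; that holds for decision \emph{trees} but not for FBDDs, which are DAGs. When two root-paths through $f_t$ reach the same copy of $f_s$ after fixing different variable sets (or the same variable to different values), the instruction ``delete $v_s$ and connect the parent to the appropriate child'' is ambiguous --- exactly the splitting issue you anticipate --- and the paper's ``traverse all corresponding paths $\alpha$'' is itself not an obviously polynomial operation. No argument is given that the resulting diagram has polynomial size, nor that the per-path edits are mutually consistent.

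So the gap you flag is real, and the paper does not fill it. Your memoization idea is the natural next move, but be aware of what you are actually attempting: showing that two arbitrary FBDDs admit a polynomial-size FBDD for their conjunction. This is not a routine fact about read-once branching programs, and you should not expect the accounting to go through without substantial new ideas --- or a retreat to a subclass (decision trees, or OBDDs sharing a common variable order) where the standard product construction does give a polynomial bound.
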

\medskip\noindent
\emph{Proof.}
As mentioned earlier, $f'$ is not necessarily an FBDD since it may contain 
various repeated variables within its diagram. We now show how each $f'$ can 
be reduced to some $f_k\in \mathcal{C}_{\fbdd}$.


We define each path $\alpha$ as a concatenation of two subpaths $\alpha 
\coloneqq [\alpha_{t} ; \alpha_{s}]$, each corresponding the the relevant path 
in $f_s$ (or $f_t$) accordingly. Since each node $v$ corresponds to some input 
feature $i\in (1,\ldots,n)$, we denote $x_v(v)$ as a function that maps $v$ to 
its corresponding feature. We use the common conventions of 
$parent(v)$, $left(v)$, and $right(v)$.

We now describe the following recursive algorithm for reducing $f'$ to some 
$f_k\in \mathcal{C}_{\fbdd}$.

\begin{enumerate}

\item We traverse on all corresponding paths $\alpha$, starting from the root 
node and traversing downwards. The algorithm does not change $f'$ as long as we 
are on the $\alpha_{t}$ part of the traversion.
\item 
Reaching a node $v_s\in \alpha_{s}$, if for all $v_i\in \alpha_t$ it holds that 
$x_v(v_i) \neq x_v(v_s)$, then we recursively continue traversing both left and 
right. Intuitively, this means that the feature corresponding to $v_s$ was not 
part of the decision path of $\alpha_t$.
\item 
Otherwise, we reach some node $v_{s} \in \alpha_s$ such that there exists a 
$v_i\in \alpha_t$ in which $x_v(v_i) = x_v(v_s)$. In this case, we delete $v_s$ 
from $f'$. We now need to connect the parent of $v_s$ with either the left or 
right child of $v_s$ within $f'$. Assume that for 
$x_v(v_{i+1})=right(x_v(v_i))$, or in other words $v_i$ leads to a right turn 
in the path $\alpha$. In this case, we connect the parent of $v_s$ with the 
right child of $v_s$. If the opposite holds, i.e., $v_i$ leads to a left turn 
in the path $\alpha$, then we connect the parent of $v_s$ with the left child 
of 
$v_s$. 

\end{enumerate}

Intuitively, when traversing over the second part of the path $\alpha_s$ we can 
potentially come across two scenarios. In the first, we reach a feature 
within the path that did not participate in $\alpha_t$. In this case, we want 
to continue traversing both possible scenarios (leaving the corresponding 
feature in the tree). In the second case, we reach a feature that already 
participated in $\alpha_t$. In this scenario, we delete the corresponding node 
from $\alpha_s$ and connect it in the same direction in which the corresponding 
feature is connected in $\alpha_s$.

Hence, at the end of this recursive process, we are left with an equivalent 
diagram where for each path $\alpha$, no two nodes $v_i,v_j\in 
\alpha$ exist, such that $x_v(v_i)=x_v(v_j)$. 
Figure~\ref{fig:combineFbddsThreeSteps} 
depicts an illustration of this recursive process.

\begin{figure}[ht]
\centering
\includegraphics[width=0.48\textwidth]{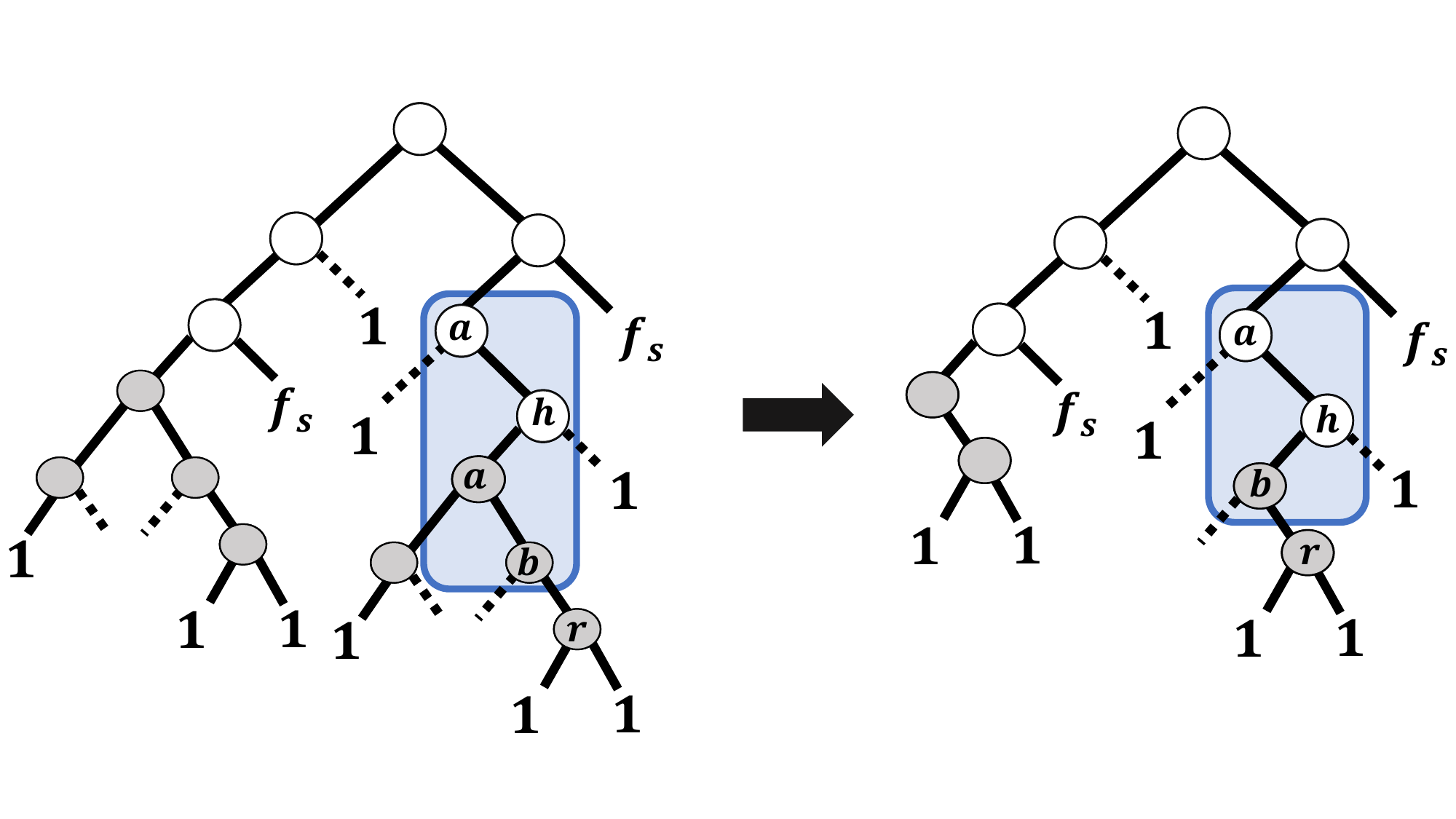}
\caption{An illustration of the polynomial construction of $f_k\in 
\mathcal{C}_{\fbdd}$, given $f'$. The blue boxes represent an area pruned 
during 
our recursive procedure, in order to construct a valid FBDD, without a 
repetition of features.}
\vspace{20pt}
\label{fig:combineFbddsThreeSteps}
\end{figure}
This concludes the proof that given two functions $f_t, f_s\in 
\mathcal{C}_{\fbdd}$ we can polynomially construct some $f_k\in 
\mathcal{C}_{\fbdd}$, which is sufficient to show that $\mathcal{C}_{\fbdd}$ is 
self-aligned.

\begin{lemma}
The class $\mathcal{C}_{\mlp}$ is self-aligned.
\end{lemma}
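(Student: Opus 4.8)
The plan is to invoke the sufficient condition already proved in Theorem~\ref{appendix:expressivnes}: to establish that $\mathcal{C}_{\mlp}$ is self-aligned, it suffices to show that for any two MLPs $f_1, f_2$ we can construct, in polynomial time, an MLP computing $g = f_1 \wedge f_2$ and one computing $g = f_1 \rightarrow f_2$. Before combining, I would first normalize each $f_i$ using the construction from Lemma~\ref{MLPs_symetrically_constructible}, so that all weights and biases are integers except for a $-0.5$ shift in the final bias. This guarantees that on every binary input the last pre-activation $y_i := h^{(t_i-1)}W^{(t_i)} + b^{(t_i)}$ is a nonzero half-integer, so that $f_i = \text{step}(y_i)$ is computed away from the decision boundary and, crucially, $\neg\,\text{step}(y_i) = \text{step}(-y_i)$ holds on the relevant domain.

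Next I would build the combined network by running $f_1$ and $f_2$ \emph{in parallel} on a shared input. Concretely, the first layer uses the column-concatenation $[\,W_1^{(1)} \mid W_2^{(1)}\,]$ so both subnetworks read the same $\x$, and every subsequent layer uses block-diagonal weights $\mathrm{diag}(W_1^{(j)}, W_2^{(j)})$ so the two streams never interact. The one subtlety is that $f_1$ and $f_2$ may have different depths, which I would resolve by carrying the shallower network's (possibly negative) final pre-activation down to the common terminal layer using the standard signed-identity gadget: represent a value $y$ as $\relu(y) - \relu(-y)$, propagate the two nonnegative channels through $\relu$ identity layers, and recombine linearly. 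This keeps every hidden activation a genuine $\relu$ and leaves both branches delivering their raw pre-activations $y_1, y_2$ at a common layer.

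The final step is to realize the Boolean gate with a \emph{single} terminal $\text{step}$, which I would do via the identities $\text{step}(y_1)\wedge\text{step}(y_2) = \text{step}(\min(y_1,y_2))$ and $\text{step}(y_1)\rightarrow\text{step}(y_2) = \text{step}(\max(-y_1,y_2))$ (the latter using the nonzero-pre-activation property above). Since $\min(a,b) = \tfrac12(a+b-\relu(a-b)-\relu(b-a))$ and $\max(a,b) = \tfrac12(a+b+\relu(a-b)+\relu(b-a))$, one extra $\relu$ layer computes the needed quantities and the affine combination folds directly into the pre-activation of the last layer, where a single $\text{step}$ is applied. I would then verify that the result is a canonical MLP (all hidden activations $\relu$, one terminal $\text{step}$), that it computes exactly $f_1\wedge f_2$ (resp.\ $f_1\rightarrow f_2$) on every binary input, and that its size is linear in $|f_1| + |f_2|$, so the construction is polynomial; Theorem~\ref{appendix:expressivnes} then yields self-alignment and hence $\mathbf{Q}(\mathcal{C}_{\mlp},\mathcal{C}_{\mlp}) =_P \mathbf{Q}(\mathcal{C}_{\mlp})$. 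I expect the main obstacle to be precisely this last bookkeeping: preserving the canonical ReLU-hidden/single-terminal-step form while equalizing depths and fusing two independently terminating step outputs, which is exactly what the signed-identity and $\min/\max$-via-$\relu$ gadgets are designed to overcome.
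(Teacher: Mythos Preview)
Your proposal is correct and follows the same high-level strategy as the paper---invoke Theorem~\ref{appendix:expressivnes} and exhibit polynomial constructions of the required Boolean combinations by running the two MLPs in parallel with block-diagonal weights---but the concrete gadgetry differs. The paper does not build $\wedge$ and $\rightarrow$ directly; instead it first constructs $f_t \lor f_s$ by replacing each network's terminal step with a $\relu$, summing the two (nonnegative) outputs with positive weights, and applying a single step (so $f_k=0$ iff both $\relu$ outputs vanish). It then appeals to symmetric constructibility (Lemma~\ref{MLPs_symetrically_constructible}) to get $\neg$, and obtains $\wedge$ and $\rightarrow$ via NOR-universality through a constant number of OR/NOT compositions. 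Because the paper's OR gate only needs to know whether each pre-activation is positive, its depth-equalization is trivial: a single width-$1$ identity $\relu$ channel suffices (idempotence of $\relu$ handles the sign). Your route instead computes $\min$/$\max$ of the raw pre-activations, which forces you to \emph{preserve their values} through the padding layers---hence the signed-identity gadget $y=\relu(y)-\relu(-y)$ and the half-integer normalization to make $\neg\,\text{step}(y)=\text{step}(-y)$ safe. Both constructions are clearly polynomial (indeed linear) in $|f_1|+|f_2|$; the paper's is a bit leaner per gate but pays for it with an extra composition step, while yours produces $\wedge$ and $\rightarrow$ in one shot at the cost of slightly heavier internal plumbing.
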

\medskip\noindent
\emph{Proof.}
We rely on Proposition~\ref{appendix:expressivnes} and show the corresponding 
encodings of $\wedge$ and $\rightarrow$ for $\mathcal{C}_{\mlp}$ (a sufficient 
condition for the self-alignment of the class). 
We start by showing how, given two functions $f_t,f_s\in \mathcal{C}_{\mlp}$ 
over $\{0,1\}^n$, we can polynomially construct a new function $f_k\in 
\mathcal{C}_{\mlp}$, such that $f_k = f_t \lor f_s$. This will later imply the 
existence of the aforementioned logic relations. More formally:

\begin{lemma}
Let $f_{t},f_{s}\in \mathcal{C}_{\mlp}$, then $f_k = f_t \lor f_s\in 
\mathcal{C}_{\mlp}$ can be constructed in polynomial time.
\end{lemma}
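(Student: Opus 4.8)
The plan is to build the OR gate directly at the arithmetic level and then note that it immediately yields the $\wedge$ and $\rightarrow$ operators required by Theorem~\ref{appendix:expressivnes}. First I would normalize both networks using the same preprocessing as in Lemma~\ref{MLPs_symetrically_constructible}: assume without loss of generality that $f_t$ and $f_s$ have integer weights and biases, with $-1/2$ added to each final bias. Writing $y_t$ and $y_s$ for the linear pre-activations feeding the final step functions of $f_t$ and $f_s$, this guarantees that on every binary input $\x$ these values are half-integers, so that $f_t(\x)=1 \iff y_t>0$ with $|y_t|\ge 1/2$ (and likewise for $y_s$). This nonzero margin is exactly what makes the remaining gadgets exact.

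Next I would assemble a single feed-forward network that exposes both $y_t$ and $y_s$ side by side. I would place $f_t$ and $f_s$ as parallel subnetworks sharing the common input layer $h^{(0)}=\x$, stopping each one layer short of its step so that its pre-activation $y_t$ (resp. $y_s$) is available. To respect the strictly layered MLP format I would equalize the two depths by padding the shallower subnetwork with identity layers, carrying a signed value $v$ through a \text{ReLU} layer via the standard gadget $v=\text{ReLU}(v)-\text{ReLU}(-v)$, which at most doubles the width and preserves possibly-negative values. All of this is linear in the sizes of $f_t$ and $f_s$ and keeps the weights integral.

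The crucial gadget is a \text{ReLU}-based clamp that converts each margin-separated linear value into an exact bit. I would append a layer computing
\begin{equation}
b_t := \text{ReLU}(2y_t)-\text{ReLU}(2y_t-1), \qquad b_s := \text{ReLU}(2y_s)-\text{ReLU}(2y_s-1).
\end{equation}
Since $|y_t|\ge 1/2$, we have $2y_t\ge 1$ when $f_t(\x)=1$ (giving $b_t=1$) and $2y_t\le -1$ when $f_t(\x)=0$ (giving $b_t=0$); hence $b_t=f_t(\x)$ and $b_s=f_s(\x)$ exactly on binary inputs. The output layer then returns $f_k:=\text{step}(b_t+b_s-1/2)$; as $b_t+b_s\in\{0,1,2\}$, this evaluates to $1$ precisely when at least one bit is set, so $f_k=f_t\vee f_s$ on all of $\{0,1\}^n$. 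Every weight and bias remains rational and the overall network has polynomial size, so $f_k\in\mathcal{C}_{\mlp}$ is polynomially constructible.

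Finally, combining this OR construction with the symmetric constructibility of $\mathcal{C}_{\mlp}$ (Lemma~\ref{MLPs_symetrically_constructible}) supplies the operators needed to invoke Theorem~\ref{appendix:expressivnes}: $f_t\rightarrow f_s=\neg f_t\vee f_s$ and, by De Morgan, $f_t\wedge f_s=\neg(\neg f_t\vee\neg f_s)$, each a polynomial composition of negations and the OR gate. The main obstacle I anticipate is the bookkeeping of the depth equalization — propagating the signed pre-activations $y_t,y_s$ through padding layers without distortion while keeping all weights integral — together with verifying exactness of the clamp, which is precisely where the $\ge 1/2$ margin obtained from the normalization step is indispensable.
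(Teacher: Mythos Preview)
Your argument is correct, but it is more elaborate than the paper's. The paper observes that $\text{step}(\text{ReLU}(y_t)+\text{ReLU}(y_s))=\text{step}(y_t)\lor\text{step}(y_s)$ holds outright: ReLU is nonnegative and vanishes exactly when step does, so the sum is $0$ iff both summands are. Concretely, it stacks $f_t$ and $f_s$ block-diagonally, replaces each final step by a ReLU, pads the shallower subnetwork with identity-plus-ReLU layers (harmless since the signal is already nonnegative), and finishes with a single step layer with strictly positive weights and zero bias. No normalization, no margin, no clamp gadget.

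By contrast, you first force a $\ge 1/2$ margin via the integerization of Lemma~\ref{MLPs_symetrically_constructible}, then carry the \emph{signed} pre-activations through padding layers using $v=\text{ReLU}(v)-\text{ReLU}(-v)$, then clamp to exact $\{0,1\}$ bits before applying step. This works, and it has the advantage that once you have exact bits you can implement any Boolean gate uniformly; but the normalization and the signed-value padding are extra machinery that the paper avoids entirely by applying ReLU \emph{before} padding, so that only nonnegative values need to be propagated. The final paragraph deriving $\wedge$ and $\rightarrow$ from $\lor$ and negation matches the paper's argument.
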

\medskip\noindent
\emph{Proof.}
We assume that $f_{t}$ consists of $t$ hidden layers, while $f_{s}$ consists of 
$s$ hidden layers.
We recall that our definition of an MLP is defined recursively, or in other 
words, $f_{t}\coloneqq h_{1}^{(t)}$ is defined as:

\begin{equation}
h_{1}^{(j)} \coloneqq \sigma_{1}^{(j)}(h_{1}^{(j-1)}W_{1}^{(j)} + b_{1}^{(j)}) 
\quad (j \in \{1,\ldots,t\})
\end{equation}

and $f_{s}\coloneqq h_{2}^{(s)}$ is defined as:

\begin{equation}
h_{2}^{(j)} \coloneqq \sigma_{2}^{(j)}(h_{2}^{(j-1)}W_{2}^{(j)} + b_{2}^{(j)}) 
\quad (j \in \{1,\ldots,s\})
\end{equation}

We now construct $f_{k}\coloneqq h_{3}^{(k)}$ where $k\coloneqq max(s,t)+1$. 
For fully 
formulating $h_3$ we need to formulate the associated weights $W_3^{(i)}$ and 
bias terms $b_3^{(i)}$ for each layer $i$. Notice that $i=0$ is the input layer 
so no corresponding bias or weight is defined for them, and the dimensions 
of the input layers of both $h_1$ and $h_2$ are equal (for MLPs that receive 
inputs from the same domain).

Assuming that $s=t$, we construct $b_{3}^{(i)}\coloneqq b_{1}^{(i)}\cdot 
b_{2}^{(i)}$ for all $1\leq i\leq s$. In other words, $b_3$ is a concatenation 
of $b_1$ and $b_2$. Notice that given that the dimensions of the \emph{hidden} 
layers corresponding to $h_1$ are $d_1^1,\ldots, d_1^s$ and the corresponding 
dimensions of the hidden layers corresponding to $h_{2}$ are $d_2^1,\ldots, 
d_2^s$, then the corresponding dimensions of the hidden layers of $h_3$ in this 
case are: $(d^1_1+d^1_2),\ldots, (d_1^s+d_2^s)$.

Now, assuming that $s\neq t$, without loss of generality, we can assume that 
$t>s$. For any layer 
$1\leq i \leq s$ we construct $b_3$ in the same way, i.e., 
$b_{3}^{(i)}\coloneqq b_{1}^{(i)}\cdot b_{2}^{(i)}$. For any $s < i \leq t$ we 
construct: $b_{3}^{(i)}\coloneqq b_{1}^{(i)}\cdot (0)$. In other words, we 
concatenate the vector $b_1$ with a single bias term $0$. In this particular 
case, the corresponding dimensions of $h_3$ are: 
$(d^1_1+d^1_2),\ldots, (d_1^s+d_2^s), (d_1^{s+1}+1), \ldots, (d_1^{t}+1)$.

Finally, we construct the weight vector $W_3^{(i)}\in 
\mathbb{Q}^{d^{i-1}_{3}\times d^{i}_{3}}$. Again, for the case where $s=t$, we 
construct it such that for any $1\leq i \leq s$:

\begin{equation}
\label{matrix_formulation}
W_3^{(i)} = 
\begin{pmatrix} W_1^{(i)} & 0\\ 0 & W_2^{(i)} \end{pmatrix}
\end{equation}

Assuming, without loss of generality, that $s<t$, we construct it such that for 
any $1\leq i \leq 
s$ then $W_3^{(i)}$ is formalized as in Equation~\ref{matrix_formulation} and 
for any $s < i \leq t$ then: 

\begin{equation}
W_3^{(i)} = 
\begin{pmatrix} W_1^{(i)} & 0\\ 0 & 1 \end{pmatrix}
\end{equation}

Intuitively, the construction of the bias terms and the weight matrix $W_3^{i}$ 
layers captures a situation where we ``stack'' the hidden layers of $h_1$ and 
$h_2$. This is a result of the concatenated bias vectors at each step, as well 
the fact that we zero out the effect of the weights corresponding to $h_1$ with 
those of $h_2$, and vice versa. 

Now, we describe the construction for the last layer $k$ of $h_3$ (recall 
that $k\coloneqq max(s,t)+1$). Since we focus on binary classification, the 
single 
activation function $\sigma$ of the last layer can be considered, without loss 
of generality, as 
a step function. We also define the bias to be zero, i.e., $b_3^{k+1}=0$. Now, 
we are left with defining the weight matrix of the last layer. Formally, we 
define $W_3^{(k)}\in \mathbb{Q}^{d^{k-1}_{3}\times d^{k-1}_{3}}$, where each 
weight in $W_3^{(k)}$ is some \emph{strictly} positive weight $w^+_3$.

We now prove that the above encoding of $f_k$ satisfies that $f_k = f_t \lor 
f_s$ for any value $\z\in \mathbb{F}$:
\begin{align*}
f_k(\z) = h_{3}^{(k)}(\z) = 0 
\underset{(*)}{\iff}  \\ 
step_{3}(w_3^{+}\cdot\relu(h_{1}^{(t)}) + w_3^{+}\cdot\relu(h_{2}^{(s)})) (\z) 
= 0 
\underset{(**)}{\iff}  \\ 
(w_3^{+}\cdot\relu(h_{1}^{(t)}) + w_3^{+}\cdot\relu(h_{2}^{(s)})) (\z) \leq 0 
\iff  \\ 
w_3^{+}\cdot(\relu(h_{1}^{(t)}) + \relu(h_{2}^{(s)})) (\z) \leq 0 
\underset{(***)}{\iff}  \\ 
(\relu(h_{1}^{(t)}) + \relu(h_{2}^{(s)})) (\z) \leq 0 
\iff  \\ 
(\relu(h_{1}^{(t)}) + \relu(h_{2}^{(s)})) (\z) = 0 
\underset{(****)}{\iff}  \\ 
\relu(h_{1}^{(t)}(\z))=0 \wedge \relu(h_{2}^{(s)}(\z))=0
\iff \\
h_{1}^{(t)}(\z) \leq 0 \wedge h_{2}^{(s)}(\z) \leq 0
\iff \\
step_{1}(h_{1}^{t-1}(\z)) = 0 \wedge step_{2}(h_{2}^{s-1}(\z)) = 0 
\iff \\
f_s(\z)=0 \wedge f_k(\z)=0
\end{align*}

Where (*) holds, since $f_k$ is directly connected to the outputs of both $f_s$ 
and $f_t$ (in which their step function was replaced by \relu{} activations). 
This is the case both for when $s=t$ or, without loss of generality, when 
$s<t$, replacing each 
corresponding weight in $W_1^{i}$ with a single neuron.
Equivalence (**) holds directly from the definition of the step function (see 
Equation~\ref{eq:step-definition}), while 
equivalence (***) holds from the fact that $w^+_3 > 0$. Equivalence (****) 
follows from the fact that ReLU is a non-negative function.

We provide a visual illustration of this construction in 
Fig.~\ref{fig:combineMlps}. Note that the figure does not explicitly state the 
inner 
connections between $f_t$ and $f_s$ that are with zero weights.
\begin{figure}[h]
\centering
\includegraphics[width=0.41\textwidth]{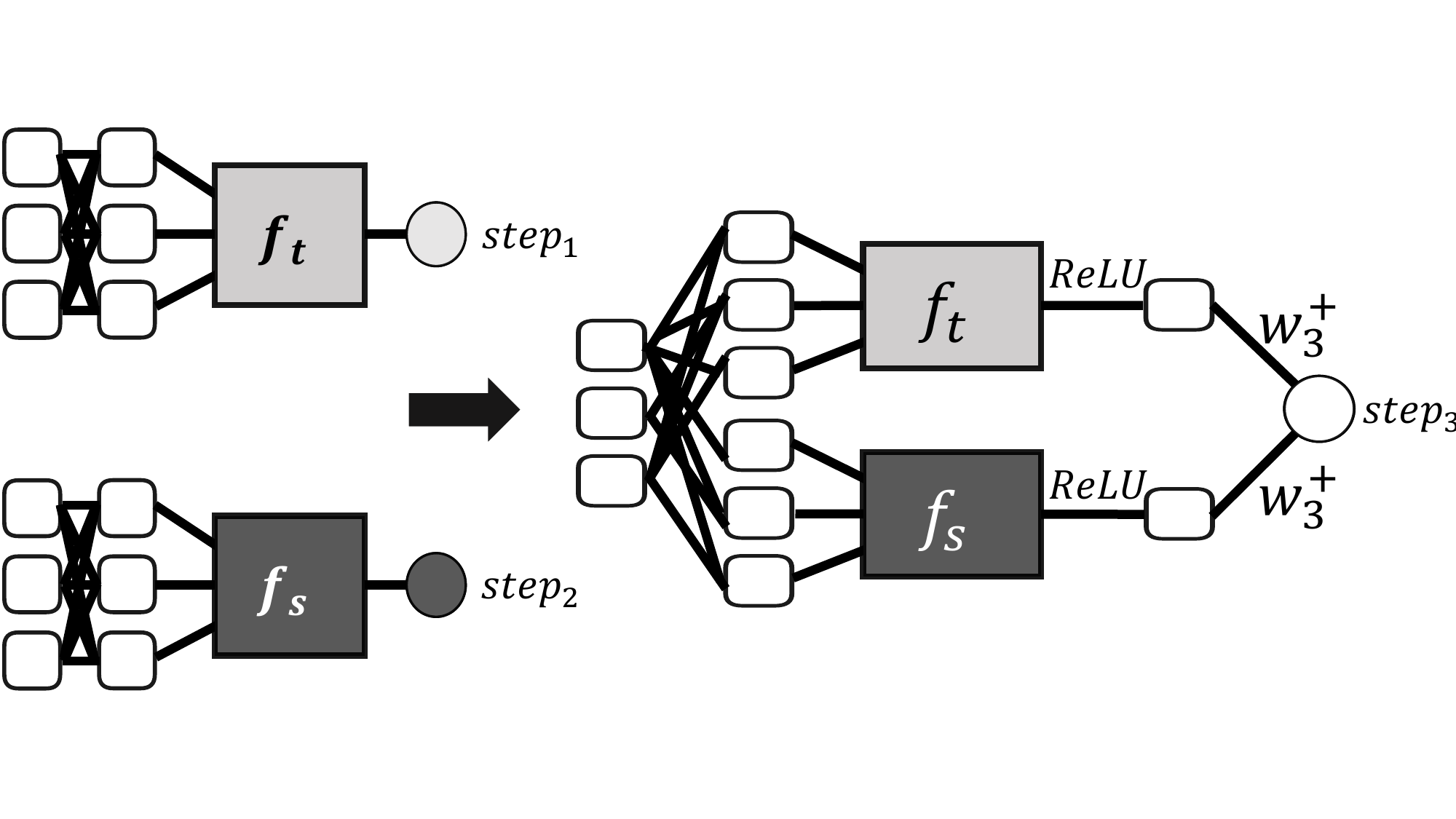}
\caption{An illustration of the effective construction $f_t \lor f_s\in 
\mathcal{C}_{\mlp}$, relying on $f_t, f_s\in \mathcal{C}_{\mlp}$. 
}
\label{fig:combineMlps}
\end{figure}
\bigskip
Now, based on Lemma~\ref{MLPs_symetrically_constructible}, we know that MLPs 
are symmetrically constructible. In other words, given $f_t\in 
\mathcal{C}_{\mlp}$ we can polynomially construct $f_{k}:= \neg f_{t}\in 
\mathcal{C}_{\mlp}$.

Since any logic gate can be encoded using the universal NOR gate, we can now 
polynomially construct both $f_k:= f_s\rightarrow f_t\in \mathcal{C}_{\mlp}$ 
and $f_k:= 
f_s\wedge f_t\in \mathcal{C}_{\mlp}$. This can be done by recursively building 
the 
corresponding MLPs representing either the $f_s\lor f_t$ or $\neg f_s$ 
constructions in a polynomial number of steps. Each one of these steps runs 
in 
polynomial time and outputs a new MLP of size $O(f_s+f_t)$ at each step. Hence, 
we conclude that MLPs are self-aligned.



\medskip
\noindent
Next, we present the formalization of the Subset Sum problem, used for 
proving Proposition~\ref{proposition:MCR_Perceptron_NPcomplete}.
\vspace{0.5em} 

\noindent\fbox{%
	\parbox{\columnwidth}{%
		\mysubsection{SSP (Subset Sum Problem)}:
		
		\textbf{Input}: $(z_1,z_2,\ldots,z_n)$ set of positive integers, 
		integer 
		$k$ (such that $k\leq n$) and a (target) integer $T$.
		
		\textbf{Output}: \yes{}, if there exists a subset $S\subseteq 
		(1,2,\ldots,n)$ of size $k$ such that $\sum_{i\in S}z_i=T$, and \no{} 
		otherwise.
	}%
}

\vspace{0.5em} 

\begin{proposition}
\label{proposition:MCR_Perceptron_NPcomplete}
While the query MCR$(\mathcal{C}_{\perceptron})$ can be solved in polynomial 
time, the query MCR$(\mathcal{C}_{\perceptron},\mathcal{C}_{\perceptron})$ is 
NP-complete.
\end{proposition}

\medskip\noindent
\emph{Proof.}
First, we briefly explain how it is possible to check whether a subset of 
features is contrastive for a Perceptron model, within the misaligned 
configuration, in polynomial time~\cite{BaMoPeSu20}.

A Perceptron is defined by $f=\langle \mathbf{w},b\rangle$, where 
$\mathbf{w}$ is the weight vector corresponding to the input $\x$, and $b$ is 
the bias term. We can obtain the exact value of $\sum_{i\in 
	\Bar{S}}\x_i\cdot w_i$. Then, for the features in $S$, it is possible to 
linearly find the $y$ assignments corresponding to the maximal and minimal 
values of $\sum_{i\in S}y_i\cdot w_i$. The maximal value is obtained by setting 
$y_i:=1$ when $\mathbf{w}_i\geq 0$ and $y_i:=0$ when $\mathbf{w}_i=0$. 
The minimal value is obtained respectively (setting $y_i:=1$ when 
$\mathbf{w}_i< 0$ and $y_i:=0$ when $\mathbf{w}_i\geq0$). Now, we can calculate 
the entire range of possible values that can be obtained 
by setting the values of $\Bar{S}$ to $\x$. If the minimal possible value is 
negative and the maximal possible value is positive then it means that $S$ is 
indeed contrastive, as there exists a subset of features that can alter the 
classification. If not, i.e., the entire range is either strictly positive or 
negative, this means that $S$ is not contrastive. More formally, $S$ is 
contrastive if and only if:

\begin{equation}
	\begin{aligned}
		\label{perceptrons_csr_local}
		\sum_{i\in S}\x_i\cdot w_i+max\{\sum_{i\in \Bar{S}}y_i\cdot w_i+b\ | \ 
		y\in 
		\mathbb{F}\}>0 \ \ \wedge \\
		\sum_{i\in S}\x_i\cdot w_i+min\{\sum_{i\in \Bar{S}}y_i\cdot w_i+b\ | \ 
		y\in 
		\mathbb{F}\}\leq 0 
	\end{aligned}
\end{equation}

In other words, it holds that:
\begin{equation}
	\begin{aligned}
		-max\{\sum_{i\in \Bar{S}}y_i\cdot w_i+b\ | \ y\in \mathbb{F}\}<
		\sum_{i\in S}\x_i\cdot w_i\\ \leq -min\{\sum_{i\in \Bar{S}}y_i\cdot 
		w_i+b\ 
		| \ y\in \mathbb{F}\}
	\end{aligned}
\end{equation}

\mysubsection{Membership.} Given the aforementioned description regarding the 
polynomial validation of contrastive reasons for Perceptrons, membership in NP 
is straightforward, since one can guess a subset $S$ and validate whether it 
holds that $|S|<k$, $S$ is contrastive with respect to $f(\x)$, as well as the 
fact 
that $\pi(\x_S;\z_{\overline{S}})=1$. If these hold then we know that $\langle 
f,\pi,\x,k\rangle 
\in$\emph{MCR}$(\mathcal{C}_{\perceptron},\mathcal{C}_{\perceptron})$.

\mysubsection{Hardness}. We reduce 
\emph{MCR}$(\mathcal{C}_{\perceptron},\mathcal{C}_{\perceptron})$ from SSP 
(the Subset Sum problem), a classic 
NP-complete problem, previously formalized.
Given some $\langle (z_1,z_2,\ldots,z_n),k,T\rangle$ the 
reduction first checks the specific case where $k=n$. In this scenario, 
we want to construct a ``dummy'' result. In other words, if 
$\sum_{i=1}^{n}z_i=T$, we can construct a ``dummy'' instance of $\langle 
f_1,\pi:=f_2,\x:=(1,1),k:=2\rangle$. We define $f_1:=\langle w^1,b_1\rangle$ 
where $w^1:=(1,-1)$ and $b_1:=0$. We define $f_2:=\langle w^2,b_2\rangle$ such 
that $w^1:=(1,1)$ and $b_1:=1$. In case that $\sum_{i=1}^{n}z_i\neq T$, we can 
simply construct a false encoding.

If this is not the case (meaning that $k\neq n$), the reduction constructs the 
two following Perceptrons $f_1:=\langle \mathbf{w^1},b_1\rangle$ and 
$f_2:=\langle \mathbf{w^2},b_2\rangle$, where 
$\mathbf{w^1}:=(-z_1,-z_2,\ldots,-z_n)$, $b_1:=T+\frac{1}{4}$, 
$\mathbf{w^2}:=(z_1,z_2,\ldots,z_n)$, and $b_2:=-T$. The reduction constructs 
$\langle f:=f_1,\pi:=f_2,\x:=\mathbf{1}_n,k:=k\rangle$, and $\mathbf{1}_n$ 
denotes a unit vector of size $n$.

First, notice that there exists a contrastive reason of size $k$ for $f_1$ 
aligned by $f_2$ if and only if:

\begin{equation}
	\begin{aligned}
		\exists S\in (1,\ldots n), \z\in \mathbb{F}. \quad  |S|\leq k \ \wedge 
		\ \\ 
		[f_2(\x_{\Bar{S}};\z_{S})=1\wedge f_1(\x_{\Bar{S}};\z_{S})\neq f_1(\x)]
	\end{aligned}
\end{equation}

This means that there exists a subset $S$ of size $k$ such that:

\begin{equation}
	\begin{aligned}
		[-max\{\sum_{i\in S}y_i\cdot \mathbf{w}^1_i+b_1\ | \ y\in \mathbb{F}\}<
		\sum_{i\in \Bar{S}}\x_i\cdot \mathbf{w}^1_i\\ \leq -\min\{\sum_{i\in 
			S}y_i\cdot \mathbf{w}^1_i+b_1\ | \ y\in \mathbb{F}\}] \wedge \\
		[\sum_{i\in \Bar{S}}\x_i\cdot \mathbf{w}^2_i \leq -\min\{\sum_{i\in 
			S}y_i\cdot \mathbf{w}^2_i+b_2\ | \ y\in \mathbb{F}\}] 
	\end{aligned}
\end{equation}

We assume a valid encoding (since this can trivially be validated in polynomial 
time). The first ``dummy'' validation checks whether $k=n$ and 
$\sum_{i=1}^{n}z_i=T$. In such a case, it holds that $\langle 
(z_1,z_2),k,T\rangle\in SSP$. We also note that $S$, which is of size $k=2$ 
(the entire 
input domain) is an aligned contrastive reason. This is due to the fact that 
$f_1((1,1))=1$ and $f_1((0,1))=0$. Additionally, it holds that for any $\z$ it 
satisfies that $f_2(\z)=1$. In other words:

\begin{equation}
	\begin{aligned}
		\exists S\in (1,\ldots n), \z\in \mathbb{F}. \quad  |S|\leq k \ \wedge 
		\ \\ 
		[f_2(\x_{\Bar{S}};\z_{S})=1\wedge f_1(\x_{\Bar{S}};\z_{S})\neq f_1(\x)]
	\end{aligned}
\end{equation}

Hence $\langle f,\pi,\x,k\rangle\in 
\mcr(\mathcal{C}_{\perceptron},\mathcal{C}_{\perceptron})$. 

Thus, we can now assume that $k<n$. Since all values in $\mathbf{w}^1$ are 
negative, then it holds that for any subset of features $S$:
\begin{equation}
	max\{\sum_{i\in S}y_i\cdot \mathbf{w}^1_i+b_1\ | \ y\in \mathbb{F}\} = b_1
\end{equation} 

It also clearly holds that for any $S$:

\begin{equation}
	b_1\geq min\{\sum_{i\in S}y_i\cdot \mathbf{w}^1_i+b_1\ | \ y\in \mathbb{F}\}
\end{equation} 

Since we know that $k<n$, and since these are negative \emph{integers}, then 
there exists at least one integer in the complementary set $\overline{S}$, and 
hence it also holds that:
\begin{equation}
	\begin{aligned}
		max\{\sum_{i\in S}y_i\cdot \mathbf{w}^1_i+b_1\ | \ y\in \mathbb{F}\} = 
		b_1>\\ min\{\sum_{i\in S}y_i\cdot \mathbf{w}^1_i+b_1\ | \ y\in 
		\mathbb{F}\}+\frac{1}{2}
	\end{aligned}
\end{equation}

Regarding $\mathbf{w}^2$, since all of its values are positive, then it holds 
that:

\begin{equation}
	\begin{aligned}
		min \{\sum_{i\in S}y_i\cdot \mathbf{w_i^2}+b_2\ | \ y\in \mathbb{F}\} = 
		b_2
	\end{aligned}
\end{equation}

Now, if $\langle (z_1,z_2,\ldots,z_n), T\rangle\in SSP$, then there exists a 
subset 
of features $S\subseteq (1,2,\ldots,n)$ of size $k$ such that $\sum_{i\in 
	S}z_i=T$. This indicates that:

\begin{equation}
	\begin{aligned}
		\sum_{i\in \Bar{S}}\x_i\cdot \mathbf{w^1_i}=-T=-b_1+\frac{1}{4}>-b_1 = 
		\\ 
		-max\{\sum_{i\in S}y_i\cdot \mathbf{w^1_i}+b_1\ | \ y\in \mathbb{F}\}
	\end{aligned}
\end{equation}

as well as: 

\begin{equation}
	\begin{aligned}
		\sum_{i\in \Bar{S}}\x_i\cdot \mathbf{w^1_i}=-T=-b_1+\frac{1}{4}<-b_1 
		+\frac{1}{2} < \\ -min\{\sum_{i\in S}y_i\cdot \mathbf{w^1_i}+b_1\ | \ 
		y\in 
		\mathbb{F}\}
	\end{aligned}
\end{equation}

Regarding $\mathbf{w}^2$, it holds that:

\begin{equation}
	\begin{aligned}
		\sum_{i\in \Bar{S}}\x_i\cdot \mathbf{w^2_i}=T=-b_2\\= -min\{\sum_{i\in 
			S}y_i\cdot \mathbf{w^2_i}+b_2\ | \ y\in \mathbb{F}\}
	\end{aligned}
\end{equation}

and hence: 
\begin{equation}
	\langle f,\pi,\x,k\rangle \in 
	\mcr(\mathcal{C}_{\perceptron},\mathcal{C}_{\perceptron})
\end{equation}

Given that $\langle (z_1,z_2,\ldots,z_n), T\rangle\not\in SSP$, then there 
does not exist a subset of features $S\subseteq (1,2,\ldots,n)$ of size $k$ 
such that $\sum_{i\in S}z_i=T$. This is equivalent to saying that there does 
not exist a subset of features $S\subseteq (1,2,\ldots,n)$ of size $k$ such 
that:

\begin{equation}
	\begin{aligned}
		(\sum_{i\in \Bar{S}}\x_i\cdot \mathbf{w^1_i}\leq -T)\wedge 
		(\sum_{i\in\Bar{S}}\x_i\cdot \mathbf{w^2_i}\geq T)
	\end{aligned}
\end{equation}

meaning that there does not exist a subset $S$ such that:

\begin{equation}
	\begin{aligned}
		[-max\{\sum_{i\in S}y_i\cdot \mathbf{w}^1_i+b\ | \ y\in \mathbb{F}\}<
		\sum_{i\in \Bar{S}}\x_i\cdot \mathbf{w}^1_i] \wedge \\
		[\sum_{i\in \Bar{S}}\x_i\cdot \mathbf{w}^2_i \leq -\min\{\sum_{i\in 
			S}y_i\cdot \mathbf{w}^2_i+b\ | \ y\in \mathbb{F}\}] 
	\end{aligned}
\end{equation}

and hence: $\langle f,\pi,\x,k\rangle \not\in 
\mcr(\mathcal{C}_{\perceptron},\mathcal{C}_{\perceptron})$, concluding the 
reduction.

\begin{theorem}
\label{appendix:perceptons_not_aligned}
Assuming that $P\neq NP$, the class $\mathcal{C}_{\perceptron}$ is not 
self-aligned.
\end{theorem}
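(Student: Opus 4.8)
The plan is to argue by contradiction, reusing the complexity separation already established in Proposition~\ref{proposition:MCR_Perceptron_NPcomplete} rather than constructing anything new. Suppose, towards a contradiction, that $\mathcal{C}_{\perceptron}$ \emph{were} self-aligned. Instantiating the abstract query $\mathbf{Q}$ with the concrete query $\mcr$ (which, as shown in Appendix~\ref{appendix:general_query_form}, is a genuine special case of $\mathbf{Q}$), the definition of self-alignment guarantees that for every instance $\langle f,\pi,\x,I\rangle$ of $\mcr(\mathcal{C}_{\perceptron},\mathcal{C}_{\perceptron})$ we can construct, in polynomial time, a single Perceptron $g\in\mathcal{C}_{\perceptron}$ with
\[
\langle f,\pi,\x,I\rangle\in\mcr(\mathcal{C}_{\perceptron},\mathcal{C}_{\perceptron})\iff\langle g,\x,I\rangle\in\mcr(\mathcal{C}_{\perceptron}).
\]
By definition this equivalence \emph{is} a polynomial-time many-one reduction $\mcr(\mathcal{C}_{\perceptron},\mathcal{C}_{\perceptron})\leq_{p}\mcr(\mathcal{C}_{\perceptron})$, i.e.\ exactly the easy direction supplied by self-alignment in Proposition~\ref{appendix:result-of-logical-containment}.

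Next I would feed this reduction into the two halves of Proposition~\ref{proposition:MCR_Perceptron_NPcomplete}. That proposition states that $\mcr(\mathcal{C}_{\perceptron})$ is solvable in polynomial time, whereas $\mcr(\mathcal{C}_{\perceptron},\mathcal{C}_{\perceptron})$ is NP-complete. Since PTIME is closed under polynomial-time many-one reductions, the reduction obtained from the (hypothetical) self-alignment would place the NP-complete problem $\mcr(\mathcal{C}_{\perceptron},\mathcal{C}_{\perceptron})$ inside PTIME, forcing $P=NP$ and contradicting the hypothesis $P\neq NP$. Hence $\mathcal{C}_{\perceptron}$ cannot be self-aligned.

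The step I expect to require the most care is the very first one: cleanly justifying that self-alignment, which is phrased abstractly over $\mathbf{Q}$, genuinely delivers a reduction for the \emph{particular} query $\mcr$. Here I would emphasize that the function $g$ promised by self-alignment is constructed from $\langle f,\pi,\x\rangle$ and is query-agnostic, so its guarantee holds verbatim once $\mathbf{Q}$ is fixed to be $\mcr$; no per-query adaptation of $g$ is needed. It is worth contrasting this with the FBDD and MLP cases: there, the sufficient condition of Theorem~\ref{appendix:expressivnes} (expressibility of $\wedge$ and $\rightarrow$ within the class) supplies exactly such a $g$, yielding $\mathbf{Q}(\mathcal{C},\mathcal{C})=_{p}\mathbf{Q}(\mathcal{C})$. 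The present theorem is essentially the contrapositive specialized to Perceptrons: because a single Perceptron is too weak to realize these logical combinations, the reduction collapsing the aligned query to the misaligned one is provably unavailable (under $P\neq NP$), and everything beyond this observation is routine.
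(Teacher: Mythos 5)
Your proof is correct and follows essentially the same route as the paper's: assume self-alignment by way of contradiction, instantiate the abstract query $\mathbf{Q}$ with \emph{MCR}, observe that the promised polynomially constructible $g$ yields a polynomial-time reduction from the NP-complete problem MCR$(\mathcal{C}_{\text{Perceptron}},\mathcal{C}_{\text{Perceptron}})$ to the polynomial-time problem MCR$(\mathcal{C}_{\text{Perceptron}})$, and conclude $P=NP$. Your version is, if anything, slightly more careful than the paper's (explicitly justifying that the self-alignment guarantee specializes to the concrete query MCR and invoking closure of PTIME under polynomial-time reductions), but the underlying argument is identical.
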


\medskip\noindent
\emph{Proof.}
Assume, by negation, that the model class $\mathcal{C}_{\perceptron}$ is 
self-aligned. 
Hence, we deduce that given some $f,\pi\in \mathcal{C}_{\perceptron}$ we can 
polynomially construct a function $g$ such that $\langle 
f,\pi,\x,I\rangle\in\mathbf{Q}(\mathcal{C}_{\perceptron},\mathcal{C}_{\perceptron})$
 if and only if $\langle g,\x,I\rangle\in 
\mathbf{Q}(\mathcal{C}_{\perceptron})$. As we 
know from Proposition~\ref{proposition:MCR_Perceptron_NPcomplete} that deciding 
$\mathbf{Q}(\mathcal{C}_{\perceptron},\mathcal{C}_{\perceptron})$ is 
NP-hard and that  $\mathbf{Q}(\mathcal{C}_{\perceptron})$ is in PTIME, the 
following claim holds only if PTIME=NP. Hence, $\mathcal{C}_{\perceptron}$ is 
not self-aligned.


\section{Framework Extenstions}
\label{appendix:extensions}

Our framework can be extended in multiple axes. First, although we follow 
common conventions 
(e.g.,~\cite{arenas2022computing,BaMoPeSu20,waldchen2021computational, BaAmKa24}) and 
focus on \emph{binary} input and output domains to simplify our presentation, 
some of our findings 
are applicable to \emph{any} discrete input or output domains. Additionally, 
rather 
than 
considering 
individual features, we can consider ``high-level" features, e.g., by grouping 
multiple features together. This allows defining explanations, for example, in 
terms of 
super-pixels and also RGB images in various practical settings.

More broadly, the types of explanations analyzed in this work are explanations 
with formal and mathematical guarantees, commonly discussed within a sub-field 
of interest known as \emph{Formal XAI}~\cite{marques2022delivering}. One 
benefit of explanations with formal guarantees is that, unlike heuristic-based 
explanations, they enable a more rigorous and mathematical analysis, allowing 
the study of computational complexity aspects of obtaining 
explanations~\cite{BaMoPeSu20, BaKa23}.

However, there exists a body of work in Formal XAI that focuses on the 
practical aspect of computing explanations with formal 
guarantees~\cite{darwiche2020reasons, marques2020explaining, BaKa23, 
BaAmCoReKa23, ignatiev2019abduction, darwiche2023complete, 
darwiche2022computation, huang2021efficiently, izza2021explaining}. Initial 
efforts to compute such explanations were demonstrated on simple ML models, 
which allow tractable computations of explanations. These models include 
decision trees~\cite{IzIgMa22, huang2021efficiently}, linear 
models~\cite{marques2020explaining}, monotonic 
classifiers~\cite{marques2021explanations}, and tree 
ensembles~\cite{IgIzStMa22, izza2021explaining, audemard2023computing, 
boumazouza2021asteryx}. More recently, various methods have been proposed to 
obtain explanations with formal guarantees for neural 
networks~\cite{LaZbMiPaKw21, WuWuBa24, BaAmCoReKa23, BaKa23, HuMa23, 
izza2024distance}. This task is considered a computationally challenging 
one~\cite{BaMoPeSu20, BaAmKa24}. However, the development of such methods is 
facilitated by the rapid advancement of neural network verification 
tools~\cite{KaBaDiJuKo17, KaHuIbJuLaLiShThWuZeDiKoBa19, 
WuIsZeTaDaKoReAmJuBaHuLaWuZhKoKaBa24, SuKhSh19, LyKoKoWoLiDa20, GeMiDrTsChVe18, 
ZhShGuGuLeNa20, JaBaKa20, KoLoJaBl20, AmScKa21, CoYeAmFaHaKa22, YeAmElHaKaMa22, 
YeAmElHaKaMa23, AmCoYeMaHaFaKa23}, which are being developed more broadly to 
formally certify a diverse set of provable properties~\cite{CoAmKaFa24, 
CoAmRoSaKaFo24, MaAmWuDaNeRaMeDuGaShKaBa24, MaAmWuDaNeRaMeDuHoGaShKaBa24, 
RoAmCoSaKa24, Eh17, BuTuToKoMu18, TrBkJo20, ElGoKa20, CaKoDaKoKaAmRe22, 
AmWuBaKa21, AmZeKaSc22, AmFrKaMaRe23, AmMaZeKaSc23, MaCoCiFa23}. We believe 
that our work can serve as a step towards developing a more 
rigorous understanding of the potential capabilities and limitations of 
computing explanations with formal guarantees concerning a given distribution 
of interest.

\end{document}